\documentclass{article} 
\usepackage{iclr2027_conference,times}

\usepackage{amsmath,amsfonts,bm}

\def\eqref#1{equation~\ref{#1}}

\def\1{\bm{1}}

\DeclareMathAlphabet{\mathsfit}{\encodingdefault}{\sfdefault}{m}{sl}
\SetMathAlphabet{\mathsfit}{bold}{\encodingdefault}{\sfdefault}{bx}{n}

\usepackage[utf8]{inputenc} 
\usepackage[T1]{fontenc}    
\usepackage{hyperref}       
\usepackage{url}            
\usepackage{booktabs}       
\usepackage{amsfonts}       
\usepackage{nicefrac}       
\usepackage{microtype}      
\usepackage{xcolor}         
\usepackage{etoc}
\usepackage{subcaption}
\usepackage{wrapfig}
\usepackage{enumitem}
\usepackage{tikz}
\usepackage{graphicx}
\usepackage{subcaption}
\PassOptionsToPackage{hyphens}{url}\usepackage{hyperref}

\usepackage{etoolbox}
\newbool{iclrtemp}
\setbool{iclrtemp}{true}

\usepackage[most]{tcolorbox}
\usepackage{hyperref}

\definecolor{linkblue}{HTML}{CD5700}

\usepackage[most]{tcolorbox}
\usepackage{hyperref}
\hypersetup{
  colorlinks=true,
  linkcolor=linkblue,  
  citecolor=linkblue,  
  urlcolor=linkblue    
}

\definecolor{coralaccent}{HTML}{E85D4A}
\definecolor{stegobox}{HTML}{1a1a2e}
\definecolor{darktext}{HTML}{263238}

\makeatletter
\let\origaddcontentsline\addcontentsline
\makeatother

\makeatletter
\let\addcontentsline\origaddcontentsline
\makeatother

\usepackage{amsmath}
\usepackage{amssymb}
\usepackage{mathtools}
\usepackage{amsthm}
\usepackage{enumitem}
\usepackage{comment}
\usepackage{pifont}

\usepackage[capitalize,noabbrev]{cleveref}

\usepackage{booktabs}
\usepackage{array}
\usepackage{enumitem}
\usepackage{longtable}
\usepackage{tabularx}

\usepackage{pifont}
\usepackage{xcolor}
\usepackage{makecell}

\newlist{cellitemize}{itemize}{1}
\setlist[cellitemize]{nosep,leftmargin=*,label=--}

\newcolumntype{L}[1]{>{\raggedright\arraybackslash}p{#1}}

\theoremstyle{plain}
\newtheorem{theorem}{Theorem}[section]
\newtheorem{proposition}[theorem]{Proposition}

\newtheorem{corollary}[theorem]{Corollary}
\theoremstyle{definition}

\theoremstyle{remark}

\usepackage[textsize=tiny]{todonotes}

\tcbuselibrary{listings,skins,breakable}
\usepackage{fontawesome5}
\definecolor{custom_green}{HTML}{009E73}
\definecolor{custom_blue}{HTML}{0072B2}

\usepackage{tcolorbox}
\tcbuselibrary{breakable,skins}

\definecolor{takeawayblue}{HTML}{315F7D}

\newenvironment{highlight}[1][Takeaways]
{%
\begin{tcolorbox}[
    enhanced,
    breakable,
    frame hidden,
    colback=takeawayblue!5,
    borderline west={2pt}{0pt}{takeawayblue},
    boxsep=0pt,
    left=2.5mm,
    right=2.5mm,
    top=2mm,
    bottom=2mm
]
\noindent
\textcolor{takeawayblue}{%
    \faLightbulb\hspace{0.4em}\textbf{#1.}\hspace{0.5em}%
}\hspace{0.35em}\ignorespaces
}
{%
\unskip
\end{tcolorbox}
}

\title{On-Policy or Off-Policy Learning? \\ A Systematic Study of Distillation Dynamics}

\author{
Julianna Piskorz\thanks{Authors contributed equally.},\hspace{0.2em}
Antonin Berthon\footnotemark[1] \hspace{0.2em} \&
Mihaela van der Schaar\\
University of Cambridge\\
Cambridge, UK\\
\texttt{\{jp2048,armb3\}@cam.ac.uk}
}

\iclrfinalcopy 
\begin{document}

\maketitle

\etocdepthtag.toc{main}

\begin{abstract}
On-policy learning has been argued to reduce catastrophic forgetting, produce sparser parameter updates, and improve generalisation. However, existing comparisons between supervised fine-tuning and reinforcement learning vary many factors simultaneously, making the contribution of rollout policy difficult to isolate. We study the effect of rollout policy in a controlled strong-to-weak distillation setting, by independently varying rollout policy, token-level KL direction, and learning rate across the Llama~3 and Qwen2.5 model families and reasoning tasks spanning scientific, medical, and arithmetic domains. Our analysis reveals a nuanced picture of distillation dynamics in which rollout policy does not necessarily play a central role. Instead, token-level KL direction more clearly shapes task performance and output coverage, while learning rate governs forgetting and update sparsity. Analysis of KL gradients and experiments along a continuous student--teacher rollout-policy spectrum explain this pattern: forward KL is remarkably robust to rollout policy, with its performance stable and strong despite changes to the rollout policy, whereas reverse KL is substantially more sensitive and favours student-generated rollouts. On-policy data nevertheless improves generalisation to harder variants of the Countdown arithmetic task under both KL directions, although this advantage does not reliably persist after subsequent RLVR. Our broader conclusions remain robust to removing gradient clipping, using sampled KL estimators, and training on tasks requiring longer reasoning chains. Overall, our results challenge the view that on-policy rollouts are inherently preferable and show that their value depends critically on the objective, evaluation setting, and optimisation hyperparameters.
\end{abstract}

\section{Introduction}

Post-training plays a central role in developing the reasoning capabilities of large language models across a wide range of domains. Prominent approaches include supervised fine-tuning (SFT) on high-quality reasoning traces \citep{guha_openthoughts_2025, team_kimi_2026}, reinforcement learning with verifiable rewards (RLVR) \citep{deepseek-ai_deepseek-r1_2025, shao_deepseekmath_2024, yu_dapo_2025}, and knowledge distillation (KD)~\citep{hinton_distilling_2015, agarwal_-policy_2024, team_mimo-v2-flash_2026}. 
Strong-to-weak distillation has become especially prominent in modern post-training pipelines, sing stronger teacher models to transfer reasoning behaviour to weaker students~\citep{team2025qwen3, deepseekai2026deepseekv4highlyefficientmilliontoken}. As the number of available methods grows, classifying their relative strengths and limitations, becomes essential, guiding the development of more effective and efficient training recipes.

One increasingly influential hypothesis singles out the policy used to generate training data as a key factor differentiating post-training methods. Off-policy methods, such as SFT, learn from a fixed dataset or data-generating policy, whereas on-policy methods, such as RLVR, repeatedly train on outputs sampled from the model being optimised. Although SFT and RLVR differ in several other respects, this difference in rollout policy has been proposed as a central factor underlying their differing behaviour.
In particular, relative to off-policy learning, on-policy learning has been argued to reduce or prevent catastrophic forgetting \citep{shenfeld_rls_2025, chen_retaining_2025, shenfeld_self-distillation_2026}, produce substantially sparser parameter updates \citep{mukherjee_reinforcement_2025}, and improve generalisation \citep{chu_sft_2025, zhang_towards_2026, yuan2026f, ming_one-token_2026}. These findings have contributed to broader interest in on-policy post-training, including on-policy distillation \citep{song2026surveyonpolicydistillationlarge, agarwal_-policy_2024, lu2025onpolicydistillation, team_mimo-v2-flash_2026}. 

Yet the causal role of rollout policy remains unclear. Comparisons between SFT and RLVR also change the objective, source and density of supervision, and optimisation procedure, so effects attributed to on-policy data may arise from these accompanying differences. Resolving this ambiguity is practically important: on-policy methods require continual generation throughout training, whereas off-policy methods can reuse existing data. If their proposed benefits arise from other design choices, this additional cost may often be unnecessary.

In this work, we use strong-to-weak distillation as a controlled testbed for isolating the effect of rollout policy. Unlike SFT vs RLVR comparisons, this setting allows us to change the data-generating policy while holding the remaining training pipeline fixed. Across the Llama~3 and Qwen2.5 model families and reasoning tasks spanning scientific, medical, and arithmetic domains, we independently vary the rollout policy and KL direction, breaking the conventional pairing of forward KL with off-policy learning and reverse KL with on-policy learning, while controlling for the learning rate. We find \textit{no consistent advantage from on-policy distillation} in final in-distribution accuracy, catastrophic forgetting, or parameter-update sparsity. Instead, KL direction more clearly determines task accuracy, while learning rate governs forgetting and sparsity. In particular, \textbf{we find no evidence that on-policy rollouts intrinsically preserve prior capabilities or induce sparser updates.}

To understand when rollout policy does matter, we examine its interaction with the token-level KL objective through a theoretical and empirical analysis. We vary the rollout policy along a student--teacher spectrum, interpolating between the two policies and extrapolating beyond them to favour tokens preferred by one model over the other. Our analysis reveals a clear objective-dependent asymmetry: forward KL is remarkably robust to changes in the rollout policy, whereas reverse KL is substantially more sensitive and favours on-policy rollouts. Nevertheless, \textit{forgetting and update sparsity remain governed by learning rate}, with little to no variation caused by the rollout policy.

The benefits of different configurations become more nuanced when considering evaluations beyond final accuracy. Forward KL produces larger pass@$k$ gains on the in-distribution task compared to reverse KL, while \textbf{generalisation to harder task variants on Countdown consistently benefits from on-policy data}, under both KL directions. However, this initial generalisation advantage of on-policy learning does not reliably persist after subsequent RLVR. When combined with reverse KL, on-policy rollouts also reduce incidental teacher-style transfer. Our principal findings remain robust when removing gradient clipping, replacing full-vocabulary KL with commonly used sampled estimators, and training on tasks requiring longer reasoning chains. \textbf{Our main contributions are:}
\begin{itemize}[leftmargin=20pt]
\item \textbf{A controlled study of distillation.} We disentangle rollout policy, KL direction, and learning rate in a controlled study of strong-to-weak distillation, finding no consistent advantage from on-policy rollouts in final accuracy, catastrophic forgetting, or parameter-update sparsity.
\item \textbf{An objective-dependent account of rollout-policy sensitivity.} Through gradient analysis and experiments along a continuous student--teacher spectrum, we show that forward KL is robust to rollout policy, whereas reverse KL is substantially more sensitive.
\item \textbf{A characterisation of when on-policy data helps.} While training stability and output coverage are largely driven by KL direction, and learning rate determines the degree of catastrophic forgetting and update sparsity, we find that on-policy rollouts consistently help to improve model's generalisation to harder task variants and could reduce incidental teacher-style transfer.
\end{itemize}

Together, these findings challenge the view that on-policy rollouts are universally preferable for strong-to-weak distillation.
Instead, the choice of token-level KL divergence strongly affects task performance, training stability and output coverage, while catastrophic forgetting and update sparsity are driven by the learning rate. 
We hope that these findings encourage practitioners to weight the additional cost of on-policy distillation over cheaper off-policy alternatives, and help clarify which training outcomes can be attributed to rollout policy rather than KL direction or learning rate.

\section{Related Works}

\textbf{Strong-to-Weak and On-Policy Distillation.} Knowledge distillation transfers capabilities from a teacher to a student by matching their output distributions \citep{hinton_distilling_2015, kim-rush-2016-sequence}. In language-model post-training, strong-to-weak distillation has become a common approach for transferring reasoning capabilities from larger models to smaller ones \citep{abdin2024phi3technicalreporthighly, team_mimo-v2-flash_2026, team2025qwen3, deepseekai2026deepseekv4highlyefficientmilliontoken, team2026kimik3}. The rollout source defines an important distinction within these methods: in OffPD, the student trains on fixed teacher-generated responses \citep{sanh2020distilbertdistilledversionbert}, while in OnPD, the student generates trajectories and queries the teacher for token-level supervision along them \citep{agarwal_-policy_2024}. Hybrid variants interpolate between the two, for example by constructing trajectories from mixtures of student- and teacher-generated tokens \citep{xu_speculative_2025}. The growing literature on OnPD has investigated its algorithmic variants and failure modes \citep{song2026surveyonpolicydistillationlarge, li_rethinking_2026, armandpour_unmasking_2026, jia_asymmetric_2026, li_filter_2026, zhu2026facesonpolicydistillationpitfalls}, but few works directly contrast OnPD with the cheaper OffPD alternative while holding the remaining training configuration fixed.

\textbf{Claims About On-Policy Post-Training.}
In the broader post-training literature, recent studies have attributed several advantages to on-policy training, such as better preservation of previously acquired capabilities, thereby reducing catastrophic forgetting \citep{shenfeld_rls_2025, chen_retaining_2025, shenfeld_self-distillation_2026, lu2025onpolicydistillation}, sparser parameter updates \citep{mukherjee_reinforcement_2025, yu_dense_2026} and improved generalisation beyond the training distribution \citep{chu_sft_2025, yuan2026f, zhang_towards_2026, ming_one-token_2026}. However, much of this evidence compares SFT with RLVR, analyses public checkpoints, or otherwise varies multiple aspects of the training recipe simultaneously. Such comparisons can entangle rollout source with the learning objective, reward signal, supervision density, optimisation scale, use of gradient clipping, and learning rate, which has itself been shown to strongly affect learning--forgetting trade-offs \citep{catalan-tatjer_learning-forgetting_2026, rofin_how_2026}. We therefore treat these reported results as hypotheses about the advantages of on-policy learning and test them in a strong-to-weak distillation setting where the rollout policy, KL direction, and learning rate are varied explicitly.

\section{Background}
\label{sec:background}

\textbf{Notation.} For the language-modelling tasks considered in this work, let $x$ and $y$ denote the input and output sequences, respectively, each consisting of tokens from the vocabulary $\mathcal{V}$ with $|\mathcal{V}|=M$. Let $y_{<n} = (y_1, \dots, y_{n-1})$ denote the output prefix preceding the $n^{\text{th}}$ token, and let $L_y$ denote the length of $y$. We assume that the input sequences (prompts) are sampled from a fixed data distribution $p_{\mathrm{data}}$, while the output sequences are generated autoregressively from a policy $\pi$, which for a given input $x$ and a partially generated output $y_{<n}$ outputs a discrete distribution over the entire token vocabulary $\mathcal{V}$, $\pi(\cdot | x, y_{<n}) \in \Delta(\mathcal{V})$. 

\textbf{Strong-to-Weak Distillation.} In strong-to-weak knowledge distillation we assume access to two autoregressive language models: a student $\pi_S^\theta$ (parametrised by $\theta$) and a teacher $\pi_T$ providing the supervision signal. Let $\rho$ denote the rollout policy used to generate the output sequences on which distillation is performed. Then, the distillation objective can be defined as:
\begin{equation}
\label{eq:distillation_loss}
    \mathcal{L}(\theta) = \mathbb{E}_{x \sim p_{\mathrm{data}}}\mathbb{E}_{y \sim \rho( \cdot | x)}\left[\frac{1}{L_y}\sum_{n=1}^{L_y} \mathcal{D}\left(\pi_S^\theta(\cdot |x, y_{<n}) \, || \, \pi_T(\cdot |x,y_{<n}) \right) \right],
\end{equation}
where $\mathcal{D}$ is a distance function quantifying the discrepancy between the next-token student and teacher distributions. The objective therefore averages the token-level discrepancy over prefixes visited under the rollout policy $\rho$.

Under this formulation, we obtain \textbf{off-policy distillation} (OffPD) by setting $\rho = \pi_T$ and \textbf{on-policy distillation} (OnPD) by setting $\rho = \pi_S^{\theta}$. Consequently, in OnPD the loss function depends on $\theta$ also through the trajectory sampling procedure $y \sim \pi_S^\theta(\cdot | x)$. However, for computational tractability, we stop gradients through the sampling process and treat sampled trajectories as fixed when computing each update \citep{agarwal_-policy_2024, tang_few_2025}.

\textbf{KL Divergence as the Distance Function.} We use the Kullback--Leibler divergence to measure the distance between the student and teacher next-token distributions. Because we have access to their full output distributions, we compute the divergence over the entire vocabulary. This avoids the additional variance introduced by estimating the token-level KL from sampled tokens and uses the complete supervision signal provided by the teacher \citep{deepseekai2026deepseekv4highlyefficientmilliontoken}.
Given that the KL divergence is not symmetric, we can distinguish between the forward and the reverse KL:
\begin{equation}
    \text{Forward-KL:} \quad D_{\text{F-KL}}(\pi_S, \pi_T) = \mathbb{E}_{y_n \sim \pi_T}\left[\log{\frac{\pi_T(y_n)}{\pi_S(y_n)}}\right] = \sum_{v \in \mathcal{V}} \pi_T(v) \log{\frac{\pi_T(v)}{\pi_S(v)}},
\end{equation}
\begin{equation}
    \text{Reverse-KL:} \quad D_{\text{R-KL}}(\pi_S, \pi_T) = \mathbb{E}_{y_n \sim \pi_S}\left[\log{\frac{\pi_S(y_n)}{\pi_T(y_n)}}\right] = \sum_{v \in \mathcal{V}} \pi_S(v) \log{\frac{\pi_S(v)}{\pi_T(v)}}.
\end{equation}
For conciseness, we write $\pi(y_n) = \pi(\cdot | x, y_{<n})$ for the distribution at the $n^{\text{th}}$ token. Forward KL places greater weight on tokens to which the teacher assigns substantial probability and strongly penalises the student for failing to cover them. It is therefore commonly described as \textit{mode-covering}. Reverse KL in turn places greater weight on tokens favoured by the student, and penalises assigning probability to tokens which are unlikely under the teacher, giving rise to its \textit{mode-seeking} behaviour.

\textbf{Rollout Policy and KL Direction Coupling.} Although OnPD is often associated with reverse KL and OffPD with forward KL, the rollout policy and token-level KL direction are conceptually distinct. The rollout policy $\rho$ determines which prefixes are visited, while the KL direction determines how the student and teacher distributions are compared at those prefixes. The conventional pairings have two motivations.
Firstly, when considering the sequence-level KL divergence between the student and the teacher, teacher rollouts paired with forward KL and student rollouts paired with reverse KL are exactly the two chain-rule decompositions of sequence-level KL (see Appendix\ref{app:sequence-level-kl}). However, because we stop gradients through student-sampled prefixes, this pairing holds only at the objective-value level: the OnPD reverse-KL update is generally \textit{not} the total gradient of sequence-level reverse KL. Second, at a fixed prefix, $\log[\pi_T(v)/\pi_S^\theta(v)]$ with $v\sim\pi_T$ is an unbiased one-sample estimator of forward KL, while $\log[\pi_S^\theta(v)/\pi_T(v)]$ with $v\sim\pi_S^\theta$ is an unbiased estimator of reverse KL. When token-level KLs are estimated from single samples, the rollout tokens therefore naturally pair OffPD with forward KL and OnPD with reverse KL. In our setting, however, we compute the token-level KL \textit{over the full vocabulary} $\mathcal{V}$, so the KL direction does not constrain the rollout policy.  Moreover, neither the chain-rule identities nor the sampling interpretation implies that the conventional pairings are easier to optimise or outperform the crossed pairings. We therefore treat rollout policy and KL direction as independent design choices and study their interaction experimentally.



\section{A Controlled Comparison of On- and Off-Policy Distillation}
\label{sec:on_vs_offpd}
\subsection{Experimental Setup}

\textbf{Experimental design.}
We compare OnPD and OffPD while independently varying token-level KL direction and learning rate ($1\times10^{-5}$ or $5 \times 10^{-5}$). We compute both KL objectives over the full vocabulary and keep the remaining optimisation settings fixed. Each student is trained with full-parameter fine-tuning for 150 optimisation steps, with three random seeds per configuration. Complete training details are provided in Appendix~\ref{app:details:training}.


\textbf{Models and tasks.}
Our main experiments distil Llama-3.1-8B teachers into Llama-3.2-1B students~\citep{grattafiori_llama_2024}, while Appendix~\ref{app:exp:qwen25} reports corresponding experiments with Qwen2.5-7B teachers and Qwen2.5-1.5B students~\citep{qwen2025qwen25technicalreport}. We consider three reasoning tasks spanning different domains: \textit{MedReason} \citep{wu2025medreasonelicitingfactualmedical} for medical reasoning, \textit{Science} \citep{feng2024sciknoweval} for scientific reasoning, and \textit{Countdown} \citep{tinyzero} for arithmetic reasoning.
Teachers and students are taken from the same model family to ensure a shared tokenizer and vocabulary, allowing to directly compare their token-level distributions.
Since overlapping capabilities within a model family can limit knowledge transfer \citep{li_rethinking_2026}, we train a separate teacher on each task to establish a sufficient capability gap, then freeze it throughout distillation.
Dataset preparation and teacher training are detailed in Appendix~\ref{app:experimental_details:datasets} and~\ref{app:details:trained_teacher}.

\textbf{Evaluation.}
For each trained checkpoint, we assess held-out accuracy on the target task, catastrophic forgetting, and parameter-update sparsity.
We measure catastrophic forgetting by the decrease in the mean score across seven out-of-distribution benchmarks from before to after training (Appendix~\ref{app:ood_eval}).
Following \citet{mukherjee_reinforcement_2025}, we measure update sparsity as the fraction of parameters whose absolute change from the initial checkpoint is below $10^{-6}$ (Appendix~\ref{app:update-sparsity}).

\begin{figure}
    \centering
    \includegraphics[width=0.9\linewidth]{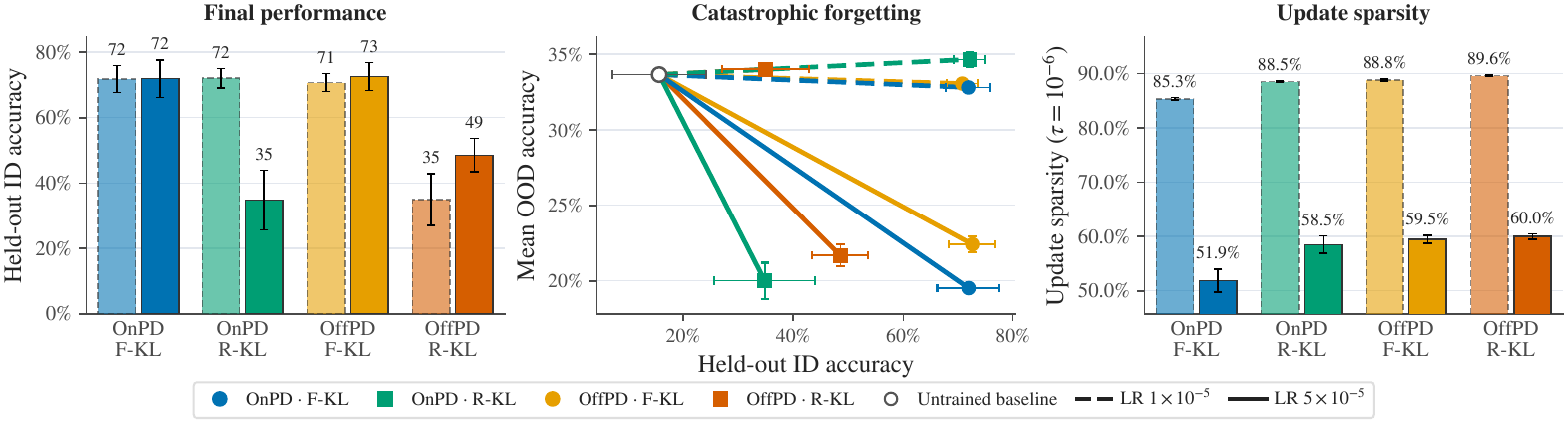}
    \vspace{-0.5em}
    \caption{\textbf{Comparison of OnPD and OffPD across three reasoning tasks.} Held-out ID accuracy in the left and middle panels is averaged across MedReason, Science, and Countdown-3. Per-dataset results are reported in~\Cref{fig:app:per_dataset_performance}. Error bars show SEM computed over $N=3$ seeds.}
    \label{fig:on_vs_offpd}
    \vspace{-1em}
\end{figure}

\subsection{Results}

\textbf{Final task performance.} 
On-policy rollouts offer no consistent advantage in target-task accuracy: the best mean accuracies across the three datasets are nearly identical, reaching 72\% for OnPD and 73\% for OffPD (\Cref{fig:on_vs_offpd}, left).
KL direction produces a clearer pattern: forward KL achieves 71--73\% across all rollout policies and learning rates, whereas reverse KL ranges from 35\% to 72\% and is substantially more sensitive to the learning rate.

\textbf{Catastrophic forgetting.} Similar task performance might conceal different amounts of forgetting on OOD tasks. If on-policy rollouts mitigate forgetting, as suggested in prior work~\citep{shenfeld_rls_2025, chen_retaining_2025}, we would expect OnPD to preserve OOD performance better than OffPD with the same KL objective and learning rate.
Instead, mean OOD performance changes by at most 1.3 percentage points at the lower learning rate but drops by 11.2--14.0 points at the higher rate (\Cref{fig:on_vs_offpd}, middle). Rollout-policy differences are modest compared to that. Remarkably, with forward KL, lowering the learning rate mitigates most of this forgetting while maintaining comparable ID accuracy. In this setting, learning rate, rather than on-policy rollouts, is the dominant factor.

\textbf{Parameter-update sparsity.} The same separation by learning rate appears in the parameter updates (\Cref{fig:on_vs_offpd}, right). Sparsity ranges from 85.3--89.6\% at the lower learning rate, compared with 51.9--60.0\% at the higher learning rate. Again, differences between rollout policies are much smaller: OffPD produces at least as sparse updates as OnPD in every matched comparison. KL direction has a secondary effect, with reverse KL generally producing greater sparsity than forward KL on on-policy rollouts. Thus, sparser updates do not emerge as a benefit of on-policy rollouts in these experiments.


\textbf{Learning-rate sweep.} The comparison above uses two learning rates. To examine these effects more systematically, we sweep the learning rate from $1\times10^{-5}$ to $6\times10^{-5}$ on Countdown-3, keeping the remaining training configuration fixed. \Cref{fig:learning_rate} shows that update sparsity decreases approximately linearly as learning rate increases, while OOD performance also deteriorates consistently, with similar trends under both rollout policies. At matched learning rates, OffPD exhibits less forgetting in nearly every comparison, although this difference remains smaller than the effect of learning rate.
Crucially, \textbf{greater forgetting does not appear to be a necessary cost of achieving strong target-task performance}: under forward-KL, using small learning rate allows to achieve top performance, without leading to a decrease in the prior capabilities. Models with similar final task performance can therefore exhibit markedly different degrees of forgetting, resembling the ``cliff'' phenomenon previously identified in the SFT setting by \citet{catalan-tatjer_learning-forgetting_2026}.

\begin{figure}
    \centering
    \includegraphics[width=0.9\linewidth]{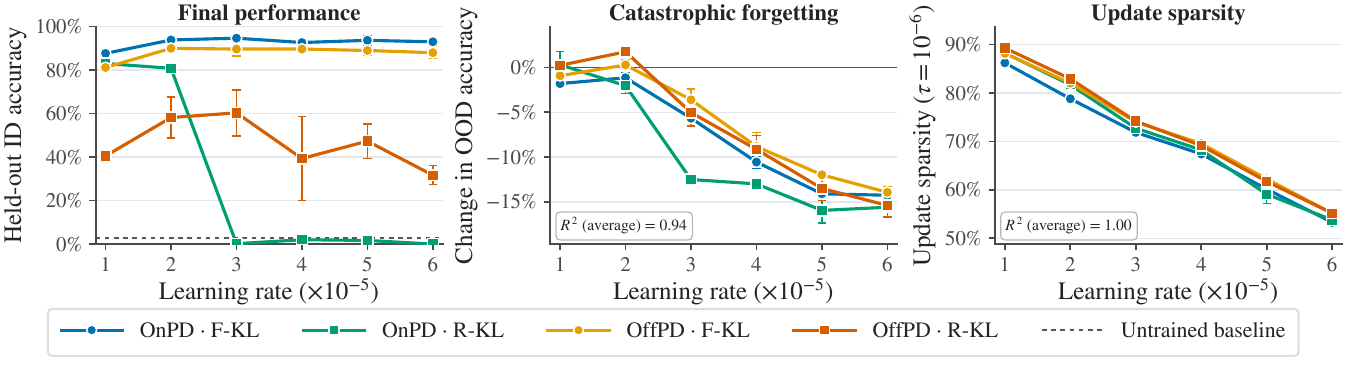}
    \vspace{-0.5em}
    \caption{\textbf{The effect of learning rate on Countdown-3.} Forgetting increases and update sparsity decreases with the learning rate under both rollout policies. Points report means over $N=3$ seeds; error bars show SEM.}
    \label{fig:learning_rate}
    \vspace{-1.5em}
\end{figure}

\begin{highlight}
Across our controlled comparisons, on-policy rollouts offer no consistent advantage in in-distribution performance, forgetting, or update sparsity. KL direction more clearly distinguishes task performance, while learning rate largely governs forgetting and sparsity.
\end{highlight}

\section{When does rollout policy actually matter?}
\label{sec:rollout_spectrum}

Our results in \Cref{sec:on_vs_offpd} indicate that when controlling for the direction of the KL divergence and the learning rate, the differences between OnPD and OffPD seem negligible, challenging the view that rollout policy strongly affects performance. This inspires a broader question: \textit{when, if at all, does the rollout policy actually matter}? A closer look at \Cref{fig:on_vs_offpd} reveals that the answer might depend on the direction of the token-level KL divergence. Indeed, in this section we carry out a controlled analysis which reveals that forward KL is robust to the changes in the rollout policy, while for reverse KL rollout policy matters a lot, with student-favoured trajectories leading to better performance.


\subsection{Logit Gradients Reveal Differing Sensitivity to the Rollout Policy}
To understand how the rollout policy affects optimisation, we analyse the token-level gradients of forward and reverse KL. Let $(z_S^\theta)_v, \, v \in \mathcal{V}$ denote the logit values produced by student model at a fixed prefix, with $\pi_S^\theta(v) = \operatorname{softmax}(z_S^\theta)_v$. Then, we can describe the parameter gradients as follows:
\begin{align}
    \nabla_\theta D_{\text{F-KL}} &= \sum_{v \in \mathcal{V}} (\pi_S^\theta(v) - \pi_T(v))\nabla_\theta (z_S^\theta)_v, \label{eq:gradient_forward_kl} \\
    \nabla_\theta D_{\text{R-KL}} &= \sum_{v \in \mathcal{V}} \pi_S^\theta(v)\left[\log\frac{\pi_S^\theta(v)}{\pi_T(v)} - D_{\text{R-KL}}\right] \nabla_\theta (z_S^\theta)_v. \label{eq:gradient_reverse_kl}
\end{align}
We provide the derivation in Appendix~\ref{app:kl-gradients}. These expressions reveal an important asymmetry. The forward-KL derivative with respect to the student logits is
$\pi_S^\theta(v)-\pi_T(v)$; it is therefore nonzero
whenever the teacher and student next-token distributions differ, and each of its coordinates lies in $[-1,1]$. Consequently, under bounded student-logit Jacobians $\nabla_\theta(z_S^\theta)$, the difference between the forward-KL updates induced by two rollout policies is bounded linearly by the total-variation distance between the prefix distributions they induce (see Appendix~\ref{app:bullshit} for the precise statement and proof). Hence, small changes in the trajectories generated by the rollout policy produce proportionally small changes in the forward-KL gradient.

For reverse KL, however, even a small rollout change can produce an arbitrarily large gradient change when the teacher and student assign very different probabilities to some tokens. Its derivative with respect to student logits is weighted by $\pi_S^\theta(v)$, so it vanishes as the student probability approaches zero, even when the teacher assigns the token substantial probability. Reverse KL may therefore struggle to recover teacher modes omitted by the student, reflecting its mode-seeking behaviour. Conversely, when the student assigns appreciable probability to a token that the teacher considers extremely unlikely, the log-ratio $\log{\pi_S^\theta(v)/\pi_T(v)}$ an become arbitrarily
large, potentially producing sharp, high-variance updates, which can potentially destabilise training. Accordingly, unlike forward KL, reverse KL admits no bound dependent solely on the rollout distance and the student-logit Jacobian. We formalise this result in Appendix~\ref{app:bullshit}.

These properties suggest that reverse KL is more sensitive to the rollout policy. While forward KL provides signal at any visited prefix where the policies disagree, reverse KL emphasises student-supported, teacher-disfavoured tokens and therefore depends more strongly on visiting the student’s own prefix distribution. We consequently expect reverse KL to benefit more from on-policy rollouts.

\subsection{Empirical Validation Using a Rollout-Policy Spectrum}

\textbf{Rollout-policy spectrum.} To empirically validate the differing sensitivity of forward and reverse KL to the rollout policy, we systematically vary the rollout policy along a student--teacher spectrum parametrised by $\lambda$. Specifically, we define a likelihood-ratio-controlled policy $\pi_\lambda$, where, for each $v\in\mathcal{V}$, $\pi_\lambda(v)=\operatorname{softmax}(z_\lambda)_v$, with
$$
(z_\lambda)_v
= \frac{1}{2}\left(\log\pi_S(v)+\log\pi_T(v)\right)
+
\frac{\lambda}{2}\left(\log\pi_S(v)-\log\pi_T(v)\right).
$$
Negative values of $\lambda$ produce teacher-favoured rollouts, positive values produce student-favoured rollouts, and $\lambda=0$ is the symmetric midpoint. In particular, $\lambda=-1$ recovers standard OffPD, while $\lambda=1$ recovers OnPD. Choosing $\lambda<-1$ biases sampling towards tokens which the teacher finds plausible but the student does not, and vice versa for $\lambda>1$. To avoid aggressively amplifying tokens that both models consider unlikely, we additionally employ log-ratio clipping and plausibility masking, similar to \citet{li2023contrastivedecodingopenendedtext}, as detailed in Appendix~\ref{app:rollout_spectrum}. \Cref{fig:rollout_spectrum_likelihood} validates that this sampling scheme allows us to vary the student and teacher likelihoods on the generated trajectories.

\begin{figure}[t]
    \centering
    \includegraphics[width=0.9\linewidth]{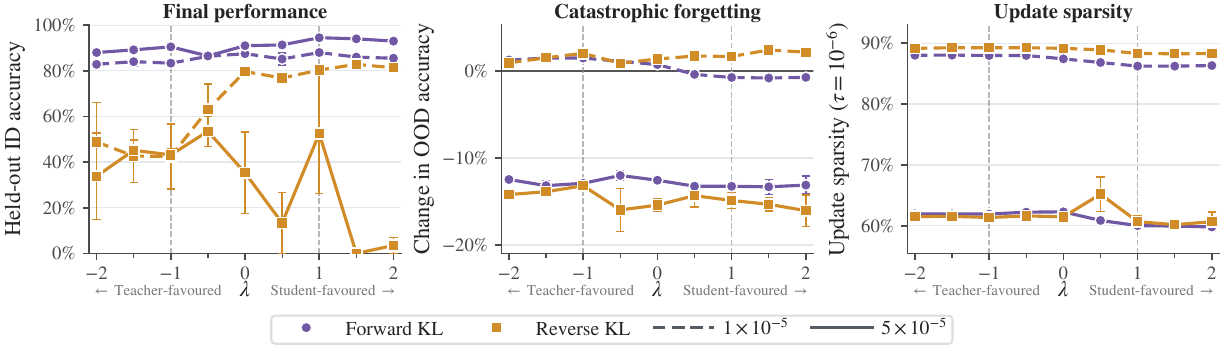}
    \vspace{-1em}
    \caption{\textbf{Performance of forward and reverse KL across the rollout-policy spectrum.} Forward KL maintains strong final performance across the spectrum, whereas reverse KL varies substantially. Rollout policy has comparatively little effect on catastrophic forgetting or update sparsity. Points show means over $N=3$ seeds; error bars denote SEM.}
    \label{fig:rollout_spectrum}
\end{figure}
\begin{wrapfigure}[12]{r}{0.28\linewidth}
\centering
\vspace{-1em}
    \includegraphics[width=\linewidth]{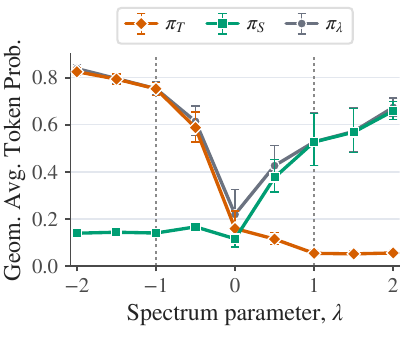}
    \vspace{-2em}
    \caption{\textbf{Likelihood in the rollout-policy spectrum}.}
    \label{fig:rollout_spectrum_likelihood}
\end{wrapfigure}
\textbf{Empirical results.} Following the setup of the previous section, we train the student on Countdown-3 for 150 optimisation steps using either forward or reverse KL objective, with trajectories sampled from $\pi_\lambda$. The results in \Cref{fig:rollout_spectrum} support the conclusions from our gradient analysis. Forward KL is remarkably robust to the rollout policy: held-out accuracy remains above 80\% across the entire spectrum, varying by only 5.2 percentage points at a learning rate of $1\times10^{-5}$. 
Remarkably, OnPD with forward KL can learn effectively despite passing through a regime of largely incoherent, high-entropy rollouts (Appendix~\ref{app:onpd-fkl-entropy-collapse}). Similar final performance under OnPD and OffPD may therefore conceal markedly different optimisation paths.
Reverse KL is substantially more sensitive, exhibiting large changes in mean performance across $\lambda$ and considerable variance across seeds for several settings. At the lower learning rate, reverse KL benefits strongly from student-favoured rollouts ($\lambda>0$), while performance deteriorates for teacher-favoured rollouts. At the higher learning rate, however, reverse KL remains unstable and occasionally collapses.

We further examine catastrophic forgetting and update sparsity over the same rollout policy spectrum. Middle and right panels of \Cref{fig:rollout_spectrum} show that varying the rollout policy has a comparatively modest effect on mean OOD performance and sparsity. Both are influenced much more strongly by the learning rate, consistent with our earlier findings.

\begin{highlight}
Gradient analysis and empirical experiments reveal that forward KL is remarkably robust to the rollout policy, while reverse KL favours student-generated rollouts. 
Catastrophic forgetting and update sparsity are only marginally affected by rollout policy.
\end{highlight}

\subsection{Are there any other differences induced by the direction of KL divergence?}
So far, we have seen that the effect of rollout policy on task performance depends strongly on KL direction. However, pass@1 accuracy alone does not fully capture what a model has learned. We therefore examine output coverage under repeated sampling, generalisation to harder Countdown variants, and performance under subsequent RLVR.



\textbf{Forward KL yields larger gains from repeated sampling.} Improvements in pass@1 do not necessarily translate into improvements at higher pass@$k$~\citep{yue2025does}. \Cref{fig:countdown_spectrum} (left pair) compares pass@1 and pass@10 on Countdown-3 along the rollout-policy spectrum with learning $1\times10^{-5}$ (LR $5\times10^{-5}$ shown \Cref{fig:app:countdown_spectrum_lr5e5}). Reverse KL yields smaller gains at comparable pass@1: for $\lambda>0$ (OnPD), both KL directions have similar pass@1, but forward KL leads to significantly higher pass@10. Results on Qwen2.5 show the same effect across the rollout-policy spectrum (\Cref{fig:app:qwen_countdown_spectrum}). This suggests that KL direction influences output coverage more than rollout policy.

\textbf{More on-policy rollouts generalise better to harder tasks.} We evaluate the same Countdown-3 checkpoints without further training on the harder task Countdown-4E, which uses four operands drawn from 1--10. \Cref{fig:countdown_spectrum} (right pair) shows pass@1 and pass@10 along the rollout-policy spectrum with learning $1\times10^{-5}$ (LR $5\times10^{-5}$ shown \Cref{fig:app:countdown_spectrum_lr5e5}). For both KL directions, performance increases gradually across the spectrum, with more on-policy rollouts ($\lambda > 0$) consistently yielding $10-15\%$ higher pass@k accuracy than off-policy rollouts. Results on Qwen2.5 with the same LR show consistent trends. Hence, by investigating generalisation to harder tasks, we identify the first setting where using on-policy rollouts leads to a significant performance improvement compared to off-policy rollouts. These findings are consistent with prior works studying generalisation in the context of SFT and RL \citep{yuan2026f, zhang_towards_2026, ming_one-token_2026}.

\begin{figure}[t]
\centering
    \includegraphics[width=\linewidth]{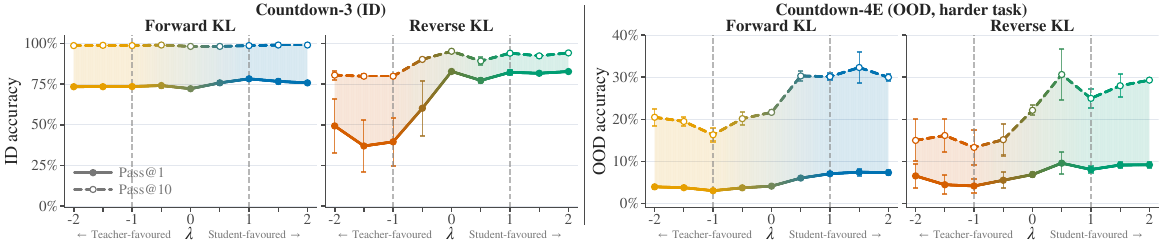}
    \vspace{-2em}
    \caption{\textbf{Output coverage and generalisation along the rollout-policy spectrum.} While in-distribution forward KL leads to better output coverage, when generalising to harder tasks we can see a clear benefit of using on-policy data. Error bars show SEM across three seeds. The easier Countdown-3 uses $T=1$, while the more difficult Countdown-4E uses $T=0.5$.}
    \label{fig:countdown_spectrum}
    \vspace{-2em}
\end{figure}


\begin{wrapfigure}{r}{0.35\linewidth}
\vspace{-0.5cm}
    \centering
    \includegraphics[width=\linewidth]{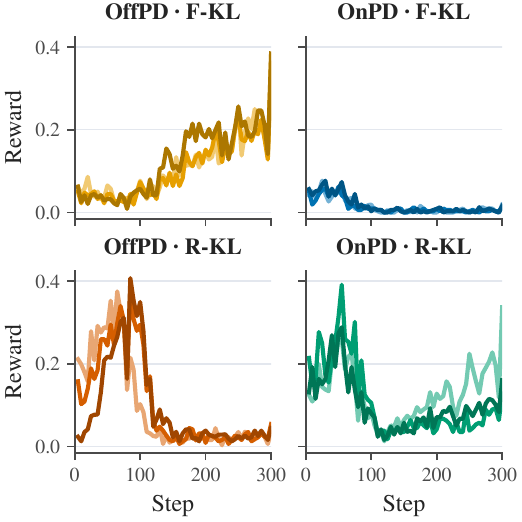}
    \vspace{-2em}
    \caption{\textbf{RLVR rewards on Countdown-4} when starting from distillation checkpoints trained with LR $1\times10^{-5}$; each curve corresponds to a distillation seed.}
    \label{fig:countdown4_downstream_rl_1e5}
\vspace{-0.5cm}
\end{wrapfigure}

\textbf{Consequences for subsequent RLVR.} Which configurations, then, provide better starting points for further RLVR? We train the Countdown-3 distillation checkpoints on Countdown-4 for 300 RLVR steps, using three seeds per distillation configuration and an otherwise fixed RL setup. \Cref{fig:countdown4_downstream_rl_1e5} shows that reverse-KL checkpoints distilled at a learning rate of $1\times10^{-5}$ improve quickly---including OffPD checkpoints with initially low accuracy---but later undergo reward collapse. In contrast, OffPD checkpoints distilled with forward KL at $1\times10^{-5}$ or reverse KL at $5\times10^{-5}$ (Appendix~\Cref{fig:app:countdown4_downstream_rl_5e5}) improve steadily and reach the highest performance at the end of training, despite starting near $0\%$ accuracy. 
The strongest sustained RLVR performance therefore comes from off-policy checkpoints, despite their lower initial Countdown-4 accuracy. The initial generalisation advantage of on-policy checkpoints does not translate into a reliable advantage after RLVR. These findings are consistent with \citet{zhang_good_2026}, who show that the SFT checkpoints with the strongest initial performance are not necessarily the best starting points for a subsequent RLVR stage, suggesting that additional research is required to determine how to best combine post-training methods into multi-step pipelines.


\textbf{Teacher-style transfer.}
Beyond output coverage and generalisation, we proceed to study one more aspect of training: the transfer of incidental teacher behaviours. We instruct the teacher to reason in Spanish and measure how often the student subsequently produces Spanish responses (\Cref{app:spanish}). Surprisingly, OnPD with reverse KL largely preserves the student’s English response style, whereas the other configurations exhibit near-complete transfer to Spanish. We hypothesise that, on student-generated English prefixes, the teacher still supports English continuations despite its Spanish instruction. The mode-seeking reverse-KL objective can therefore match this English mode without requiring the student to cover Spanish alternatives. These results suggest that combining OnPD with reverse KL may help suppress incidental style transfer.

\begin{highlight}
    While forward KL improves in-distribution output coverage, when generalising to harder tasks we observe consistent benefits from using on-policy data, across both forward and reverse KL. However, this advantage of OnPD does not reliably persist after RLVR.
\end{highlight}

\section{Do these findings generalise beyond our experimental design?}

We test whether the findings in the preceding sections depend on three choices in our experimental design: computing KL over the full vocabulary rather than sampled tokens, applying gradient clipping, and studying tasks with relatively short rollouts.


\textbf{Sampled rather than full-vocabulary KL.}
Our main experiments compute the KL divergence over the full vocabulary, since this has been shown to improve training stability \citep{deepseekai2026deepseekv4highlyefficientmilliontoken}. Several recent works instead use a more memory-efficient sampled-KL estimator \citep{lu2025onpolicydistillation,li_rethinking_2026}. We therefore repeat our experiments using sampled KL. As in our main analysis, we disentangle the policy used to generate the rollout from the distribution used to sample the token-level KL gradient, evaluating all four combinations of rollout policy and KL direction. In the results, forward KL remains robust to rollout policy, reverse KL remains substantially more sensitive and brittle, and learning rate continues to govern forgetting and update sparsity (Appendix~\ref{app:sampled_kl}).

\textbf{Removing gradient clipping.} Our main experiments clip the global gradient norm at $1.0$, and the unclipped norm exceeds this threshold throughout training (\Cref{fig:app:gradient_norms}). Clipping could therefore suppress meaningful differences in gradient magnitudes between rollout policies. Disabling clipping exposes a stronger difference between KL directions but no consistent OnPD--OffPD gap. Reverse-KL performance deteriorates substantially, whereas forward KL remains effective at larger learning rates. Forgetting and sparsity remain primarily determined by learning rate (Appendix~\ref{app:exp:gradient_clipping}). Thus, clipping stabilises reverse KL but does not explain the limited effect of rollout policy.

\textbf{Training on datasets requiring longer rollouts.} Our main experiments cover medical, scientific, and arithmetic reasoning, but all three have average teacher-response lengths of 94--135 tokens (\Cref{tab:trained-teacher-performance}). The benefits of on-policy distillation may become more apparent over longer trajectories, where small differences between the student and teacher policies can compound across many generated tokens. We therefore repeat the comparison using Qwen2.5 models on Numina--MATH, where teacher responses average 622 tokens. At the lower learning rate, OnPD with reverse KL achieves the highest MATH-500 accuracy among the evaluated configurations, suggesting that on-policy rollouts may help in harder tasks requiring longer reasoning. Nevertheless, forward KL remains more robust than reverse KL, while forgetting and sparsity remain governed by learning rate (Appendix~\ref{app:math_qwen}). Because this experiment contains one run per condition, we treat the OnPD advantage as suggestive.

\begin{highlight}
Our main conclusions are not explained by full-vocabulary KL, gradient clipping, or short rollouts: forward KL remains more robust to rollout policy, while learning rate remains the strongest predictor of forgetting and parameter-update sparsity.
\end{highlight}

\section{Discussion, Limitations and Future Work}

\textbf{Discussion.} In conclusion, our controlled comparison offers a nuanced view of the differences between on-policy and off-policy learning in the context of distillation. While on-policy learning can improve generalisation to harder tasks, and potentially reduce incidental teacher-style transfer, learning rate and KL direction explain substantially more of the observed variation. Given its lower computational cost, we encourage future work on on-policy distillation to include OffPD as a standard baseline. More broadly, our results suggest that it is difficult to attribute most of the observed differences between SFT and RL to rollout policy alone, motivating closer study of other factors such as the learning objective, reward signal, supervision density, and optimisation procedure.

\textbf{Limitations and Future Work.} Our controlled experimental setting enables systematic comparisons across rollout policies, KL directions, and learning rates, but necessarily focuses on a bounded regime: student models of at most 1.5B parameters, and (consequently) tasks requiring reasoning traces of at most 2,000 tokens. 
Extending this analysis to larger models, and tasks requiring longer rollouts represents a promising direction for future work. Further, while in our study we keep the teacher model fixed, in future work we would like to investigate whether differences between OnPD and OffPD emerge under specific student--teacher combinations.


\subsection*{AI use statement}

In this work, we used generative AI tools to obtain demonstrations for solving reasoning tasks, to help formulate mathematical claims, to provide guidelines for proving those claims and assist in the writing of proofs, to implement and maintain the code for running experiments, and to prepare plots for the figures. We have not used generative AI tools to help develop theoretical models or conceptual frameworks, to propose or refine hypotheses, to design our research methodology, to support qualitative and thematic data analysis, to suggest a structure for the paper, or to interpret results, and the generation of synthetic datasets and assistance with translation are not applicable to this work. Additionally, we have used generative AI tools to edit individual paragraphs in the paper to improve readability. We have reviewed all AI-assisted work: all mathematical claims and their proofs were checked in full by the authors, AI-assisted code was verified and tested for correctness by two authors, and all AI-edited text was reviewed by all the authors to confirm that it accurately reflects our intended meaning and claims. The plotted values were verified with the matching plots from wandb. We take responsibility for the final content of this work, including text, claims or artifacts produced with the aid of generative AI.

\subsection*{Reproducibility statement}

We provide complete definitions of OnPD, OffPD, forward and reverse KL, and the rollout-policy spectrum, including derivations and implementation details for sampled KL estimation, plausibility masking, and log-ratio clipping. The appendix documents the model checkpoints, datasets, prompts, data splits, generation settings, optimisation hyperparameters, learning rates, gradient-clipping choices, training durations, and evaluation procedures used in each experiment. Unless otherwise stated, results are averaged over three random seeds and reported with the standard error of the mean; experiments containing fewer runs, including the long-rollout ablation, are explicitly identified. All matched comparisons use the same training and evaluation pipeline, differing only in the factors under investigation.


\subsubsection*{Acknowledgments}
We would like to thank Alicia Curth and Usman Anwar for their feedback on the earlier versions of this work. JP's PhD studentship is funded by AstraZeneca, while AB's studentship is funded by Eedi. This work was supported by Azure sponsorship credits granted by Microsoft’s AI for Good Research Lab.

\bibliography{references}

@misc{agarwal_-policy_2024,
  title = {On-{Policy} {Distillation} of {Language} {Models}: {Learning} from {Self}-{Generated} {Mistakes}},
  shorttitle = {On-{Policy} {Distillation} of {Language} {Models}},
  url = {http://arxiv.org/abs/2306.13649},
  doi = {10.48550/arXiv.2306.13649},
  urldate = {2026-04-02},
  publisher = {arXiv},
  author = {Agarwal, Rishabh and Vieillard, Nino and Zhou, Yongchao and Stanczyk, Piotr and Ramos, Sabela and Geist, Matthieu and Bachem, Olivier},
  year = {2023},
  journal = {International Conference on Learning Representations},
}

@article{lu2025onpolicydistillation,
  author = {Kevin Lu and Thinking Machines Lab},
  title = {On-Policy Distillation},
  journal = {Thinking Machines Lab: Connectionism},
  year = {2025},
  note = {https://thinkingmachines.ai/blog/on-policy-distillation},
  doi = {10.64434/tml.20251026},
}

@misc{hendrycks2021measuringmathematicalproblemsolving,
  title = {Measuring Mathematical Problem Solving With the MATH Dataset},
  author = {Dan Hendrycks and Collin Burns and Saurav Kadavath and Akul Arora and Steven Basart and Eric Tang and Dawn Song and Jacob Steinhardt},
  year = {2021},
  eprint = {2103.03874},
  archiveprefix = {arXiv},
  primaryclass = {cs.LG},
  url = {https://arxiv.org/abs/2103.03874},
  journal = {NeurIPS Datasets and Benchmarks},
}

@misc{yang2024qwen25mathtechnicalreportmathematical,
  title = {Qwen2.5-Math Technical Report: Toward Mathematical Expert Model via Self-Improvement},
  author = {An Yang and Beichen Zhang and Binyuan Hui and Bofei Gao and Bowen Yu and Chengpeng Li and Dayiheng Liu and Jianhong Tu and Jingren Zhou and Junyang Lin and others},
  year = {2024},
  eprint = {2409.12122},
  archiveprefix = {arXiv},
  primaryclass = {cs.CL},
  url = {https://arxiv.org/abs/2409.12122},
  journal = {arXiv.org},
  doi = {10.48550/arXiv.2409.12122},
}

@misc{li2023contrastivedecodingopenendedtext,
  title = {Contrastive Decoding: Open-ended Text Generation as Optimization},
  author = {Xiang Lisa Li and Ari Holtzman and Daniel Fried and Percy Liang and Jason Eisner and Tatsunori Hashimoto and Luke Zettlemoyer and Mike Lewis},
  year = {2023},
  eprint = {2210.15097},
  archiveprefix = {arXiv},
  primaryclass = {cs.CL},
  url = {https://arxiv.org/abs/2210.15097},
  journal = {Proceedings of the 61st Annual Meeting of the Association for Computational Linguistics (Volume 1: Long Papers)},
  pages = {12286-12312},
  doi = {10.18653/v1/2023.acl-long.687},
  publisher = {Association for Computational Linguistics},
}

@misc{guha_openthoughts_2025,
  title = {{OpenThoughts}: {Data} {Recipes} for {Reasoning} {Models}},
  shorttitle = {{OpenThoughts}},
  url = {http://arxiv.org/abs/2506.04178},
  doi = {10.48550/arXiv.2506.04178},
  urldate = {2026-08-26},
  publisher = {arXiv},
  author = {Guha, Etash and Marten, Ryan and Keh, Sedrick and Raoof, Negin and Smyrnis, Georgios and Bansal, Hritik and Nezhurina, Marianna and Mercat, Jean and Vu, Trung and Sprague, Zayne and others},
  year = {2025},
  journal = {arXiv.org},
}

@misc{hinton_distilling_2015,
  title = {Distilling the {Knowledge} in a {Neural} {Network}},
  url = {http://arxiv.org/abs/1503.02531},
  doi = {10.48550/arXiv.1503.02531},
  urldate = {2026-08-26},
  publisher = {arXiv},
  author = {Hinton, Geoffrey and Vinyals, Oriol and Dean, Jeff},
  year = {2015},
  journal = {arXiv.org},
}

@inproceedings{kwon2023efficient,
  title = {Efficient Memory Management for Large Language Model Serving with PagedAttention},
  author = {Woosuk Kwon and Zhuohan Li and Siyuan Zhuang and Ying Sheng and Lianmin Zheng and Cody Hao Yu and Joseph E. Gonzalez and Hao Zhang and Ion Stoica},
  booktitle = {Symposium on Operating Systems Principles},
  year = {2023},
  journal = {Symposium on Operating Systems Principles},
  pages = {611-626},
  doi = {10.1145/3600006.3613165},
  publisher = {ACM},
}

@misc{numina_math_datasets,
  author = {Jia LI and Edward Beeching and Lewis Tunstall and Ben Lipkin and Roman Soletskyi and Shengyi Costa Huang and Kashif Rasul and Longhui Yu and Albert Jiang and Ziju Shen and Zihan Qin and Bin Dong and Li Zhou and Yann Fleureau and Guillaume Lample and Stanislas Polu},
  title = {NuminaMath},
  year = {2024},
  publisher = {Numina},
  journal = {Hugging Face repository},
  howpublished = {\url{[https://huggingface.co/AI-MO/NuminaMath-CoT](https://github.com/project-numina/aimo-progress-prize/blob/main/report/numina_dataset.pdf)}},
}

@article{team2025qwen3,
  title = {Qwen3 Technical Report},
  author = {An Yang and Anfeng Li and Baosong Yang and Beichen Zhang and Binyuan Hui and Bo Zheng and Bowen Yu and Chang Gao and Chengen Huang and Chenxu Lv and others},
  journal = {arXiv},
  year = {2025},
}

@misc{shao_deepseekmath_2024,
  title = {{DeepSeekMath}: {Pushing} the {Limits} of {Mathematical} {Reasoning} in {Open} {Language} {Models}},
  shorttitle = {{DeepSeekMath}},
  url = {http://arxiv.org/abs/2402.03300},
  doi = {10.48550/arXiv.2402.03300},
  urldate = {2026-08-26},
  publisher = {arXiv},
  author = {Shao, Zhihong and Wang, Peiyi and Zhu, Qihao and Xu, Runxin and Song, Junxiao and Bi, Xiao and Zhang, Haowei and Zhang, Mingchuan and Li, Y. K. and Wu, Y. and others},
  year = {2024},
  journal = {arXiv.org},
}

@article{deepseek-ai_deepseek-r1_2025,
  title = {{DeepSeek}-{R1}: {Incentivizing} {Reasoning} {Capability} in {LLMs} via {Reinforcement} {Learning}},
  volume = {645},
  issn = {0028-0836, 1476-4687},
  shorttitle = {{DeepSeek}-{R1}},
  url = {http://arxiv.org/abs/2501.12948},
  doi = {10.48550/arXiv.2501.12948},
  number = {8081},
  urldate = {2026-08-26},
  journal = {arXiv.org},
  author = {DeepSeek-AI},
  year = {2025},
}

@misc{yu_dapo_2025,
  title = {{DAPO}: {An} {Open}-{Source} {LLM} {Reinforcement} {Learning} {System} at {Scale}},
  shorttitle = {{DAPO}},
  url = {http://arxiv.org/abs/2503.14476},
  doi = {10.52202/085713-3775},
  urldate = {2026-08-26},
  publisher = {Neural Information Processing Systems Foundation, Inc. (NeurIPS)},
  author = {Yu, Qiying and Zhang, Zheng and Zhu, Ruofei and Yuan, Yufeng and Zuo, Xiaochen and Yue, Yu and Dai, Weinan and Fan, Tiantian and Liu, Gaohong and Liu, Lingjun and others},
  year = {2025},
  journal = {Advances in Neural Information Processing Systems 38},
  pages = {125532-125554},
}

@misc{team_kimi_2026,
  title = {Kimi {K2}: {Open} {Agentic} {Intelligence}},
  shorttitle = {Kimi {K2}},
  url = {http://arxiv.org/abs/2507.20534},
  doi = {10.48550/arXiv.2507.20534},
  urldate = {2026-08-26},
  publisher = {arXiv},
  author = {Bai, Kimi Team Yifan and Bao, Yiping and Charles, Y. and Chen, Cheng and Chen, Guanduo and Chen, Hai-Ting and Chen, Hua-Rong and Chen, Jiahao and Chen, Ning-Xin and Chen, Ruijue and others},
  year = {2025},
  journal = {arXiv},
}

@misc{wu2025medreasonelicitingfactualmedical,
  title = {MedReason: Eliciting Factual Medical Reasoning Steps in LLMs via Knowledge Graphs},
  author = {Juncheng Wu and Wenlong Deng and Xingxuan Li and Sheng Liu and Taomian Mi and Yifan Peng and Ziyang Xu and Yi Liu and Hyunjin Cho and Chang-In Choi and others},
  year = {2025},
  eprint = {2504.00993},
  archiveprefix = {arXiv},
  primaryclass = {cs.CL},
  url = {https://arxiv.org/abs/2504.00993},
  journal = {arXiv.org},
  doi = {10.48550/arXiv.2504.00993},
}

@article{feng2024sciknoweval,
  title = {Sciknoweval: Evaluating multi-level scientific knowledge of large language models},
  author = {Feng, Kehua and Ding, Keyan and Wang, Weijie and Zhuang, Xiang and Wang, Zeyuan and Qin, Ming and Zhao, Yu and Yao, Jianhua and Zhang, Qiang and Chen, Huajun},
  journal = {arXiv.org},
  year = {2024},
  doi = {10.48550/arXiv.2406.09098},
}

@misc{tinyzero,
  author = {Jiayi Pan and Junjie Zhang and Xingyao Wang and Lifan Yuan and Hao Peng and Alane Suhr},
  title = {TinyZero},
  howpublished = {https://github.com/Jiayi-Pan/TinyZero},
  note = {Accessed: 2025-01-24},
  year = {2025},
}

@misc{qwen2025qwen25technicalreport,
  title = {Qwen2.5 Technical Report},
  author = {Yang, Qwen An and Yang, Baosong and Zhang, Beichen and Hui, Binyuan and Zheng, Bo and Yu, Bo-Wen and Li, Chengyuan and Liu, Dayiheng and Huang, Fei and Dong, Guanting and others},
  year = {2025},
  eprint = {2412.15115},
  archiveprefix = {arXiv},
  primaryclass = {cs.CL},
  url = {https://arxiv.org/abs/2412.15115},
  journal = {arXiv.org},
  doi = {10.48550/arXiv.2412.15115},
}

@misc{eval-harness,
  author = {Gao, Leo and Tow, Jonathan and Abbasi, Baber and Biderman, Stella and Black, Sid and DiPofi, Anthony and Foster, Charles and Golding, Laurence and Hsu, Jeffrey and Le Noac'h, Alain and others},
  title = {The Language Model Evaluation Harness},
  month = {07},
  year = {2024},
  publisher = {Zenodo},
  version = {v0.4.3},
  doi = {10.5281/zenodo.12608602},
  url = {https://zenodo.org/records/12608602},
}

@inproceedings{lin-etal-2022-truthfulqa,
  title = {{T}ruthful{QA}: Measuring How Models Mimic Human Falsehoods},
  author = {Lin, Stephanie  and Hilton, Jacob  and Evans, Owain},
  editor = {Muresan, Smaranda  and Nakov, Preslav  and Villavicencio, Aline},
  booktitle = {Annual Meeting of the Association for Computational Linguistics},
  year = {2021},
  journal = {Annual Meeting of the Association for Computational Linguistics},
  pages = {3214-3252},
  doi = {10.18653/v1/2022.acl-long.229},
  publisher = {Association for Computational Linguistics},
}

@misc{zhou2023instructionfollowingevaluationlargelanguage,
  title = {Instruction-Following Evaluation for Large Language Models},
  author = {Jeffrey Zhou and Tianjian Lu and Swaroop Mishra and Siddhartha Brahma and Sujoy Basu and Yi Luan and Denny Zhou and Le Hou},
  year = {2023},
  eprint = {2311.07911},
  archiveprefix = {arXiv},
  primaryclass = {cs.CL},
  url = {https://arxiv.org/abs/2311.07911},
  journal = {arXiv.org},
  doi = {10.48550/arXiv.2311.07911},
}

@misc{chen2021evaluating,
  title = {Evaluating Large Language Models Trained on Code},
  author = {Mark Chen and Jerry Tworek and Heewoo Jun and Qiming Yuan and Henrique Ponde de Oliveira Pinto and Jared Kaplan and Harri Edwards and Yuri Burda and Nicholas Joseph and Greg Brockman and others},
  year = {2021},
  eprint = {2107.03374},
  archiveprefix = {arXiv},
  primaryclass = {cs.LG},
  journal = {arXiv.org},
}

@misc{paech2024eqbenchemotionalintelligencebenchmark,
  title = {EQ-Bench: An Emotional Intelligence Benchmark for Large Language Models},
  author = {Samuel J. Paech},
  year = {2024},
  eprint = {2312.06281},
  archiveprefix = {arXiv},
  primaryclass = {cs.CL},
  url = {https://arxiv.org/abs/2312.06281},
  journal = {arXiv.org},
  doi = {10.48550/arXiv.2312.06281},
}

@inproceedings{parrish-etal-2022-bbq,
  title = {{BBQ}: A hand-built bias benchmark for question answering},
  author = {Parrish, Alicia  and Chen, Angelica  and Nangia, Nikita  and Padmakumar, Vishakh  and Phang, Jason  and Thompson, Jana  and Htut, Phu Mon  and Bowman, Samuel R.},
  editor = {Muresan, Smaranda  and Nakov, Preslav  and Villavicencio, Aline},
  booktitle = {Findings},
  year = {2021},
  journal = {Findings},
  pages = {2086-2105},
  doi = {10.18653/v1/2022.findings-acl.165},
  publisher = {Association for Computational Linguistics},
}

@inproceedings{hartvigsen-etal-2022-toxigen,
  title = {{T}oxi{G}en: A Large-Scale Machine-Generated Dataset for Adversarial and Implicit Hate Speech Detection},
  author = {Hartvigsen, Thomas  and Gabriel, Saadia  and Palangi, Hamid  and Sap, Maarten  and Ray, Dipankar  and Kamar, Ece},
  editor = {Muresan, Smaranda  and Nakov, Preslav  and Villavicencio, Aline},
  booktitle = {Proceedings of the 60th Annual Meeting of the Association for Computational Linguistics (Volume 1: Long Papers)},
  year = {2022},
  journal = {Proceedings of the 60th Annual Meeting of the Association for Computational Linguistics (Volume 1: Long Papers)},
  pages = {3309-3326},
  doi = {10.18653/v1/2022.acl-long.234},
  publisher = {Association for Computational Linguistics},
}

@inproceedings{kim-rush-2016-sequence,
  title = {Sequence-Level Knowledge Distillation},
  author = {Kim, Yoon  and Rush, Alexander M.},
  editor = {Su, Jian  and Duh, Kevin  and Carreras, Xavier},
  booktitle = {Conference on Empirical Methods in Natural Language Processing},
  year = {2016},
  journal = {Conference on Empirical Methods in Natural Language Processing},
  pages = {1317-1327},
  doi = {10.18653/v1/D16-1139},
  publisher = {Association for Computational Linguistics},
}

@misc{song2026surveyonpolicydistillationlarge,
  title = {A Survey of On-Policy Distillation for Large Language Models},
  author = {Mingyang Song and Mao Zheng},
  year = {2026},
  eprint = {2604.00626},
  archiveprefix = {arXiv},
  primaryclass = {cs.LG},
  url = {https://arxiv.org/abs/2604.00626},
  journal = {arXiv.org},
  doi = {10.48550/arXiv.2604.00626},
}

@misc{sanh2020distilbertdistilledversionbert,
  title = {DistilBERT, a distilled version of BERT: smaller, faster, cheaper and lighter},
  author = {Victor Sanh and Lysandre Debut and Julien Chaumond and Thomas Wolf},
  year = {2020},
  eprint = {1910.01108},
  archiveprefix = {arXiv},
  primaryclass = {cs.CL},
  url = {https://arxiv.org/abs/1910.01108},
  journal = {arXiv.org},
}

@misc{abdin2024phi3technicalreporthighly,
  title = {Phi-3 Technical Report: A Highly Capable Language Model Locally on Your Phone},
  author = {Abdin, Marah and Jacobs, Sam Adé and Awan, A. and Aneja, J. and Awadallah, Ahmed and Awadalla, H. and Bach, Nguyen and Bahree, Amit and Bakhtiari, Arash and Behl, Harkirat Singh and others},
  year = {2024},
  eprint = {2404.14219},
  archiveprefix = {arXiv},
  primaryclass = {cs.CL},
  url = {https://arxiv.org/abs/2404.14219},
  journal = {arXiv.org},
  doi = {10.48550/arXiv.2404.14219},
}

@inproceedings{NEURIPS2024_ad236edc,
  author = {Wang, Yubo and Ma, Xueguang and Zhang, Ge and Ni, Yuansheng and Chandra, Abhranil and Guo, Shiguang and Ren, Weiming and Arulraj, Aaran and He, Xuan and Jiang, Ziyan and others},
  booktitle = {Advances in Neural Information Processing Systems 37},
  doi = {10.52202/079017-3018},
  editor = {A. Globerson and L. Mackey and D. Belgrave and A. Fan and U. Paquet and J. Tomczak and C. Zhang},
  pages = {95266--95290},
  publisher = {Neural Information Processing Systems Foundation, Inc. (NeurIPS)},
  title = {MMLU-Pro: A More Robust and Challenging Multi-Task Language Understanding Benchmark},
  url = {https://proceedings.neurips.cc/paper_files/paper/2024/file/ad236edc564f3e3156e1b2feafb99a24-Paper-Datasets_and_Benchmarks_Track.pdf},
  volume = {37},
  year = {2024},
  journal = {Advances in Neural Information Processing Systems 37},
}

@article{kingma2014adam,
  title = {Adam: A method for stochastic optimization},
  author = {Kingma, Diederik P and Ba, Jimmy},
  journal = {International Conference on Learning Representations},
  year = {2014},
}

@article{reddi2019convergence,
  title = {On the convergence of adam and beyond},
  author = {Reddi, Sashank J and Kale, Satyen and Kumar, Sanjiv},
  journal = {International Conference on Learning Representations},
  year = {2018},
}

@misc{team_mimo-v2-flash_2026,
  title = {{MiMo}-{V2}-{Flash} {Technical} {Report}},
  url = {http://arxiv.org/abs/2601.02780},
  doi = {10.48550/arXiv.2601.02780},
  urldate = {2026-08-26},
  publisher = {arXiv},
  author = {Xiao, Xiao-Yu and Xia, Bing and Yang, Bo and Gao, Bofei and Shen, Bowen and Zhang, Chen and He, Chenhong and Lou, Chiheng and Luo, Fu-Li and Wang, Gang and others},
  year = {2026},
  journal = {arXiv.org},
}

@misc{shenfeld_rls_2025,
  title = {{RL}'s {Razor}: {Why} {Online} {Reinforcement} {Learning} {Forgets} {Less}},
  shorttitle = {{RL}'s {Razor}},
  url = {https://arxiv.org/abs/2509.04259v1},
  language = {en},
  urldate = {2026-04-02},
  journal = {arXiv.org},
  author = {Shenfeld, Idan and Pari, Jyothish and Agrawal, Pulkit},
  year = {2025},
  doi = {10.48550/arXiv.2509.04259},
}

@misc{chen_retaining_2025,
  title = {Retaining by {Doing}: {The} {Role} of {On}-{Policy} {Data} in {Mitigating} {Forgetting}},
  shorttitle = {Retaining by {Doing}},
  url = {https://arxiv.org/abs/2510.18874v2},
  language = {en},
  urldate = {2026-06-01},
  journal = {arXiv.org},
  author = {Chen, Howard and Razin, Noam and Narasimhan, Karthik and Chen, Danqi},
  year = {2025},
  doi = {10.48550/arXiv.2510.18874},
}

@misc{mukherjee_reinforcement_2025,
  title = {Reinforcement {Learning} {Finetunes} {Small} {Subnetworks} in {Large} {Language} {Models}},
  url = {https://arxiv.org/abs/2505.11711v2},
  language = {en},
  urldate = {2026-05-12},
  journal = {Advances in Neural Information Processing Systems 38},
  author = {Mukherjee, Sagnik and Yuan, Lifan and Hakkani-Tur, Dilek and Peng, Hao},
  year = {2025},
  pages = {146434-146453},
  doi = {10.52202/085713-4399},
  publisher = {Neural Information Processing Systems Foundation, Inc. (NeurIPS)},
}

@misc{chu_sft_2025,
  title = {{SFT} {Memorizes}, {RL} {Generalizes}: {A} {Comparative} {Study} of {Foundation} {Model} {Post}-training},
  shorttitle = {{SFT} {Memorizes}, {RL} {Generalizes}},
  url = {http://arxiv.org/abs/2501.17161},
  doi = {10.48550/arXiv.2501.17161},
  urldate = {2026-04-02},
  publisher = {arXiv},
  author = {Chu, Tianzhe and Zhai, Yuexiang and Yang, Jihan and Tong, Shengbang and Xie, Saining and Schuurmans, Dale and Le, Quoc V. and Levine, Sergey and Ma, Yi},
  year = {2025},
  journal = {International Conference on Machine Learning},
}

@misc{shenfeld_self-distillation_2026,
  title = {Self-{Distillation} {Enables} {Continual} {Learning}},
  url = {http://arxiv.org/abs/2601.19897},
  doi = {10.48550/arXiv.2601.19897},
  urldate = {2026-03-02},
  publisher = {arXiv},
  author = {Shenfeld, Idan and Damani, Mehul and Hübotter, Jonas and Agrawal, Pulkit},
  year = {2026},
  journal = {arXiv.org},
}

@misc{zhang_towards_2026,
  title = {Towards {On}-{Policy} {SFT}: {Distribution} {Discriminant} {Theory} and its {Applications} in {LLM} {Training}},
  shorttitle = {Towards {On}-{Policy} {SFT}},
  url = {http://arxiv.org/abs/2602.12222},
  doi = {10.48550/arXiv.2602.12222},
  urldate = {2026-07-13},
  publisher = {arXiv},
  author = {Zhang, Miaosen and Liu, Yishan and Lin, Shuxia and Yang, Xu and Dai, Qi and Luo, Chong and Jiang, Weihao and Hou, Peng and Zeng, Anxiang and Geng, Xin and others},
  year = {2026},
  journal = {arXiv.org},
}

@misc{ming_one-token_2026,
  title = {One-{Token} {Rollout}: {Guiding} {Supervised} {Fine}-{Tuning} of {LLMs} with {Policy} {Gradient}},
  shorttitle = {One-{Token} {Rollout}},
  url = {http://arxiv.org/abs/2509.26313},
  doi = {10.48550/arXiv.2509.26313},
  urldate = {2026-07-13},
  publisher = {arXiv},
  author = {Ming, Rui and Wu, Haoyuan and Hu, Shoubo and He, Zhuolun and Yu, Bei},
  year = {2025},
  journal = {arXiv.org},
}

@misc{li_rethinking_2026,
  title = {Rethinking {On}-{Policy} {Distillation} of {Large} {Language} {Models}: {Phenomenology}, {Mechanism}, and {Recipe}},
  copyright = {Creative Commons Attribution 4.0 International},
  shorttitle = {Rethinking {On}-{Policy} {Distillation} of {Large} {Language} {Models}},
  url = {https://arxiv.org/abs/2604.13016},
  doi = {10.48550/ARXIV.2604.13016},
  language = {en},
  urldate = {2026-06-11},
  publisher = {arXiv},
  author = {Li, Yaxuan and Zuo, Yuxin and He, Bingxiang and Zhang, Jinqian and Xiao, Chaojun and Qian, Cheng and Yu, Tianyu and Gao, Huan-ang and Yang, Wenkai and Liu, Zhiyuan and others},
  year = {2026},
  note = {Version Number: 2},
  journal = {arXiv.org},
}

@misc{tang_few_2025,
  title = {On a few pitfalls in {KL} divergence gradient estimation for {RL}},
  url = {http://arxiv.org/abs/2506.09477},
  doi = {10.48550/arXiv.2506.09477},
  urldate = {2026-06-12},
  publisher = {arXiv},
  author = {Tang, Yunhao and Munos, Rémi},
  year = {2025},
  journal = {arXiv.org},
}

@misc{grattafiori_llama_2024,
  title = {The {Llama} 3 {Herd} of {Models}},
  url = {http://arxiv.org/abs/2407.21783},
  doi = {10.48550/arXiv.2407.21783},
  urldate = {2025-05-22},
  publisher = {arXiv},
  author = {Grattafiori, Aaron and Dubey, Abhimanyu and Jauhri, Abhinav and Pandey, Abhinav and Kadian, Abhishek and Al-Dahle, Ahmad and Letman, Aiesha and Mathur, Akhil and Schelten, A. and Vaughan, Alex and others},
  year = {2024},
  journal = {arXiv},
}

@misc{rofin_how_2026,
  title = {({How}) {Learning} {Rates} {Regulate} {Catastrophic} {Overtraining}},
  url = {http://arxiv.org/abs/2604.13627},
  doi = {10.48550/arXiv.2604.13627},
  urldate = {2026-07-13},
  publisher = {arXiv},
  author = {Rofin, Mark and Varre, Aditya and Flammarion, Nicolas},
  year = {2026},
  journal = {arXiv.org},
}

@inproceedings{catalan-tatjer_learning-forgetting_2026,
  title = {Learning-{Forgetting} {Optimality} in {Supervised} {Finetuning}: {A} {Cliff} {Perspective}},
  shorttitle = {Learning-{Forgetting} {Optimality} in {Supervised} {Finetuning}},
  url = {https://openreview.net/forum?id=aWl2TIUjxP&referrer=%5Bthe%20profile%20of%20Jonas%20Geiping%5D%28%2Fprofile%3Fid%3D~Jonas_Geiping1%29},
  language = {en},
  urldate = {2026-07-15},
  author = {Catalan-Tatjer, Albert and Geiping, Jonas},
}

@misc{xu_speculative_2025,
  title = {Speculative {Knowledge} {Distillation}: {Bridging} the {Teacher}-{Student} {Gap} {Through} {Interleaved} {Sampling}},
  shorttitle = {Speculative {Knowledge} {Distillation}},
  url = {http://arxiv.org/abs/2410.11325},
  doi = {10.48550/arXiv.2410.11325},
  urldate = {2026-05-19},
  publisher = {arXiv},
  author = {Xu, Wenda and Han, Rujun and Wang, Zifeng and Le, Long T. and Madeka, Dhruv and Li, Lei and Wang, William Yang and Agarwal, Rishabh and Lee, Chen-Yu and Pfister, Tomas},
  year = {2024},
  journal = {arXiv.org},
}

@misc{yu_dense_2026,
  title = {Dense {Supervision}, {Sparse} {Updates}: {On} the {Sparsity} and {Geometry} of {On}-{Policy} {Distillation}},
  shorttitle = {Dense {Supervision}, {Sparse} {Updates}},
  url = {http://arxiv.org/abs/2606.13657},
  doi = {10.48550/arXiv.2606.13657},
  urldate = {2026-07-20},
  publisher = {arXiv},
  author = {Yu, Guo and Liu, Wenlin and Hu, Yulan and Ma, Hao-Xuan and Jiang, Jun-Peng and Ye, Han-Jia},
  year = {2026},
  journal = {arXiv.org},
}

@inproceedings{yuan2026f,
  title = {From f (x) and g (x) to f (g (x)): Llms learn new skills in rl by composing old ones},
  author = {Yuan, Lifan and Chen, Weize and Zhang, Yuchen and Cui, Ganqu and Wang, Hanbin and You, Ziming and Ding, Ning and Liu, Zhiyuan and Sun, Maosong and Peng, Hao},
  booktitle = {arXiv.org},
  volume = {2026},
  pages = {147547--147574},
  year = {2026},
  journal = {arXiv.org},
  doi = {10.48550/arXiv.2509.25123},
}

@misc{zhu2026facesonpolicydistillationpitfalls,
  title = {The Many Faces of On-Policy Distillation: Pitfalls, Mechanisms, and Fixes},
  author = {Siqi Zhu and Xuyan Ye and Hongyu Lu and Weiye Shi and Ge Liu},
  year = {2026},
  eprint = {2605.11182},
  archiveprefix = {arXiv},
  primaryclass = {cs.AI},
  url = {https://arxiv.org/abs/2605.11182},
  journal = {arXiv.org},
  doi = {10.48550/arXiv.2605.11182},
}

@article{lambert2024tulu3,
  title = {Tülu 3: Pushing Frontiers in Open Language Model Post-Training},
  author = {Nathan Lambert and Jacob Morrison and Valentina Pyatkin and Shengyi Huang and Hamish Ivison and Faeze Brahman and Lester James V. Miranda and Alisa Liu and Nouha Dziri and Shane Lyu and others},
  year = {2024},
  email = {tulu@allenai.org},
  journal = {arXiv.org},
  doi = {10.48550/arXiv.2411.15124},
}

@article{cui2025process,
  title = {Process reinforcement through implicit rewards},
  author = {Cui, Ganqu and Yuan, Lifan and Wang, Zefan and Wang, Hanbin and Zhang, Yuchen and Li, Wendi and He, Bingxiang and Fan, Yuchen and Yu, Tianyu and Xu, Qi-Xin and others},
  journal = {Trans. Mach. Learn. Res.},
  year = {2025},
  doi = {10.48550/arXiv.2502.01456},
}

@misc{hong2024orpo,
  title = {ORPO: Monolithic Preference Optimization without Reference Model},
  author = {Jiwoo Hong and Noah Lee and James Thorne},
  year = {2024},
  eprint = {2403.07691},
  archiveprefix = {arXiv},
  primaryclass = {cs.CL},
  journal = {Proceedings of the 2024 Conference on Empirical Methods in Natural Language Processing},
  pages = {11170-11189},
  doi = {10.18653/v1/2024.emnlp-main.626},
  publisher = {Association for Computational Linguistics},
}

@misc{yuan2024advancing,
  title = {Advancing LLM Reasoning Generalists with Preference Trees},
  author = {Lifan Yuan and Ganqu Cui and Hanbin Wang and Ning Ding and Xingyao Wang and Jia Deng and Boji Shan and Huimin Chen and Ruobing Xie and Yankai Lin and others},
  year = {2024},
  eprint = {2404.02078},
  archiveprefix = {arXiv},
  primaryclass = {cs.AI},
  journal = {arXiv.org},
  doi = {10.48550/arXiv.2404.02078},
}

@misc{meng2024simposimplepreferenceoptimization,
  title = {SimPO: Simple Preference Optimization with a Reference-Free Reward},
  author = {Yu Meng and Mengzhou Xia and Danqi Chen},
  year = {2024},
  eprint = {2405.14734},
  archiveprefix = {arXiv},
  primaryclass = {cs.CL},
  url = {https://arxiv.org/abs/2405.14734},
  journal = {Advances in Neural Information Processing Systems 37},
  pages = {124198-124235},
  doi = {10.52202/079017-3946},
  publisher = {Neural Information Processing Systems Foundation, Inc. (NeurIPS)},
}

@misc{wang2024mathshepherdverifyreinforcellms,
  title = {Math-Shepherd: Verify and Reinforce LLMs Step-by-step without Human Annotations},
  author = {Wang, Peiyi and Li, Lei and Shao, Zhihong and Xu, R. X. and Dai, Damai and Li, Yifei and Chen, Deli and Y.Wu and Sui, Zhifang},
  year = {2024},
  eprint = {2312.08935},
  archiveprefix = {arXiv},
  primaryclass = {cs.AI},
  url = {https://arxiv.org/abs/2312.08935},
  journal = {Proceedings of the 62nd Annual Meeting of the Association for Computational Linguistics (Volume 1: Long Papers)},
  pages = {9426-9439},
  doi = {10.18653/v1/2024.acl-long.510},
  publisher = {Association for Computational Linguistics},
}

@misc{deepseekai2026deepseekv4highlyefficientmilliontoken,
  title = {DeepSeek-V4: Towards Highly Efficient Million-Token Context Intelligence},
  author = {DeepSeek-AI and Anyi Xu and Bangcai Lin and Bing Xue and Bingxuan Wang and Bingzheng Xu and Bochao Wu and Bowei Zhang and Chaofan Lin and Chen Dong and others},
  year = {2026},
  eprint = {2606.19348},
  archiveprefix = {arXiv},
  primaryclass = {cs.CL},
  url = {https://arxiv.org/abs/2606.19348},
  journal = {arXiv},
}

@misc{armandpour_unmasking_2026,
  title = {Unmasking {On}-{Policy} {Distillation}: {Where} {It} {Helps}, {Where} {It} {Hurts}, and {Why}},
  shorttitle = {Unmasking {On}-{Policy} {Distillation}},
  url = {http://arxiv.org/abs/2605.10889},
  doi = {10.48550/arXiv.2605.10889},
  urldate = {2026-07-22},
  publisher = {arXiv},
  author = {Armandpour, Mohammadreza and Ilhan, Fatih and Harrison, David and Jaiswal, Ajay and Hoang, Duc N. M. and Faghri, Fartash and Zhang, Yizhe and Cho, Minsik and Farajtabar, Mehrdad},
  year = {2026},
  journal = {arXiv.org},
}

@misc{jia_asymmetric_2026,
  title = {Asymmetric {On}-{Policy} {Distillation}: {Bridging} {Exploitation} and {Imitation} at the {Token} {Level}},
  shorttitle = {Asymmetric {On}-{Policy} {Distillation}},
  url = {http://arxiv.org/abs/2605.06387},
  doi = {10.48550/arXiv.2605.06387},
  urldate = {2026-07-22},
  publisher = {arXiv},
  author = {Jia, Nan and Yang, Haojin and Ma, Xing and Lian, Jiesong and Zhang, Shuailiang and Zhang, Weipeng and Zeng, Ke and Cai, Xunliang and Sun, Zequn},
  year = {2026},
  journal = {arXiv.org},
}

@misc{li_filter_2026,
  title = {Filter, {Then} {Reweight}: {Rethinking} {Optimization} {Granularity} in {On}-{Policy} {Distillation}},
  shorttitle = {Filter, {Then} {Reweight}},
  url = {http://arxiv.org/abs/2606.02684},
  doi = {10.48550/arXiv.2606.02684},
  urldate = {2026-07-22},
  publisher = {arXiv},
  author = {Li, Yuying and Zheng, Leqi and Yu, Yongzi and Zhou, Wenrui and Zhong, Xuchang and Hu, Xing and Jin, Jing and Yuan, Hangjie and Feng, Tao},
  year = {2026},
  journal = {arXiv.org},
}

@misc{zhang_good_2026,
  title = {Good {SFT} {Optimizes} for {SFT}, {Better} {SFT} {Prepares} for {Reinforcement} {Learning}},
  url = {http://arxiv.org/abs/2602.01058},
  doi = {10.48550/arXiv.2602.01058},
  urldate = {2026-02-06},
  publisher = {arXiv},
  author = {Zhang, Dylan and Xu, Yufeng and Wang, Haojin and Chen, Qingzhi and Peng, Hao},
  year = {2026},
  journal = {arXiv.org},
}

@inproceedings{yue2025does,
  title = {Does Reinforcement Learning Really Incentivize Reasoning Capacity in {LLM}s Beyond the Base Model?},
  author = {Yang Yue and Zhiqi Chen and Rui Lu and Andrew Zhao and Zhaokai Wang and Yang Yue and Shiji Song and Gao Huang},
  booktitle = {Advances in Neural Information Processing Systems 38},
  year = {2025},
  journal = {Advances in Neural Information Processing Systems 38},
  pages = {64304-64339},
  doi = {10.52202/085713-1933},
  publisher = {Neural Information Processing Systems Foundation, Inc. (NeurIPS)},
}

@article{team2026kimik3,
  title = {Kimi k3: Open frontier intelligence},
  author = {Bai, Kimi Team Yifan and Bai, Yifan and Bao, Yiping and M, C. and Cai, Jianfeng and Cai, Xin-Hao and Cao, Peizhou and Cao, Yuxuan and Chai, Ziwei and Charles, Y. and others},
  journal = {arXiv},
  year = {2026},
}
\bibliographystyle{abbrvnat}

\clearpage
\appendix
\onecolumn
\etocdepthtag.toc{appendix}

\begingroup
    \etocsettocstyle{\section*{Appendix Contents}}{}
    \etocsettagdepth{main}{none}
    \etocsettagdepth{appendix}{subsection}
    \tableofcontents
\endgroup

\clearpage

\section{Additional Experimental Results}

\subsection{Training without, rather than with, gradient clipping}
\label{app:exp:gradient_clipping}

\textbf{Setup.} our main experiments clip the global gradient norm at $1.0$, and as shown in \Cref{fig:app:gradient_norms}, the global gradient norm measured before clipping exceeds our threshold of $1.0$ during most training steps. To test whether the limited effect of rollout policy observed in \Cref{sec:on_vs_offpd} is caused by the use of gradient clipping, we repeat the Countdown-3 experiments with clipping disabled. For each combination of OnPD and OffPD with forward and reverse KL, we sweep learning rates from $1\times10^{-5}$ to $5\times10^{-5}$ while keeping all other experimental conditions fixed.

\begin{figure}
    \centering
    \includegraphics[width=0.8\linewidth]{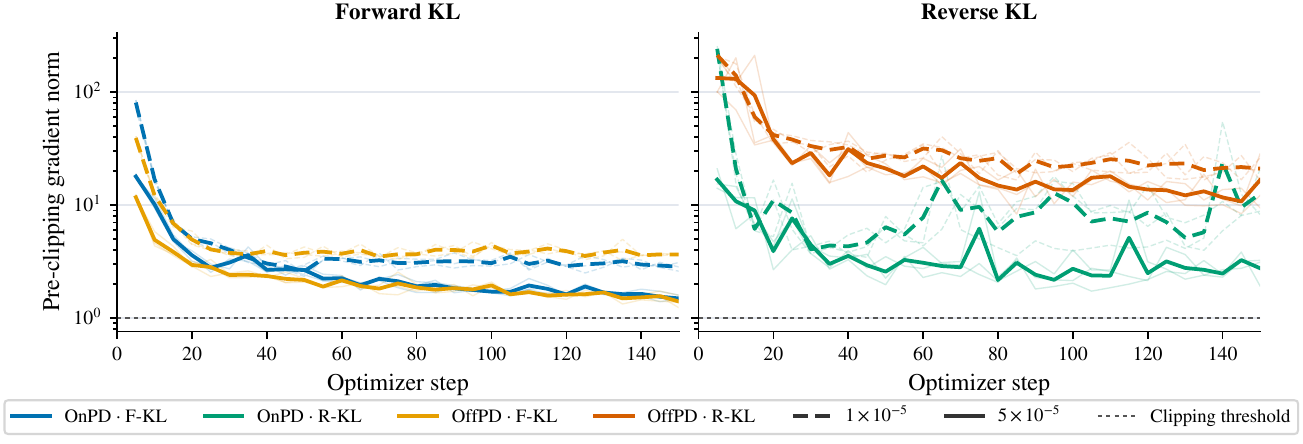}
    \caption{\textbf{Gradient clipping is active throughout Countdown-3 training.}
We show the global gradient norm before clipping for the Countdown-3 runs which use gradient clipping, with thin lines denoting individual seeds and thick lines their mean across three seeds. The horizontal line marks the clipping threshold of $1.0$. Every recorded norm exceeds this threshold, demonstrating that clipping rescales the gradients at every logged optimization step. Reverse KL produces substantially larger and more variable gradient norms than forward KL, including pronounced early spikes, supporting the role of clipping in stabilising reverse-KL training.}
    \label{fig:app:gradient_norms}
\end{figure}

\textbf{Results.} The results in \cref{fig:no_grad_cliping} reveal a pronounced difference between forward and reverse KL when gradient clipping is removed. Forward KL remains effective at larger learning rates, particularly with OffPD, whereas reverse-KL performance deteriorates substantially. This reinforces the analysis in \Cref{sec:rollout_spectrum}: the token-level reverse-KL gradient is weighted by an unbounded log-probability ratio (see \Cref{eq:gradient_reverse_kl}), allowing individual tokens to produce unusually large gradients. Without clipping, these high-magnitude contributions can dominate an update and increase optimisation variance, making reverse-KL training less stable.

When it comes to catastrophic forgetting and parameter-update sparsity, here, even without the gradient clipping, the performance differences are still largely determined by the learning rate, with some extra variability coming from the rollout policy (OnPD with forward KL leads to particularly low OOD performance after training). Overall, this ablation suggests that gradient clipping is important for stabilising reverse-KL optimisation, but does not explain the generally limited effect of rollout policy observed in our main experiments.

\begin{figure}[h!]
    \centering
    \includegraphics[width=1\linewidth]{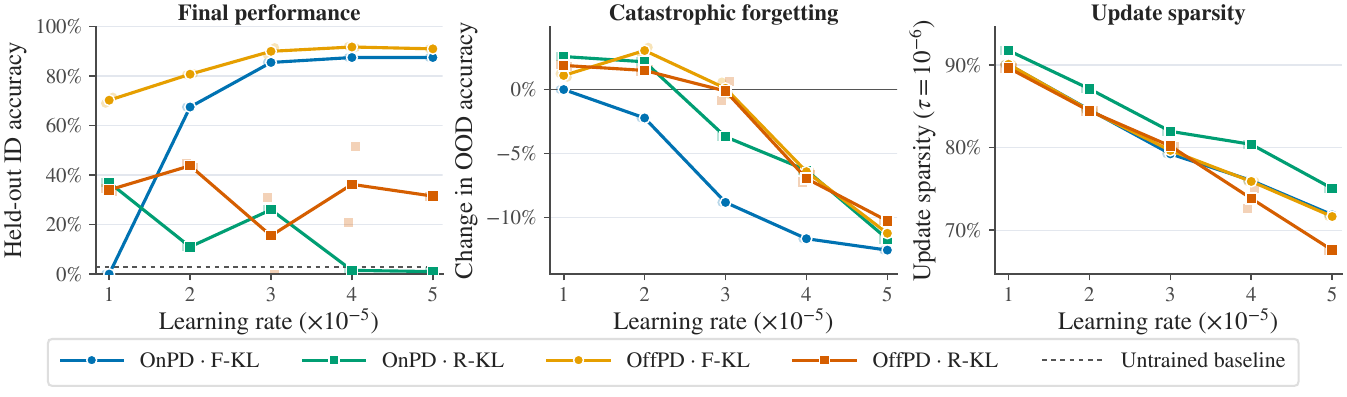}
    \caption{\textbf{The effect of learning-rate \textit{without} gradient clipping.} Removing clipping reveals a substantial difference in final performance between forward and reverse KL, with reverse KL suffering from unstable training, while learning rate remains the dominant determinant of forgetting and parameter-update sparsity.}
    \label{fig:no_grad_cliping}
\end{figure}

\textbf{Why does gradient clipping affect final performance, but not forgetting or update sparsity?} Gradient clipping appears important for stabilising reverse KL, but does not explain the dominant effect of learning rate over rollout policy on forgetting and parameter-update sparsity. One plausible explanation lies in Adam’s dynamics. Adam updates parameters using $\hat{m}_t/(\sqrt{\hat{v}_t}+\epsilon)$, where $m_t$ and $v_t$ are moving averages of the gradient and its coordinate-wise square \citep{kingma2014adam}. Because this normalisation largely cancels uniform changes in gradient scale, a larger raw gradient norm does not necessarily produce a proportionally larger parameter update. An isolated unclipped gradient spike can nevertheless disrupt optimisation: it briefly biases $m_t$ towards the direction of an atypical batch while inflating the more slowly decaying $v_t$, which subsequently suppresses corrective updates \citep{reddi2019convergence}. This may be particularly harmful for OnPD, because an early disruptive update changes the policy generating subsequent rollouts, producing poorer training trajectories, while the inflated second moment $v_t$ inhibits recovery.

Crucially, this mechanism can impair target-task learning without increasing forgetting: if Adam normalises the initial spike and then suppresses later updates, the cumulative parameter displacement (including in the sensitive directions related to prior capabilities) may remain small. Consequently, gradient clipping can stabilise training, by removing the disruptive spikes in the gradient norms, while having much less influence on forgetting and parameter-update sparsity. This result is consistent with prior evidence that gradient clipping has limited influence on accumulated update sparsity in LLM post-training \citep{mukherjee_reinforcement_2025}.

\clearpage
\subsection{Training with Sampled KL, Rather than Full-Vocabulary KL}
\label{app:sampled_kl}

\begin{figure}
    \centering
    \includegraphics[width=\linewidth]{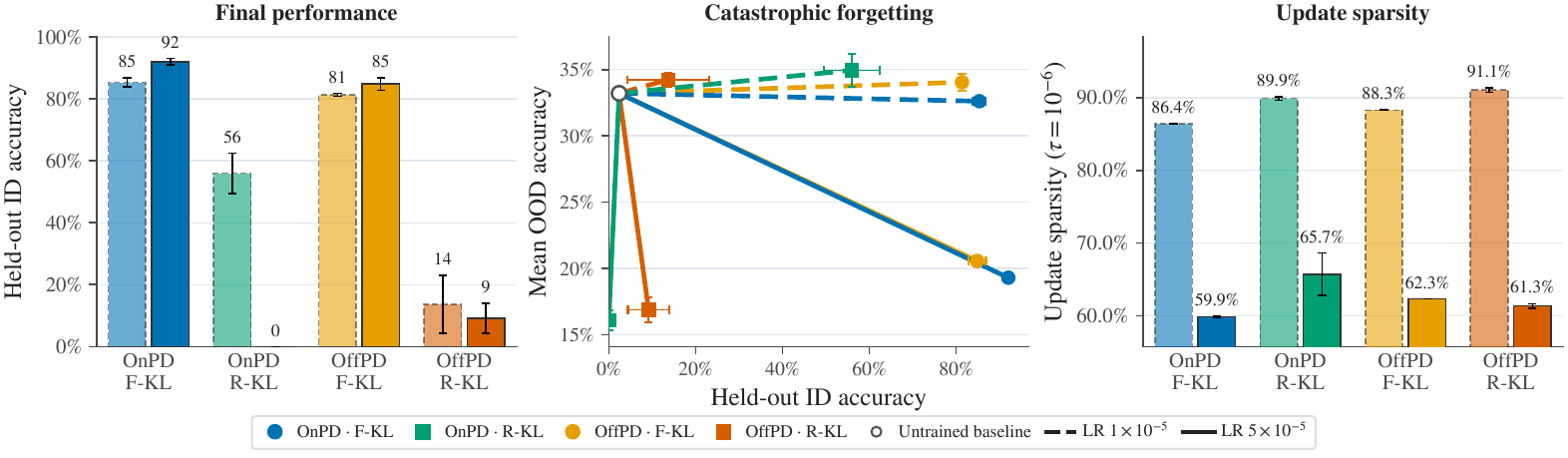}
    \caption{\textbf{Performance after training with sampled KL.} Error bars show SEMs computed over $N=3$ seeds. Sampled-KL training exhibits the same qualitative trends as full-vocabulary KL. Forward KL achieves substantially higher and more robust final performance than reverse KL, while differences between OnPD and OffPD are comparatively limited. Catastrophic forgetting and update sparsity are governed primarily by the learning rate. These results indicate that replacing full-vocabulary KL with a single-sample gradient estimator does not materially alter our main conclusions regarding rollout policy, KL direction, and learning rate.}
    \label{fig:app:sampled_kl}
\end{figure}

\textbf{Setup.} To test whether our conclusions depend on computing the KL divergence over the full vocabulary, we repeat the Countdown-3 experiments with single-sample stochastic estimators, using Llama-3.2-1B-Instruct, three seeds, learning rates \(1\times10^{-5}\) and \(5\times10^{-5}\), and 150 training steps. Let $\pi_S^\theta(\cdot\mid x, y_{<n})$ and $\pi_T(\cdot\mid x, y_{<n})$ denote the student and teacher next-token distributions conditioned on rollout prefix $(x, y_{<n})$. OnPD constructs these prefixes from student-generated rollouts, whereas OffPD uses teacher-generated rollouts. For forward KL, the gradient is
\[
\nabla_\theta D_{\mathrm{F-KL}}(\pi^\theta_S, \pi_T)
=
-\mathbb{E}_{y_n\sim \pi_T}
\left[\nabla_\theta\log \pi^\theta_S(y_n)\right].
\]
We therefore draw one token \(y_n\sim \pi_T(\cdot\mid x,y_{<n})\) and use
\[
\widehat{\mathcal L}_{\mathrm{F\text{-}KL}}
=
-\log \pi_S^\theta(y_n\mid x, y_{<n}).
\]
The omitted teacher log-probability is constant with respect to \(\theta\), so this gives an unbiased estimator of the full forward-KL gradient. For reverse KL,
\[
\nabla_\theta D_{\mathrm{R-KL}}(\pi^\theta_S, \pi_T)
=
\mathbb{E}_{y_n\sim \pi_S^\theta}
\left[
\bigl(\log \pi_S^\theta(y_n\mid x, y_{<n})-\log \pi_T(y_n\mid x, y_{<n})\bigr)
\nabla_\theta\log \pi_S^\theta(y_n\mid x, y_{<n})
\right].
\]
Accordingly, we draw \(x_n\sim \pi_S^\theta(\cdot\mid x, y_{<n})\) and optimise the score-function surrogate
\[
\widehat{\mathcal L}_{\mathrm{R\text{-}KL}}
=
\operatorname{sg}\!\left[
\log \pi_S^\theta(y_n\mid x, y_{<n})-\log \pi_T(y_n\mid x, y_{<n})
\right]
\log \pi_S^\theta(y_n\mid x, y_{<n}),
\]
where \(\operatorname{sg}\) denotes stop-gradient operation. Thus, the rollout policy determines the distribution of prefixes, while the KL direction determines the distribution from which the additional token is sampled: the teacher for forward KL and the student for reverse KL. This token is sampled independently at every valid completion position and need not coincide with the token appearing in the rollout. We average the sampled losses over all valid tokens within each completion and subsequently over the training batch, keeping the remaining training and evaluation settings matched to the full-vocabulary experiments.

\textbf{Results.} The results presented in \Cref{fig:app:sampled_kl} for training with sampled KL show patterns which we have also identified when using full-vocabulary KL. Namely, we observe that OnPD and OffPD lead to very similar final performance when controlling for the KL direction and learning rate. Further, forward KL is much more robust to the rollout policy than reverse KL, further supporting the findings of \Cref{sec:rollout_spectrum}. When it comes to catastrophic forgetting and update sparsity, we observe that both of these properties are primarily governed by the learning rate, similarly as when training using the full-vocabulary KL. Thus, we conclude that using the full-vocabulary KL in our main experimental results is not significantly affecting our final conclusions, and does not significantly contribute to the observed limited effects of rollout policy in the settings we study.

\clearpage
\subsection{Training on a mathematical reasoning dataset requiring longer rollouts}
\label{app:math_qwen}

\textbf{Setup.}
To test whether our conclusions extend to a more challenging reasoning domain, where finding the correct answer requires producing longer reasoning traces, we distill Qwen2.5-Math-1.5B-Instruct \citep{yang2024qwen25mathtechnicalreportmathematical} into Qwen2.5-1.5B-Instruct using the 20K-example Numina--MATH training set described below. We consider all combinations of OnPD and OffPD rollouts with forward and reverse full-vocabulary KL, using learning rates of $1\times10^{-5}$ and $5\times10^{-5}$. OnPD completions are sampled from the student, whereas OffPD completions are sampled from the teacher; in both cases, we use generation temperature $0.7$, top-$p=0.8$ (following guidelines from \citet{yang2024qwen25mathtechnicalreportmathematical}), and a maximum completion length of 1,024 tokens. The student and teacher distributions used to compute the KL objective have temperature $1$. We train all model parameters for 1,000 optimization steps with an effective batch size of 128 and a constant learning rate following 10 warm-up steps. We evaluate checkpoints every 100 optimisation steps and report, for each condition, the maximum observed MATH-500 accuracy along its training trajectory. This metric is intended to compare the peak performance reached under a common training and evaluation budget, rather than to estimate the held-out performance of a checkpoint selected by an independent validation procedure. Because checkpoint selection and reporting use the same evaluations, the reported maxima may be optimistic, particularly for conditions with more variable learning curves. We therefore additionally report the complete trajectories and final-checkpoint performance. Catastrophic forgetting and update sparsity are evaluated at the corresponding selected checkpoint, and also reported over the course of training.

\paragraph{Numina--MATH training set.}
We construct a 20,000-example mathematical-reasoning dataset by combining all 7,498 problems from the training split of MATH \citep{hendrycks2021measuringmathematicalproblemsolving} with 12,502 synthetic problems sampled from the \texttt{synthetic\_math} subset of NuminaMath-CoT \citep{numina_math_datasets}. For each NuminaMath example, we require a non-empty problem and solution containing a non-empty, balanced final \texttt{\textbackslash boxed\{\}} answer. We normalize problem text by lowercasing and collapsing whitespace, remove duplicates, and exclude every synthetic problem matching either the MATH training or test split. We additionally restrict synthetic solutions to at most 384 tokens, measured using the DeepSeek-R1-Distill-Llama-8B tokenizer, and sample the remaining examples using seed 42. Responses are considered correct when their final boxed answer is symbolically equivalent to the reference answer. We use the prompt format recommended for Qwen2.5-Math. Each example contains a system message stating, ``Please reason step by step, and put your final answer within \texttt{\textbackslash boxed\{\}},'' followed by the problem as the user message. We render these messages using the model's native chat template and append the assistant-generation prompt. No demonstrations or reference solutions are included. We use the same prompt format for teacher and student rollouts, KL scoring, and MATH-500 evaluation.

\paragraph{Inference during training.} We used vLLM \citep{kwon2023efficient} as a colocated inference engine for efficient rollout generation. In the OnPD setting, vLLM samples from the student policy, whose latest weights are synchronized with the training model before each rollout batch. To correct residual differences between the vLLM sampling distribution \(q\) and the policy evaluated by the training backend \(p_\theta\), following \citet{shenfeld_self-distillation_2026} we assign each completion the detached importance weight
\[
w(y)=\frac{1}{|y|}\sum_{t=1}^{|y|}\min\!\left(2,\exp\!\left[\log p_\theta(y_t\mid x,y_{<t})-\log q(y_t\mid x,y_{<t})\right]\right).
\]
This sequence-level weight is applied uniformly to the token-level distillation loss. Clipping the token ratios at \(2\) limits the variance of the correction, while synchronizing weights at every optimizer step keeps the generated trajectories on-policy.

\begin{figure}[h]
    \centering
    \includegraphics[width=\linewidth]{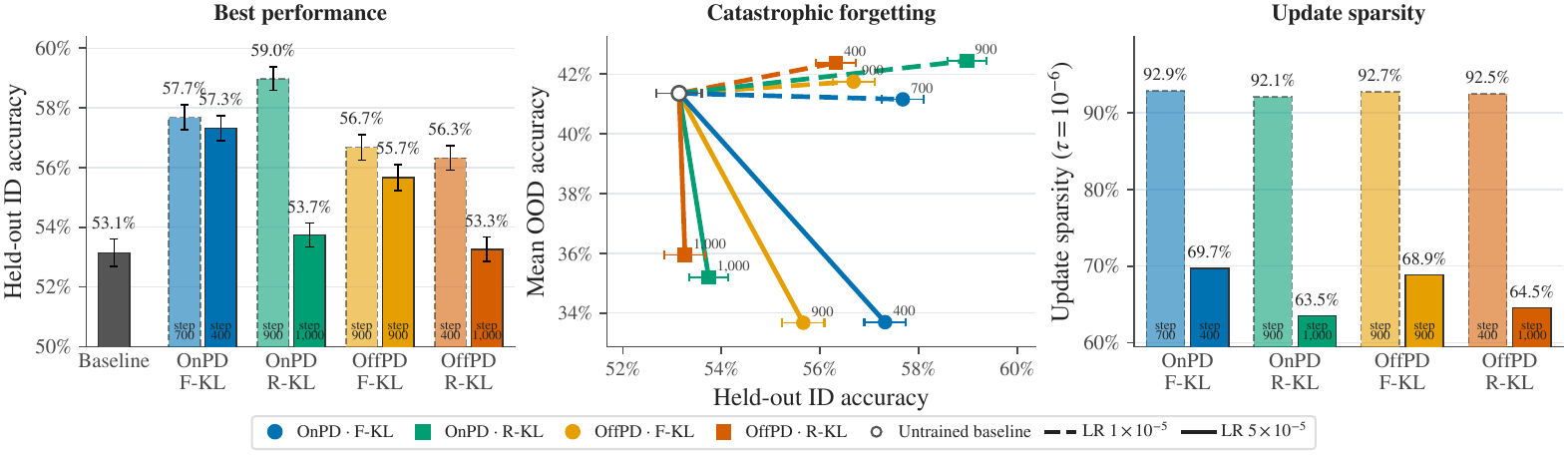}
    \caption{\textbf{Math distillation results on MATH-500.} We report the best MATH-500 performance attained during training, movement from the untrained baseline in ID--OOD space, and parameter-update sparsity at the corresponding selected checkpoint. In this regime, OnPD with reverse-KL outperforms other conditions. However, the learning rate still largely controls catastrophic forgetting and parameter-update sparsity. The performance of forward KL is also more robust to the changes in rollout policy and learning rate.}
    \label{fig:app:math_performance}
\end{figure}

\begin{figure}[h]
    \centering
    \includegraphics[width=\linewidth]{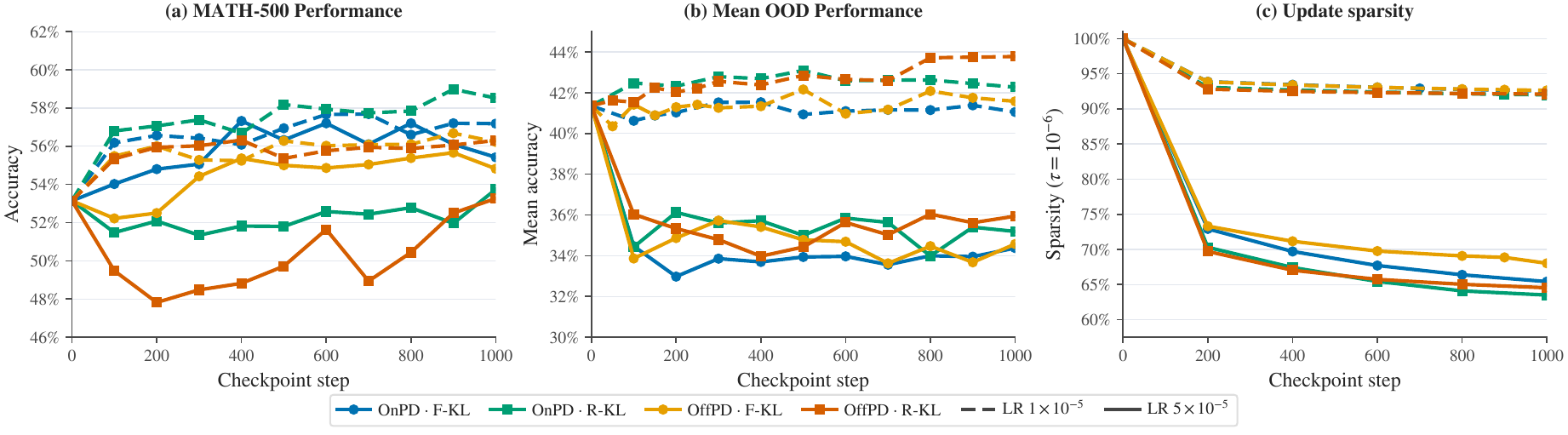}
    \caption{\textbf{MATH-500 performance throughout training.} Accuracy is evaluated every 100 optimization steps using ten generations per problem. Performance generally improves beyond the untrained baseline, although the optimal checkpoint varies considerably across conditions. The lower learning rate produces more stable learning trajectories, while the larger learning rate is particularly unstable with reverse KL. Looking at catastrophic forgetting and parameter-update sparsity, differences between OnPD and OffPD remain comparatively modest relative to the effects of learning rate and KL direction.}
    \label{fig:app:math_learning_curves}
\end{figure}

\textbf{Results.} 
The results in \Cref{fig:app:math_performance} largely reproduce the patterns observed in our main experiments. All conditions improve upon the untrained MATH-500 accuracy of $53.1\%$, with the selected checkpoints obtaining between $53.3\%$ and $59.0\%$. When controlling for the KL direction and learning rate, we observe that in this regime OnPD outperforms OffPD on average, which is consistent with the finding that OnPD helps with generalisation to more difficult tasks (given that MATH-500 is a particularly challenging subset of the MATH test set). More broadly, we observe that again, forward KL seems to be more robust to the rollout policy than reverse KL, with the performance of the reverse KL being highly dependent on the learning rate. Most importantly, \textbf{the learning rate continues to govern catastrophic forgetting and update sparsity}. At $1\times10^{-5}$, all methods approximately preserve the baseline OOD performance while retaining $92$--$93\%$ update sparsity. Increasing the learning rate to $5\times10^{-5}$ reduces mean OOD accuracy from approximately $41.4\%$ to $33.7$--$35.9\%$ and reduces update sparsity to $63$--$70\%$. Thus, even in a longer-horizon mathematical-reasoning setting, the rollout policy has a comparatively limited effect, while the learning rate primarily determines the extent of forgetting and the density of parameter updates. We additionally provide the learning curves of the models in \Cref{fig:app:math_learning_curves}, demonstrating how the MATH-500 performance, OOD performance, and update sparsity vary over the course of training. Just as in the case of other datasets (cf. \Cref{fig:app:learning_curves}), the majority of the changes to the OOD performance happen in the initial steps of training.

\clearpage
\subsection{Output Coverage and Generalisation at the Higher Learning Rate}
\label{app:exp:countdown_spectrum}

\Cref{fig:app:countdown_spectrum_lr5e5} complements \Cref{fig:countdown_spectrum} with learning rate $5\times10^{-5}$. Both figures use checkpoint 150 and matched distillation seeds 42, 43, and 44. Each evaluation uses the first 200 test questions, ten sampled responses per question, and evaluation seed 42. Countdown-3 uses temperature $1.0$; Countdown-4E uses temperature $0.5$ and a 4,096-token completion limit. Countdown-4 has four operands instead of three which increases the search space significantly, with operands drawn from 1--10. Shading denotes the pass@1-to-pass@10 span, not uncertainty; error bars denote SEM across distillation seeds. Under the larger learning rate, the performance generally decreases significantly, particularly when training with Reverse KL. Hence, we consider the trends observed in these figures less informative than those for learning rate $1\times10^{-5}$.

\begin{figure}[htbp]
    \centering
    \includegraphics[width=\linewidth]{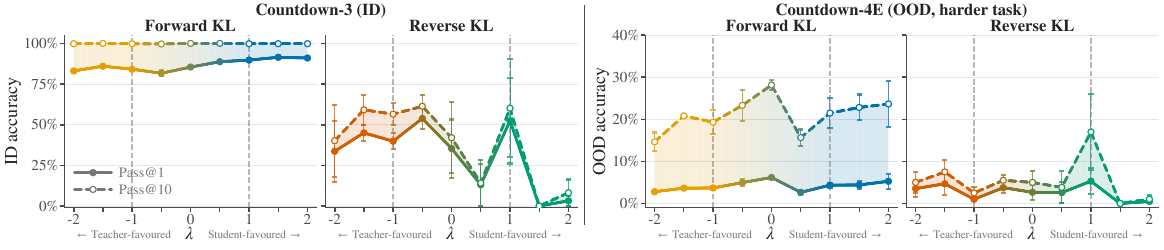}
    \caption{\textbf{Output coverage and generalisation at LR $5\times10^{-5}$ for Llama3.2-1B.} Same protocol and plotting conventions as \Cref{fig:countdown_spectrum}; means and SEM over three distillation checkpoints.}
    \label{fig:app:countdown_spectrum_lr5e5}
\end{figure}

\clearpage
\subsection{Additional Results on Qwen2.5-1.5B-Instruct}
\label{app:exp:qwen25}

To assess whether our findings generalise beyond the Llama~3 family, we repeat the main experiments using Qwen2.5-1.5B-Instruct as the student and Qwen2.5-7B-Instruct as the teacher \citep{qwen2025qwen25technicalreport}. We strengthen the teacher using the same training pipeline described in Appendix~\ref{app:details:trained_teacher}. We use Qwen2.5 rather than Qwen3 to avoid potential confounding effects from Qwen3's explicit reasoning training which can be activated in the ``thinking mode''.

\textbf{Performance and forgetting.} Figure~\ref{fig:app:qwen_id_eval} reports final held-out performance across the three datasets, while Figure~\ref{fig:app:qwen_ood_forgetting} shows the corresponding changes in OOD performance. The results are consistent with those obtained for Llama3 in the main text. We find no consistent performance advantage for either OnPD or OffPD across tasks and learning rates. Instead, KL direction has a clearer effect on target-task performance, with forward KL generally outperforming reverse KL, while learning rate remains the primary determinant of catastrophic forgetting. These results suggest that our main conclusions are robust across the two model families.


\begin{figure}[h]
    \centering
    \includegraphics[width=\linewidth]{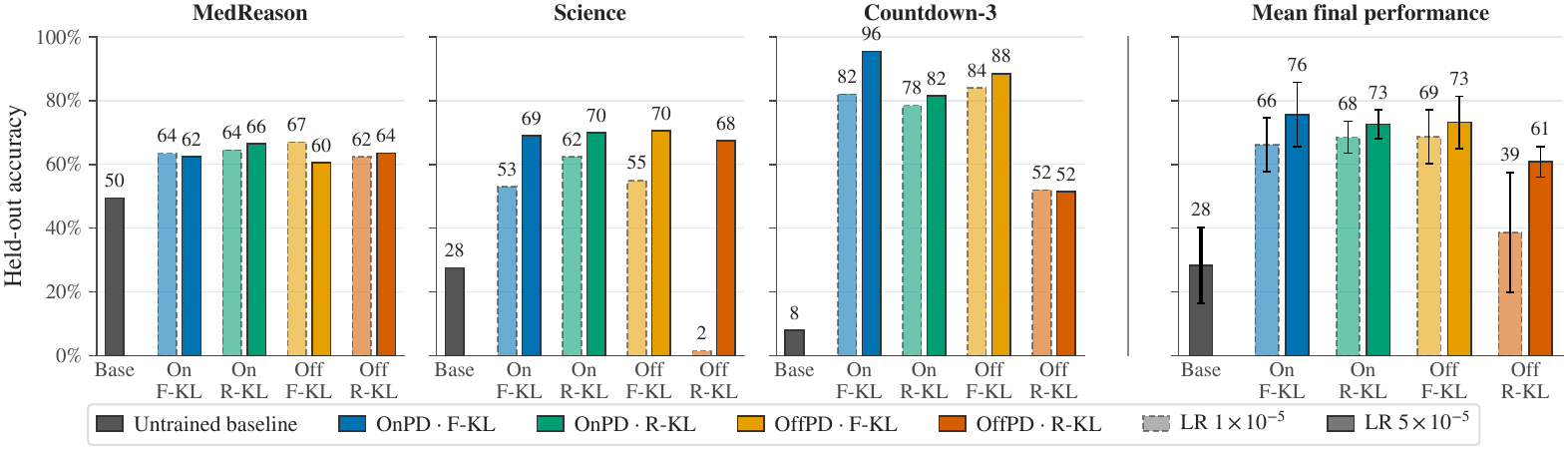}
    \caption{\textbf{Comparison of the effectiveness of OnPD and OffPD training for Qwen2.5-1.5B.} Dataset panels report mean held-out accuracy over $N=3$ repeats; error bars show SEM. The right-most panel shows the average across the nine runs. Across the evaluated settings, KL direction appears to have a larger effect on final performance than rollout policy, with forward KL outperforming reverse KL on average.}
    \label{fig:app:qwen_id_eval}
\end{figure}

\begin{figure}[h]
    \centering
    \begin{subfigure}[t]{0.48\textwidth}
        \centering
        \includegraphics[width=\linewidth]{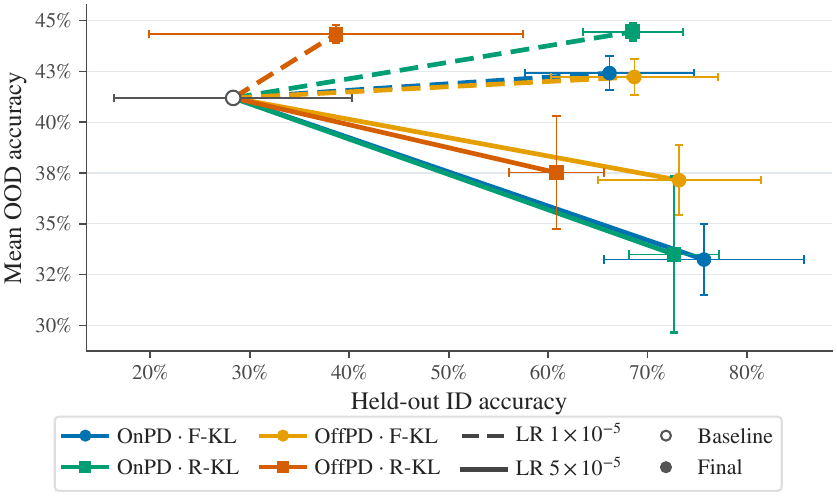}
        \caption{Catastrophic Forgetting}
        \label{fig:app:qwen_ood_forgetting}
    \end{subfigure}
    \hfill
    \begin{subfigure}[t]{0.48\textwidth}
        \centering
        \includegraphics[width=\linewidth]{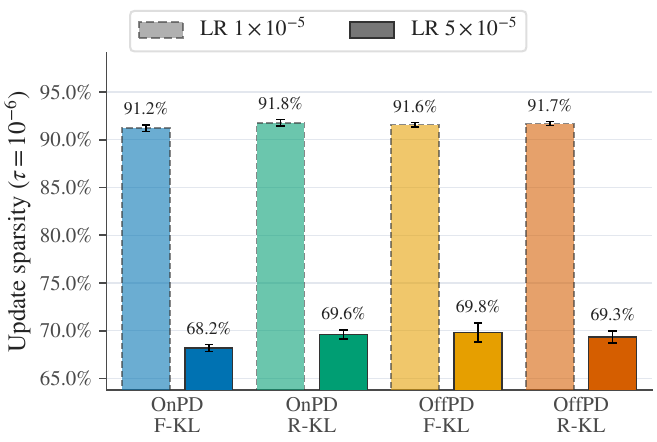}
        \caption{Parameter-Update Sparsity}
        \label{fig:sparsity-qwen}
    \end{subfigure}

    \caption{\textbf{Catastrophic forgetting and parameter-update sparsity for Qwen2.5-1.5B.} Similarly as for Llama3.2-1B, we observe that for Qwen2.5-1.5B learning rate is a stronger predictor of the degree of catastrophic forgetting and the parameter-update sparsity than the rollout policy. \textit{Left:} The mean OOD accuracy consistently decreases over the course of training when the larger learning rate of $5 \times 10^{-5}$ is used, but increases when the smaller learning rate is used instead. OffPD does not lead to stronger forgetting in any of the conditions. \textit{Right:} Similarly, parameter-update sparsity does not vary significantly between the rollout policies, but depends strongly on the learning rate. For both figures, error bars mark SEM computed over the three datasets.}
    \label{fig:qwen-forgetting-sparsity}
\end{figure}

\paragraph{Sensitivity to rollout policy.}
Further, we repeat the rollout-spectrum experiment from \Cref{fig:rollout_spectrum} using Qwen2.5-1.5B. As shown in \Cref{fig:app:qwen_rollout_spectrum}, forward-KL remains comparatively robust across the spectrum, whereas reverse-KL is highly sensitive to teacher-favoured rollouts ($\lambda<0$), with several settings exhibiting severe performance degradation. For $\lambda\geq0$, reverse-KL recovers substantially and approaches forward-KL performance. The learning rate primarily controls forgetting and update sparsity. At $5\times10^{-5}$, updates are markedly less sparse and cause substantially more forgetting than at $10^{-5}$. Surprisingly, within this high-learning-rate regime, student-favoured rollouts ($\lambda>0$) produce increasingly severe forgetting under forward-KL and generally greater forgetting under reverse-KL, although the latter is less monotonic. Thus, the Qwen results reproduce the main qualitative asymmetry observed with Llama-3.2, while providing additional evidence showing that on-policy rollouts \textit{need not} reduce catastrophic forgetting.

\begin{figure}
    \centering
    \includegraphics[width=\linewidth]{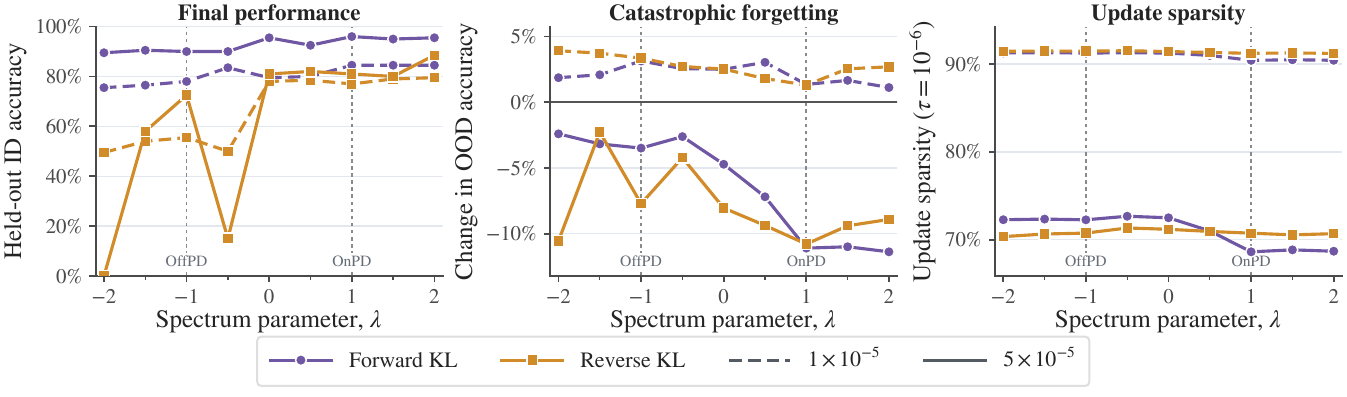}
    \caption{\textbf{Sensitivity of Qwen2.5-1.5B to the rollout policy.} Results at checkpoint 150 across the contrastive-decoding spectrum, using seed 42. The performance of forward-KL is robust to the rollout policy, while the performance of reverse-KL degrades sharply under teacher-favoured rollouts. Consistent with previous findings, larger learning rates amplify forgetting, particularly for student-favoured rollouts, while also producing denser parameter updates.}
    \label{fig:app:qwen_rollout_spectrum}
\end{figure}

\textbf{Output coverage and generalisation along the rollout-policy spectrum.}
\Cref{fig:app:qwen_countdown_spectrum} repeats the Countdown-3/Countdown-4E comparison in \Cref{fig:countdown_spectrum} for Qwen2.5-1.5B-Instruct at both learning rates. The evaluation protocol is the same as in Appendix~\ref{app:exp:countdown_spectrum}, using the distillation checkpoint from seed 42.

\begin{figure}[htbp]
    \centering
    \includegraphics[width=0.8\linewidth]{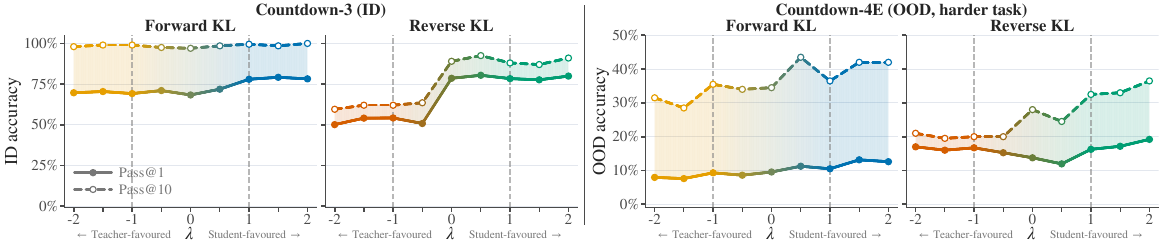}
    \includegraphics[width=0.8\linewidth]{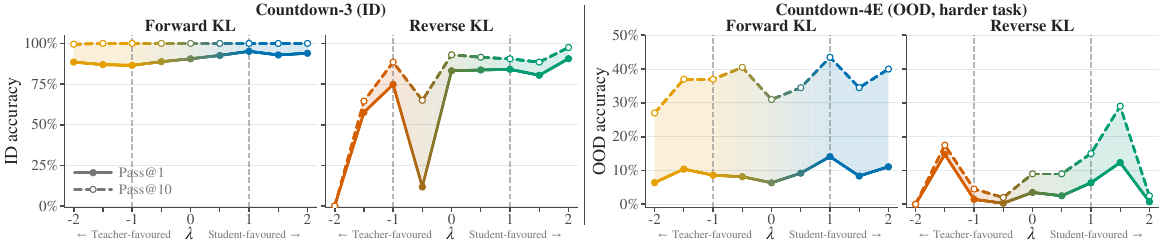}
    \caption{\textbf{Output coverage and generalisation for Qwen2.5-1.5B.} LR $1\times10^{-5}$ (top) and $5\times10^{-5}$ (bottom). Same plotting conventions as \Cref{fig:countdown_spectrum}.}
    \label{fig:app:qwen_countdown_spectrum}
\end{figure}

\clearpage
\subsection{Per-benchmark analysis}
\label{app:exp:ood_eval}
Figure~\ref{fig:ood_change_by_dataset} shows the accuracy across all 7 OOD datasets. Higher learning rate leads to strong losses on MMLU-Pro, HumanEval-Instruct, IFEval, and EQ-Bench under both rollout policies. TruthfulQA and BBQ improve slightly for most configurations. Within individual benchmarks, differences between OnPD and OffPD are generally smaller than the learning-rate effect and do not have a consistent direction. The three training-task markers show that this conclusion is also largely stable across the domain on which distillation is performed. 

\begin{figure*}[h]
    \centering
    \includegraphics[
        width=\textwidth
    ]{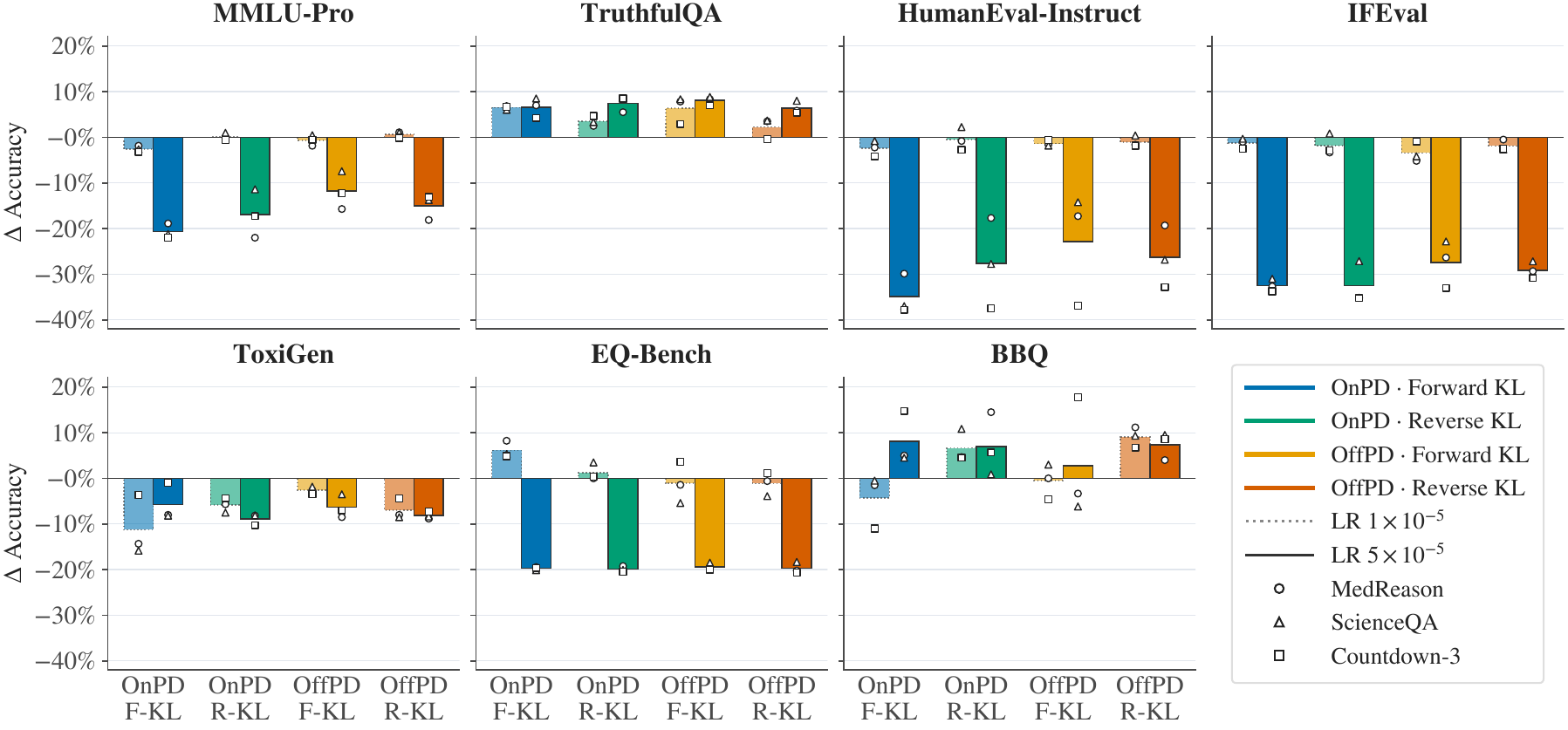}
    \caption{
        \textbf{Per-benchmark changes in out-of-distribution performance.}
        Each panel reports the change from the untrained student baseline to
        the final distilled checkpoint on one OOD benchmark. Bars average
        repeated runs within each training task and then weight MedReason,
        Science, and Countdown-3 equally. White markers show the
        corresponding training-task means.
    }
    \label{fig:ood_change_by_dataset}
\end{figure*}

\begin{wrapfigure}[14]{r}{0.5\linewidth}
    \centering
    \vspace{-2em}
    \includegraphics[width=0.8\linewidth]{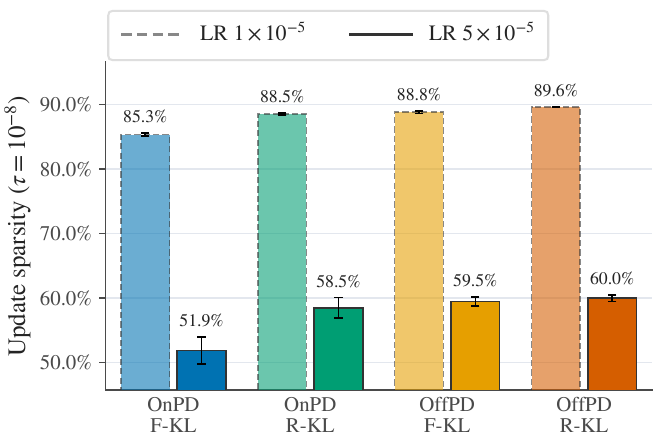}
    \vspace{-1em}
    \caption{\textbf{Parameter-update sparsity with a smaller threshold.} Each bar pools the three datasets and $N=3$ repeats per dataset; error bars show SEM across the nine runs. Learning rate, more than rollout policy, determines sparsity of updates.}
    \label{fig:sparsity_threshold}
\end{wrapfigure}

\subsection{Sparsity Analysis with a Smaller Threshold Value}

To show the robustness of our results evaluating the sparsity of the parameter updates, we also report the results obtained with a more conservative threshold, $\tau=10^{-8}$ (compared to $\tau=10^{-6}$ reported in the main text). \Cref{fig:sparsity_threshold} shows that using a smaller threshold does not substantially affect the results, and particularly their qualitative interpretation.

\clearpage
\subsection{Learning and Forgetting Curves for Science, MedReason and Countdown}

\begin{figure}[h]
    \centering
    \includegraphics[width=\linewidth]{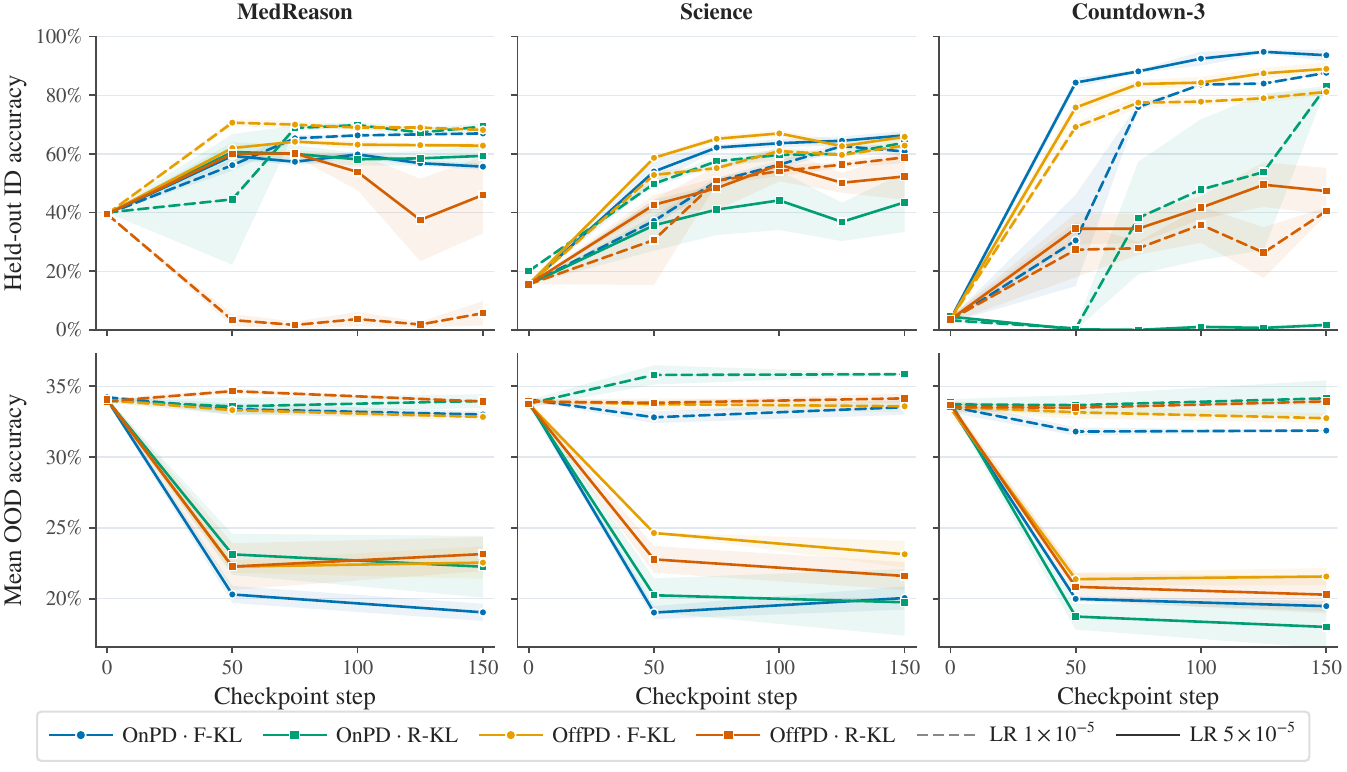}
    \caption{\textbf{Learning and forgetting curves for the main runs on Llama3.2-1B-Instruct.} We report the accuracy on the held-out test set of the training task (top row) and the mean accuracy over the out-of-distribution tasks (bottom row) over the course of training. We can observe that for the majority of the runs, the target task performance converges to a stable value after 150 optimisation steps. Additionally, most of the forgetting happens within the first 50 steps of training, with the OOD performance maintaining a relatively stable value at steps 50-150. The shading represents mean $\pm$ one standard error of the mean (SEM) computed over three seeds.}
    \label{fig:app:learning_curves}
\end{figure}

\begin{figure}[h]
    \centering
    \includegraphics[width=\linewidth]{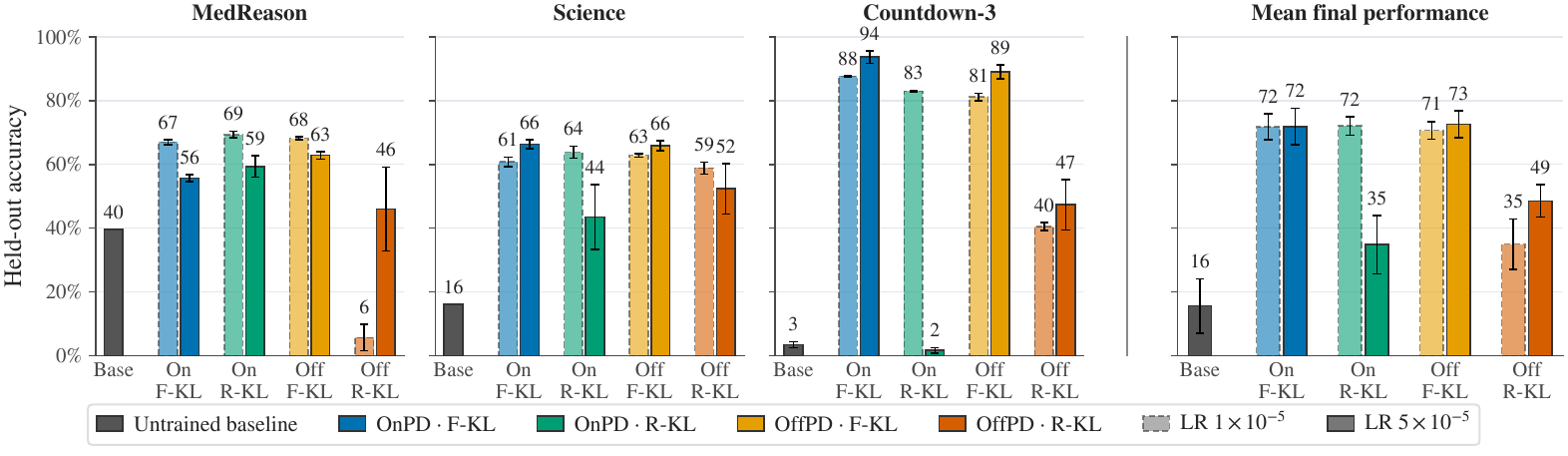}
    \caption{\textbf{Breakdown of the final performance obtained on each dataset.} We demonstrate a breakdown of the final performance after 150 steps of training, across the three datasets. Error bars denote SEM computed over $N=3$ seeds.}
    \label{fig:app:per_dataset_performance}
\end{figure}

\clearpage
\subsection{Comparison of the Likelihoods of Student and Teacher Trajectories During OnPD and OffPD}

In \Cref{fig:app:likelihoods_over_training} we visualise the likelihoods of the teacher and the student over the trajectories generated over the course of training, to validate to what extent the small visible differences between OnPD and OffPD stem from the small differences in the trajectories generated by the student and the teacher. The figure demonstrates that across the datasets, we observe significant differences between the likelihood dynamics of OnPD and OffPD.

During OffPD training runs, the teacher models are very confident on the (teacher-) generated trajectories, with average confidence $\approx90\%$. The student confidence on the teacher-generated trajectories slowly increases over the course of training, following a stable path. However, most of the time it remains below the confidence of the teacher.

For OnPD runs, where the trajectories are generated by the student, the situation looks different. Both when using F-KL and R-KL, the student is initially confident on the trajectories (average likelihood $\approx 80\%$), while the teacher is not (average likelihood $\approx 20\%$). When training with R-Kl, both the likelihood under the teacher and under the student tends to increase during training, suggesting that the student becomes more confident in its predictions, and the quality of the generation increases (as indicated by the increasing likelihood under the teacher). However, under F-KL the situation looks different: the confidence of the student drops drastically in the initial steps, and then slowly recovers over the course of training. The average likelihood of the teacher on the student-generated traces remains relatively low throughout training, suggesting that the quality of the supervision might be low. We further analyse these peculiar dynamics of the OnPD training with F-KL in Appendix~\ref{app:onpd-fkl-entropy-collapse}. 

\begin{figure}[h!]
    \centering
    \includegraphics[width=\linewidth]{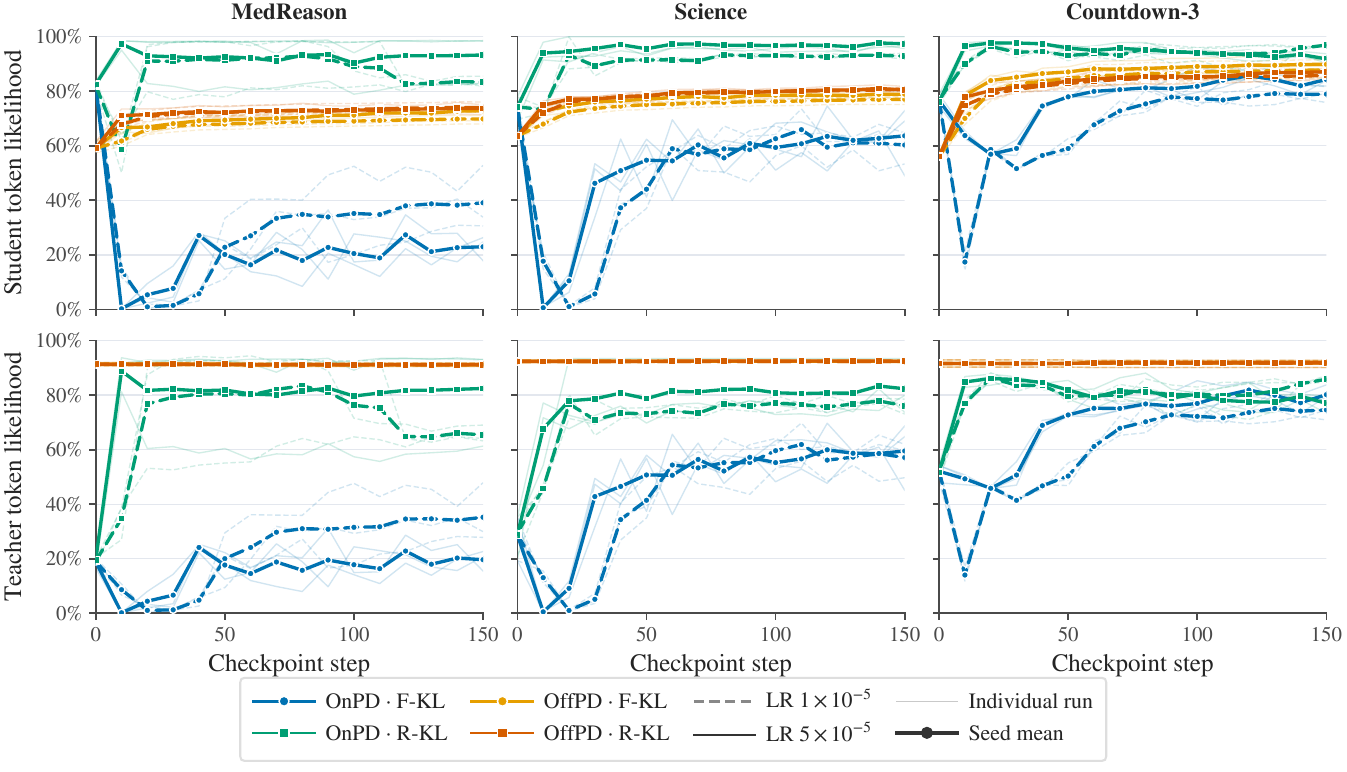}
    \caption{\textbf{Teacher and student likelihoods on the trajectories generated during OnPD and OffPD training.} Both the teacher and student likelihoods were computed at temperature 1.0.}
    \label{fig:app:likelihoods_over_training}
\end{figure}

\clearpage
\subsection{A Low Forward KL Can Conceal High-Entropy On-Policy Rollouts}
\label{app:onpd-fkl-entropy-collapse}

Forward KL can be expressed as:
\begin{equation}
    D_{\mathrm{KL}}(T\Vert S)
    = \mathrm{CE}(T,S) - H(T),
\end{equation}
where $T$ and $S$ denote the teacher and student next-token distributions on the sampled prefix. Consequently, a small forward KL can conceal both high cross-entropy and teacher entropy. We observed a particularly clear instance of this effect when distilling \texttt{Llama-3.2-1B-Instruct} on student-generated trajectories.

\paragraph{Setup.}
We analyze OnPD with the forward-KL objective on Science and MedReason, using dataset-specific trained Llama-3.1-8B teachers. The student is fully fine-tuned with a learning rate of $1\times10^{-5}$, an effective batch size of 32, student sampling temperature 1.0 and teacher distribution temperature 0.5. Every 10 steps, we evaluate token-level $D_{\mathrm{KL}}(T\Vert S)$, $H(T)$, and $\mathrm{CE}(T,S)$ on freshly sampled student trajectories. Metrics are binned in intervals of 50 completion tokens and reported separately for each dataset. Curves show the mean and range over 3 seeds at token positions reached by all three runs.

\begin{figure*}[h]
    \centering
    \includegraphics[width=\textwidth]{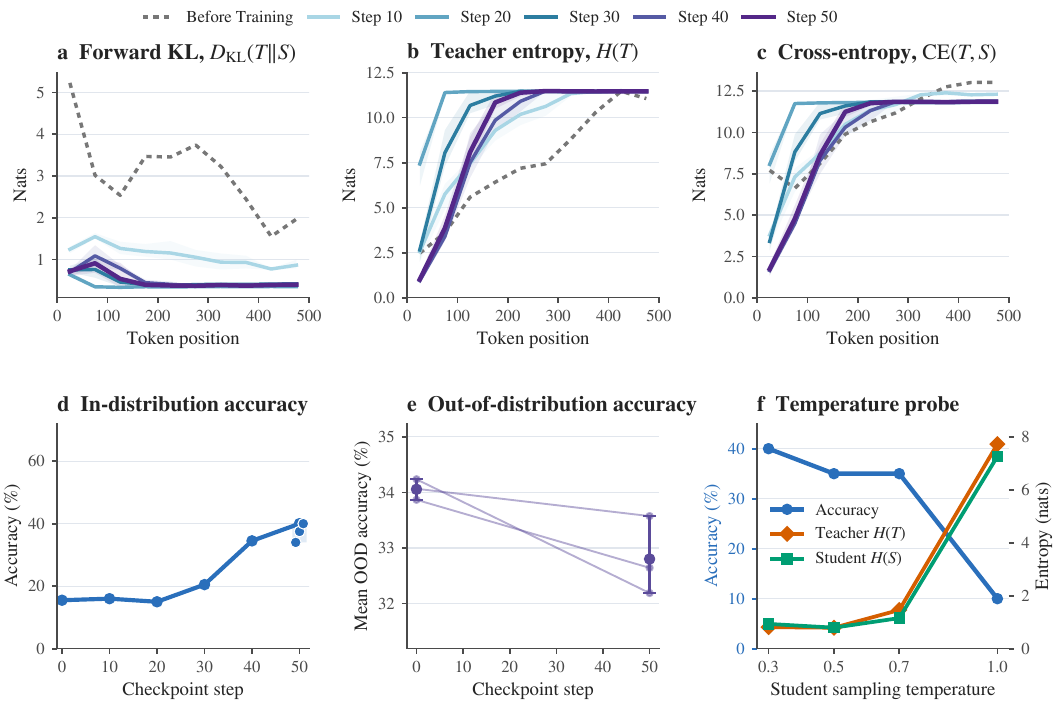}
    \caption{\textbf{Science OnPD forward-KL dynamics.} (a)--(c): token-position profiles at initialization (dotted grey) and every 10 training steps through step 50 (solid lines: three-seed means; shading: min--max ranges). (d): held-out accuracy for seed 42, with all three repeats at step 50. (e): OOD accuracy for individual seeds, with mean markers and min--max bars at initialization and step 50. (f): accuracy and teacher/student entropy when re-sampling the seed-42 checkpoint-50 model on the same 20 prompts at different temperatures.}
    \label{fig:app:onpd-fkl-entropy-collapse-science}
\end{figure*}


\begin{figure*}[h]
    \centering
    \includegraphics[width=\textwidth]{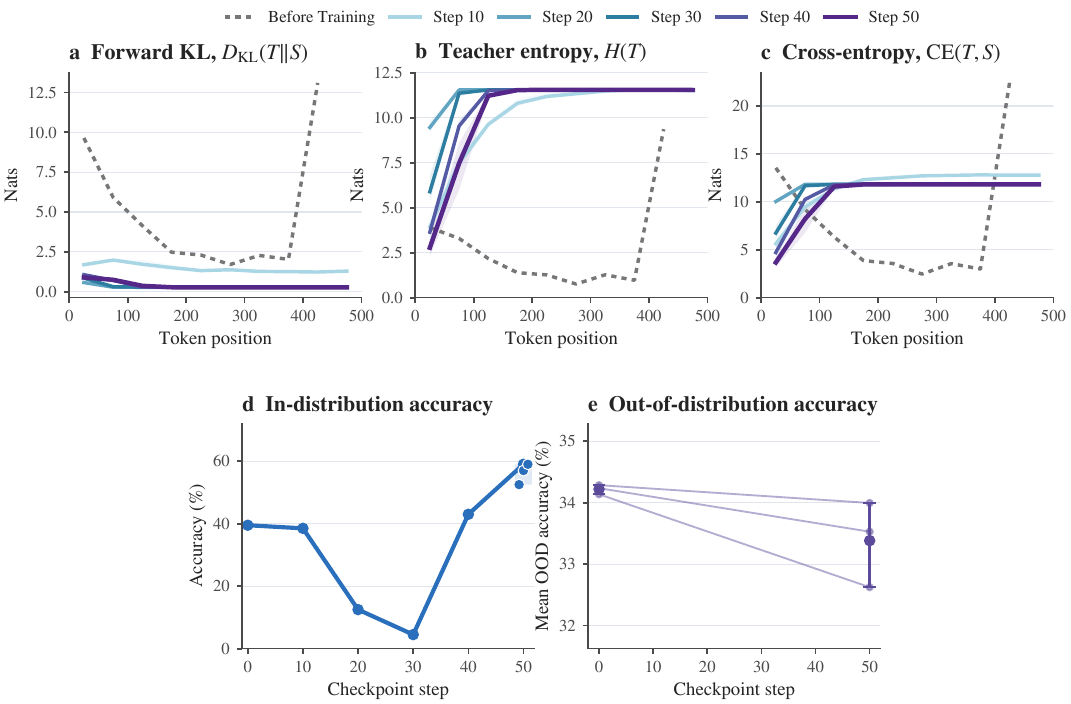}
    \caption{\textbf{MedReason OnPD forward-KL dynamics.} (a)--(c): token-position profiles of forward KL, teacher entropy, and cross-entropy. (d)--(e): held-out MedReason and OOD accuracy. Plotting conventions match Figure~\ref{fig:app:onpd-fkl-entropy-collapse-science}.}
    \label{fig:app:onpd-fkl-entropy-collapse-medreason}
\end{figure*}

\paragraph{Low KL masks high-entropy rollouts.}
\Cref{fig:app:onpd-fkl-entropy-collapse-science} shows that, during steps 10--20, forward KL becomes small while teacher entropy and cross-entropy approach the vocabulary scale (12 nats$\approx$163k effective token choices). 
Since $D_{\mathrm{KL}}(T\Vert S)=\mathrm{CE}(T,S)-H(T)$, low KL here reflects the near-cancellation of two large terms rather than confident predictions. 
Where teacher and student are nearly uniform and closely matched, the distillation gradient is weak, so the useful supervision signal is concentrated at earlier token positions. As training progresses, the low-entropy region advances along the trajectory. This suggests a self-reinforcing mechanism: improvements to early tokens produce prefixes on which the teacher can provide informative supervision further into the completion, enabling subsequent positions to improve in turn. 
This particularly acute example illustrates how on-policy learning may operate when student and teacher distributions have limited overlap: by progressively extending the student's access to useful teacher supervision and steering its trajectories to remain within teacher-supported regions for longer.

\paragraph{Learning and temperature sensitivity.}
Despite these high-entropy rollouts, step-50 target performs surprisingly well, averaging 37\% accuracy on Science and 56\% on MedReason, with OOD declines of only 1.3 and 0.8 percentage points. 
We hypothesize that this discrepancy reflects differences in the sampling temperature (evaluation uses $T=0.5$, whereas training use $T=1.0$). To measure this effect, we re-sample the Science checkpoint-50 model on the same 20 prompts at temperatures $\{0.3, 0.5, 0.7, 1.0\}$. Accuracy remains high up to temperature 0.7 but drops sharply at temperature 1.0, where both teacher and student entropy exceed 7 nats. This suggests that rollout instability does not necessarily imply a loss of capability, but rather a failure to reliably access that capability when sampling at temperature 1.0.

\clearpage
\subsection{Rollout Source Affects Teacher-Style Adoption}
\label{app:spanish}

\begin{wrapfigure}{r}{0.41\linewidth}
\centering
    \includegraphics[width=\linewidth]{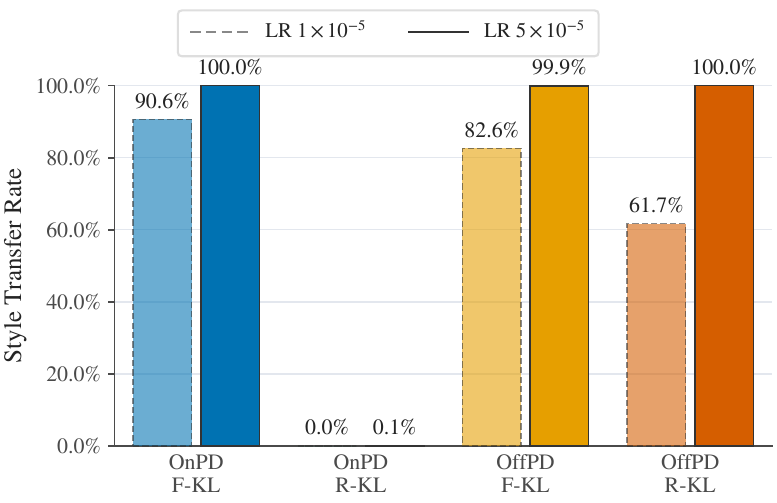}
    \caption{\textbf{Style transfer from teacher to student model.} OnPD with R-KL does not learn to generate traces in Spanish.}
    \label{fig:app:spanish_transfer}
\end{wrapfigure}

An important setting in which rollout policy may matter, even when final task performance is similar, is the transfer of incidental teacher behaviours. We therefore test whether rollout policy affects the adoption of a teacher's stylistic preferences under different distillation objectives.

\textbf{Experimental Setup.} We train Qwen2.5-1.5B student models on Science, using as teacher a Qwen2.5-3B model conditioned on a correct demonstration (similar to self-distillation setup~\citep{shenfeld_self-distillation_2026}), as well as an instruction to only speak in Spanish. We then evaluate the propensity of the student model to generate responses in Spanish, using GPT-5 to annotate student-generated trajectories at test time. Full training, evaluation, and annotation details are provided in Appendix~\ref{app:spanish_style_details}.

\textbf{Results}
Figure~\ref{fig:app:spanish_transfer} shows the results across rollout policies, KL objectives and learning rates. Teacher style transfers to the student for most setups, except OnPD with reverse KL. The mode-seeking behaviour of reverse KL makes it more robust to style transfer than forward KL: style tokens favoured by the teacher but not by the student contribute little to the reverse KL loss, but have a large effect under forward KL. However, off-policy rollouts still lead to style transfer from the teacher to the student, even under reverse KL. This suggests that training on states produced by the teacher can bias the student towards the teacher's style even when using the more robust reverse KL objective.

The robustness of OnPD with forward KL to adopting the teacher style likely depends on the ability of the teacher to provide meaningful supervision on student-generated trajectories. In this setting, the teacher still understands English, so when it is queried on English student-generated prefixes, its next-token distribution can retain an English mode. Reverse KL can therefore match a teacher-supported continuation in terms of final performance, without forcing the student into the incidental Spanish style.

\clearpage
\subsection{Rollout-policy temperature ablation}
\label{app:temperature_ablation}

\textbf{Setup.} In our main experiments we consistently rescale our trained teacher model with temperature 0.5, affecting both the the rollout generation temperature and the temperature used in the KL objective. As we explain in Appendix~\ref{app:details:trained_teacher}, this is to avoid degenerate generations which are occasionally produced at temperature 1.0. For the student, we consistently use temperature 1.0. In this appendix, we ablate whether using different generation temperatures for the student and the teacher could have significantly affected our results. 

We repeat the results from \Cref{fig:on_vs_offpd} on the MedReason dataset, where the teacher had a tendency to produce particularly degenerate generations at temperature 1.0 (probably due to the RLVR training). We use Llama-3.2-1B-Instruct student and the dataset-specific trained Llama-3.1-8B-Instruct teacher. We consider OnPD and OffPD, forward and reverse KL, learning rates $\{1\times10^{-5},5\times10^{-5}\}$, and seeds 42, 43, and 44. The original configuration samples OnPD trajectories from the
student at temperature $T_{\mathrm{gen}}=1.0$ and OffPD trajectories from the
teacher at $T_{\mathrm{gen}}=0.5$. In the ablation, we swap only these sampling
temperatures: OnPD now uses $T_{\mathrm{gen}}=0.5$, while OffPD now uses
$T_{\mathrm{gen}}=1.0$. Crucially, the temperatures used to compute the KL
objective are unchanged in every condition: the student distribution is scored
at $T_{S}=1.0$ and the teacher distribution at $T_{T}=0.5$. This isolates the
effect of the trajectory-generating distribution from the effect of temperature
on the training objective. We report held-out MedReason accuracy at step 150. Each bar in
Figure~\ref{fig:app:medreason-generation-temperature} is the mean over three
seeds, and error bars denote the standard error of the mean.

\begin{figure}[t]
    \centering
    \includegraphics[width=0.8\linewidth]{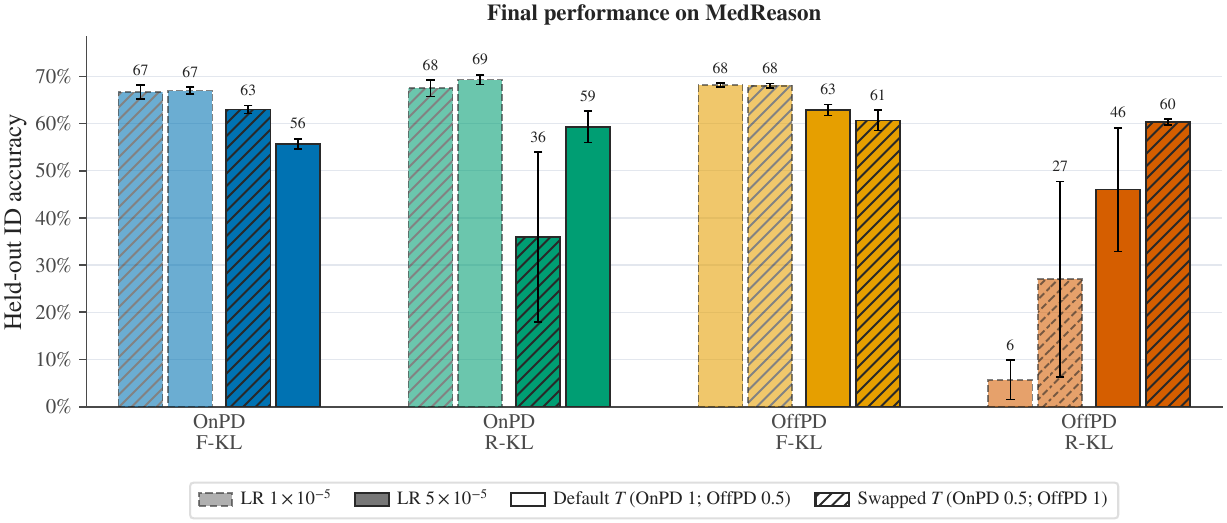}
    \caption{\textbf{Generation-temperature ablation on MedReason.} Modifying the generation temperature does not affect the best performance achieved by either OnPD or OffPD. Error bars mark the SEM over $N=3$ seeds.}
    \label{fig:app:medreason-generation-temperature}
\end{figure}

\textbf{Results.} Forward-KL performance is largely stable under the temperature swap. At the lower learning rate, OnPD changes from $67.0\%$ to $66.7\%$ and OffPD from
$68.2\%$ to $68.0\%$. At the higher learning rate, lowering the OnPD generation
temperature improves accuracy from $55.7\%$ to $63.0\%$, whereas raising the
OffPD generation temperature reduces accuracy from $62.8\%$ to $60.7\%$.
Thus, the low-learning-rate forward-KL comparison is not explained by the
different default sampling temperatures, although generation temperature has a
moderate effect on OnPD at the higher learning rate. 

Reverse KL is more sensitive to the generation distribution. For OnPD, the
temperature swap changes accuracy from $69.3\%$ to $67.5\%$ at
$1\times10^{-5}$, but from $59.3\%$ to $36.0\%$ at $5\times10^{-5}$; the latter
condition also has high seed-to-seed variability. For OffPD, raising the
generation temperature changes accuracy from $5.7\%$ to $27.0\%$ at
$1\times10^{-5}$ and from $46.0\%$ to $60.3\%$ at $5\times10^{-5}$. The
low-learning-rate OffPD comparison is itself highly variable across seeds.
Overall, the ablation indicates that the best achievable performance of OnPD and OffPD is not affected by the generation temperature, thus supporting our existing analysis.

The results in \cref{sec:rollout_spectrum} support these conclusions further, where in creating the rollout-policy spectrum we are mixing the student and teacher rollout policies when both are scaled with temperature 1.0. Hence, at $\lambda=1$ and $\lambda=-1$ we recover a comparison with OnPD and OffPD where both generation temperatures are set to 1.0.

\clearpage
\section{Derivation of the Token-Level KL Gradients}
\label{app:kl-gradients}

We derive the gradients at a fixed prefix, treating the teacher distribution as independent of \(\theta\). The forward and reverse KL objectives are
\begin{align}
D_{\mathrm{F-KL}}
&=
\sum_{u\in\mathcal{V}}
\pi_T(u)
\log\frac{\pi_T(u)}{\pi_S^\theta(u)},
\\
D_{\mathrm{R-KL}}
&=
\sum_{u\in\mathcal{V}}
\pi_S^\theta(u)
\log\frac{\pi_S^\theta(u)}{\pi_T(u)}.
\end{align}
For the student softmax,
\begin{align}
\frac{\partial \pi_S^\theta(u)}
{\partial (z_S^\theta)_v}
&=
\pi_S^\theta(u)
\left(
\mathbb{I}[u=v]-\pi_S^\theta(v)
\right),
\\
\frac{\partial\log\pi_S^\theta(u)}
{\partial (z_S^\theta)_v}
&=
\mathbb{I}[u=v]-\pi_S^\theta(v).
\end{align}

\paragraph{Forward KL.}
Differentiating the full vocabulary sum gives
\begin{align}
\frac{\partial D_{\mathrm{F-KL}}}
{\partial (z_S^\theta)_v}
&=
-\sum_{u\in\mathcal{V}}
\pi_T(u)
\left(
\mathbb{I}[u=v]-\pi_S^\theta(v)
\right)
\\
&=
-\pi_T(v)
+
\pi_S^\theta(v)
\sum_{u\in\mathcal{V}}\pi_T(u)
\\
&=
\pi_S^\theta(v)-\pi_T(v).
\end{align}
Applying the chain rule therefore yields
\[
\nabla_\theta D_{\mathrm{F-KL}}
=
\sum_{v\in\mathcal{V}}
\left(
\pi_S^\theta(v)-\pi_T(v)
\right)
\nabla_\theta(z_S^\theta)_v.
\]

\paragraph{Reverse KL.}
Let
\[
r(u)=\log\frac{\pi_S^\theta(u)}{\pi_T(u)}.
\]
Using \(\partial[x\log x]/\partial x=\log x+1\), we obtain
\begin{align}
\frac{\partial D_{\mathrm{R-KL}}}
{\partial (z_S^\theta)_v}
&=
\sum_{u\in\mathcal{V}}
\frac{\partial\pi_S^\theta(u)}
{\partial(z_S^\theta)_v}
\left(r(u)+1\right)
\\
&=
\pi_S^\theta(v)\left(r(v)+1\right)
-
\pi_S^\theta(v)
\sum_{u\in\mathcal{V}}
\pi_S^\theta(u)\left(r(u)+1\right).
\end{align}
Because
\[
\sum_{u\in\mathcal{V}}\pi_S^\theta(u)r(u)
=
D_{\mathrm{R-KL}}
\qquad\text{and}\qquad
\sum_{u\in\mathcal{V}}\pi_S^\theta(u)=1,
\]
this simplifies to
\[
\frac{\partial D_{\mathrm{R-KL}}}
{\partial(z_S^\theta)_v}
=
\pi_S^\theta(v)
\left[
\log\frac{\pi_S^\theta(v)}{\pi_T(v)}
-
D_{\mathrm{R-KL}}
\right].
\]
The parameter gradient is consequently
\[
\nabla_\theta D_{\mathrm{R-KL}}
=
\sum_{v\in\mathcal{V}}
\pi_S^\theta(v)
\left[
\log\frac{\pi_S^\theta(v)}{\pi_T(v)}
-
D_{\mathrm{R-KL}}
\right]
\nabla_\theta(z_S^\theta)_v.
\]
This expression assumes that \(\pi_T(v)>0\) wherever \(\pi_S^\theta(v)>0\); otherwise, the reverse KL is infinite.

\clearpage
\section{Rollout-Policy Sensitivity of Forward and Reverse KL Semi-Gradients}
\label{app:bullshit}

This appendix formalizes a stability distinction between forward- and
reverse-KL distillation.  The relevant object is the update used by the
algorithm, which stops gradients through sampled trajectories, rather than the
total derivative of an on-policy objective.  We show that the forward-KL
semi-gradient is uniformly Lipschitz in the rollout-induced distribution over
prefixes.  Reverse KL has no analogous distribution-free guarantee: its
semi-gradient can be arbitrarily sensitive to an arbitrarily small change in
the prefix distribution.  A bound for reverse KL is recovered only after
controlling student--teacher log-likelihood ratios.

\subsection{Setup}
\label{app:stability-setup}

We retain the notation introduced in Section~3.  Let $\rho$ denote the
autoregressive rollout policy.  For a prompt $x$ and completion
$y=(y_1,\ldots,y_{L_y})$, let $y_{<n}=(y_1,\ldots,y_{n-1})$ denote the output
prefix preceding token $y_n$, and write
\begin{equation}
    h_n := (x,y_{<n}).
    \label{eq:prefix-notation}
\end{equation}
The rollout policy induces the distribution
\begin{equation}
    \rho(y\mid x)
    =
    \prod_{n=1}^{L_y}\rho(y_n\mid h_n)
    \label{eq:rollout-completion-distribution}
\end{equation}
over complete output sequences.

For
$D\in\{D_{\mathrm{F\text{-}KL}},D_{\mathrm{R\text{-}KL}}\}$, define the
rollout-conditioned distillation objective
\begin{equation}
    \mathcal L_D(\theta;\rho)
    :=
    \mathbb E_{x\sim p_{\mathrm{data}}}
    \mathbb E_{y\sim\rho(\cdot\mid x)}
    \left[
        \frac{1}{L_y}
        \sum_{n=1}^{L_y}
        D\left(
            \pi_S^\theta(\cdot\mid h_n),
            \pi_T(\cdot\mid h_n)
        \right)
    \right].
    \label{eq:rollout-conditioned-objective}
\end{equation}
Off-policy distillation corresponds to $\rho=\pi_T$, whereas on-policy
distillation corresponds to $\rho=\pi_S^\theta$.

When $\rho=\pi_S^\theta$, the objective depends on $\theta$ both through the
student distribution in the token-level divergence and through the sampled
completion.  As in our training procedure, we stop gradients through the
sampling process and treat $y$, including every prefix $h_n$, as fixed.  We
denote the resulting semi-gradient by
\begin{equation}
    \bar\nabla_\theta\mathcal L_D(\theta;\rho)
    :=
    \mathbb E_{x\sim p_{\mathrm{data}}}
    \mathbb E_{y\sim\rho(\cdot\mid x)}
    \left[
        \frac{1}{L_y}
        \sum_{n=1}^{L_y}
        \nabla_\theta
        D\left(
            \pi_S^\theta(\cdot\mid h_n),
            \pi_T(\cdot\mid h_n)
        \right)
    \right],
    \label{eq:rollout-conditioned-semigradient}
\end{equation}
where the distributions inside the expectations are held fixed during
differentiation.  If $\rho$ does not depend on $\theta$, this semi-gradient is
the ordinary gradient of $\mathcal L_D(\theta;\rho)$.

Let $z_S^\theta(h_n)\in\mathbb R^{|\mathcal V|}$ denote the student logits at
prefix $h_n$, so that
\begin{equation}
    \pi_S^\theta(\cdot\mid h_n)
    =
    \operatorname{softmax}\!\left(z_S^\theta(h_n)\right),
\end{equation}
and define the corresponding logit Jacobian by
\begin{equation}
    J_\theta(h_n)
    :=
    \frac{\partial z_S^\theta(h_n)}{\partial\theta}.
    \label{eq:student-logit-jacobian}
\end{equation}
The parameter gradient of either token-level divergence can therefore be
written as
\begin{align}
    &\nabla_\theta
    D\left(
        \pi_S^\theta(\cdot\mid h_n),
        \pi_T(\cdot\mid h_n)
    \right)\nonumber\\
    &\qquad=
    J_\theta(h_n)^\top
    \nabla_{z_S^\theta(h_n)}
    D\left(
        \pi_S^\theta(\cdot\mid h_n),
        \pi_T(\cdot\mid h_n)
    \right).
    \label{eq:token-gradient-chain-rule}
\end{align}

\subsection{A rollout-stability bound for forward KL}
\label{app:forward-rollout-stability}

At any fixed prefix, the forward-KL gradient with respect to the student
logits is
\begin{align}
    &\nabla_{z_S^\theta(h_n)}
    D_{\mathrm{F\text{-}KL}}\left(
        \pi_S^\theta(\cdot\mid h_n),
        \pi_T(\cdot\mid h_n)
    \right)\nonumber\\
    &\qquad=
    \pi_S^\theta(\cdot\mid h_n)
    -\pi_T(\cdot\mid h_n).
    \label{eq:forward-kl-logit-gradient}
\end{align}
For conciseness, define the forward-KL parameter-gradient contribution of a
complete sampled trajectory as
\begin{align}
    g_{\mathrm{F\text{-}KL}}^\theta(x,y)
    :=\frac{1}{L_y}\sum_{n=1}^{L_y}
    J_\theta(h_n)^\top
    \Bigl(
        \pi_S^\theta(\cdot\mid h_n)
        -\pi_T(\cdot\mid h_n)
    \Bigr).
    \label{eq:forward-trajectory-gradient}
\end{align}
Equation~\ref{eq:rollout-conditioned-semigradient} then gives
\begin{equation}
    \bar\nabla_\theta
    \mathcal L_{D_{\mathrm{F\text{-}KL}}}(\theta;\rho)
    =
    \mathbb E_{x\sim p_{\mathrm{data}}}
    \mathbb E_{y\sim\rho(\cdot\mid x)}
    \left[g_{\mathrm{F\text{-}KL}}^\theta(x,y)\right].
    \label{eq:forward-semigradient-trajectory-form}
\end{equation}

We use
$\mathrm{TV}(P,Q)=\tfrac12\lVert P-Q\rVert_1$.  When applied to
$\rho(\cdot\mid x)$ and $\rho'(\cdot\mid x)$, this is the total-variation
distance between the distributions over complete output sequences induced by
the two rollout policies for the same prompt $x$.

\begin{theorem}[Forward-KL rollout stability]
\label{thm:forward-rollout-stability}
Suppose that there exists $B>0$ such that, for every prompt $x$, every
completion $y$ having positive probability under $\rho(\cdot\mid x)$ or
$\rho'(\cdot\mid x)$, every token position $n$, and every vector
$v\in\mathbb R^{|\mathcal V|}$,
\begin{equation}
    \left\lVert J_\theta(h_n)^\top v\right\rVert_2
    \le B\lVert v\rVert_2.
    \label{eq:jacobian-bound}
\end{equation}
Equivalently, $B$ uniformly bounds the largest singular value of the
student-logit Jacobian over all prefixes encountered under either rollout
policy.  Then
\begin{align}
    &\left\lVert
        \bar\nabla_\theta
        \mathcal L_{D_{\mathrm{F\text{-}KL}}}(\theta;\rho)
        -
        \bar\nabla_\theta
        \mathcal L_{D_{\mathrm{F\text{-}KL}}}(\theta;\rho')
    \right\rVert_2\nonumber\\
    &\qquad\le
    2\sqrt2 B\,
    \mathbb E_{x\sim p_{\mathrm{data}}}
    \left[
        \mathrm{TV}\!\left(
            \rho(\cdot\mid x),
            \rho'(\cdot\mid x)
        \right)
    \right].
    \label{eq:forward-rollout-stability-bound}
\end{align}
Moreover, for every rollout policy $\rho$,
\begin{equation}
    \mathbb E_{x\sim p_{\mathrm{data}}}
    \mathbb E_{y\sim\rho(\cdot\mid x)}
    \left[
        \left\lVert
            g_{\mathrm{F\text{-}KL}}^\theta(x,y)
        \right\rVert_2^2
    \right]
    \le 2B^2.
    \label{eq:forward-gradient-second-moment}
\end{equation}
\end{theorem}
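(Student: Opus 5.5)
The plan is to reduce both claims to a uniform pointwise bound on the per-trajectory gradient $g_{\mathrm{F\text{-}KL}}^\theta(x,y)$ from \Cref{eq:forward-trajectory-gradient}. We then use the fact that the difference of expectations of a bounded function under two distributions is controlled by their total-variation distance.

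\textbf{Step 1 (pointwise bound).} At any prefix $h_n$, set $p=\pi_S^\theta(\cdot\mid h_n)$ and $q=\pi_T(\cdot\mid h_n)$. The vector $p-q$ has $\ell_2$ norm at most
\[
\lVert p-q\rVert_2 \le \sqrt{\lVert p\rVert_2^2+\lVert q\rVert_2^2}\le\sqrt2 .
\]
This holds because $p,q$ are nonnegative, so $\langle p,q\rangle\ge0$, and each has $\ell_2$ norm at most its $\ell_1$ norm, which equals $1$. By the Jacobian assumption \eqref{eq:jacobian-bound}, each summand satisfies $\lVert J_\theta(h_n)^\top(p-q)\rVert_2\le\sqrt2B$. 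The triangle inequality over the average of $L_y$ terms then gives $\lVert g_{\mathrm{F\text{-}KL}}^\theta(x,y)\rVert_2\le\sqrt2 B$ for every completion in the support of $\rho$ or $\rho'$. Squaring and taking expectations immediately yields the second-moment bound \eqref{eq:forward-gradient-second-moment}.

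\textbf{Step 2 (stability).} Fix $x$ and write the difference of the inner expectations as a sum over completions:
\[
\sum_y \bigl(\rho(y\mid x)-\rho'(y\mid x)\bigr)\,g_{\mathrm{F\text{-}KL}}^\theta(x,y).
\]
Only completions in the union of the two supports contribute, and Step 1 applies to each of them. Taking norms inside the sum gives
\[
\Bigl\lVert \sum_y (\rho-\rho')\,g \Bigr\rVert_2 \le \sqrt2B\,\lVert\rho(\cdot\mid x)-\rho'(\cdot\mid x)\rVert_1 = 2\sqrt2B\,\mathrm{TV}\bigl(\rho(\cdot\mid x),\rho'(\cdot\mid x)\bigr).
\]
We then average over $x\sim p_{\mathrm{data}}$, using Jensen (the norm of an expectation is at most the expectation of the norm), to obtain \eqref{eq:forward-rollout-stability-bound}. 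The constant $2\sqrt2$ in the statement matches this crude bound. No centering trick is needed, since subtracting a constant vector would only improve the bound to $\sqrt2B\cdot 2\,\mathrm{TV}$ anyway.

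\textbf{Main obstacle.} The only delicate point is bookkeeping rather than analysis. The semi-gradient treats the prefixes as fixed, so $g$ is a fixed function of $(x,y)$ independent of which rollout policy produced $y$. This justifies writing the difference as a single signed-measure integral. The argument also needs the Jacobian bound on the whole union of supports, which is exactly what the hypothesis grants. If the completion space were countably infinite (unbounded $L_y$), we would justify the interchange of sum and norm by absolute convergence, using the uniform bound from Step 1.
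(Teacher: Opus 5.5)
Your proof is correct and follows the paper's argument step for step. The chain is: the per-token bound $\lVert J_\theta(h_n)^\top(\pi_S^\theta-\pi_T)\rVert_2\le\sqrt2 B$; the triangle inequality over the token average, giving the pointwise bound $\lVert g_{\mathrm{F\text{-}KL}}^\theta(x,y)\rVert_2\le\sqrt2 B$; the bounded-function total-variation inequality at each fixed prompt; the triangle inequality over prompts; and squaring the pointwise bound for the second-moment claim. The only difference is that you derive the vector total-variation inequality inline via the signed sum $\sum_y(\rho-\rho')\,g$ and justify $\lVert p-q\rVert_2\le\sqrt2$ explicitly, whereas the paper simply asserts both.
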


\begin{proof}
We first bound the forward-KL gradient at one prefix.  For any probability
vectors $p$ and $q$, $\lVert p-q\rVert_2\le\sqrt2$.  Applying
Equation~\ref{eq:jacobian-bound} with
\begin{equation}
    v=\pi_S^\theta(\cdot\mid h_n)-\pi_T(\cdot\mid h_n)
\end{equation}
and using Equation~\ref{eq:forward-kl-logit-gradient} gives the per-token
bound
\begin{align}
    &\left\lVert
        J_\theta(h_n)^\top
        \Bigl(
            \pi_S^\theta(\cdot\mid h_n)
            -\pi_T(\cdot\mid h_n)
        \Bigr)
    \right\rVert_2
    \le
    B\left\lVert
        \pi_S^\theta(\cdot\mid h_n)
        -\pi_T(\cdot\mid h_n)
    \right\rVert_2
    \le\sqrt2B.
    \label{eq:forward-token-gradient-bound}
\end{align}

We next pass from a single token to a complete sampled trajectory.  Substituting
Equation~\ref{eq:forward-trajectory-gradient} and applying the triangle
inequality to the sum over token positions yields
\begin{align}
    \left\lVert
        g_{\mathrm{F\text{-}KL}}^\theta(x,y)
    \right\rVert_2
    &=
    \left\lVert
        \frac{1}{L_y}\sum_{n=1}^{L_y}
        J_\theta(h_n)^\top
        \Bigl(
            \pi_S^\theta(\cdot\mid h_n)
            -\pi_T(\cdot\mid h_n)
        \Bigr)
    \right\rVert_2\\
    &\le
    \frac{1}{L_y}\sum_{n=1}^{L_y}
    \left\lVert
        J_\theta(h_n)^\top
        \Bigl(
            \pi_S^\theta(\cdot\mid h_n)
            -\pi_T(\cdot\mid h_n)
        \Bigr)
    \right\rVert_2\\
    &\le
    \frac{1}{L_y}\sum_{n=1}^{L_y}\sqrt2B
    =\sqrt2B.
    \label{eq:forward-trajectory-gradient-bound}
\end{align}
This holds pointwise for every $x$ and $y$.

We now compare the expectations of this trajectory-level gradient under two
rollout policies.  For any bounded vector-valued function $g$ and discrete
probability distributions $P$ and $Q$,
\begin{align}
    \left\lVert
        \mathbb E_P[g]-\mathbb E_Q[g]
    \right\rVert_2
    \le
    2\sup_y\lVert g(y)\rVert_2\,\mathrm{TV}(P,Q).
    \label{eq:vector-tv-bound}
\end{align}
Applying Equation~\ref{eq:vector-tv-bound} conditionally for each prompt,
with $P=\rho(\cdot\mid x)$, $Q=\rho'(\cdot\mid x)$, and
$g(y)=g_{\mathrm{F\text{-}KL}}^\theta(x,y)$, yields
\begin{align}
    &\left\lVert
        \mathbb E_{y\sim\rho(\cdot\mid x)}
        \left[g_{\mathrm{F\text{-}KL}}^\theta(x,y)\right]
        -
        \mathbb E_{y\sim\rho'(\cdot\mid x)}
        \left[g_{\mathrm{F\text{-}KL}}^\theta(x,y)\right]
    \right\rVert_2\nonumber\\
    &\qquad\le
    2\sqrt2B\,
    \mathrm{TV}\!\left(
        \rho(\cdot\mid x),
        \rho'(\cdot\mid x)
    \right).
\end{align}

Finally, using Equation~\ref{eq:forward-semigradient-trajectory-form} and
moving the norm inside the expectation over prompts gives
\begin{align}
    &\left\lVert
        \bar\nabla_\theta
        \mathcal L_{D_{\mathrm{F\text{-}KL}}}(\theta;\rho)
        -
        \bar\nabla_\theta
        \mathcal L_{D_{\mathrm{F\text{-}KL}}}(\theta;\rho')
    \right\rVert_2\\
    &=
    \left\lVert
        \mathbb E_{x\sim p_{\mathrm{data}}}
        \left[
            \mathbb E_{y\sim\rho(\cdot\mid x)}
            \left[g_{\mathrm{F\text{-}KL}}^\theta(x,y)\right]
            -
            \mathbb E_{y\sim\rho'(\cdot\mid x)}
            \left[g_{\mathrm{F\text{-}KL}}^\theta(x,y)\right]
        \right]
    \right\rVert_2\\
    &\le
    \mathbb E_{x\sim p_{\mathrm{data}}}
    \left[
        \left\lVert
            \mathbb E_{y\sim\rho(\cdot\mid x)}
            \left[g_{\mathrm{F\text{-}KL}}^\theta(x,y)\right]
            -
            \mathbb E_{y\sim\rho'(\cdot\mid x)}
            \left[g_{\mathrm{F\text{-}KL}}^\theta(x,y)\right]
        \right\rVert_2
    \right]\\
    &\le
    2\sqrt2B\,
    \mathbb E_{x\sim p_{\mathrm{data}}}
    \left[
        \mathrm{TV}\!\left(
            \rho(\cdot\mid x),
            \rho'(\cdot\mid x)
        \right)
    \right],
\end{align}
which proves Equation~\ref{eq:forward-rollout-stability-bound}.  The first
inequality above is the triangle inequality for an expectation; the second is
the fixed-prompt bound derived from Equation~\ref{eq:vector-tv-bound}.

For the second claim, Equation~\ref{eq:forward-trajectory-gradient-bound}
holds for every $x$ and $y$, so
\begin{equation}
    \left\lVert
        g_{\mathrm{F\text{-}KL}}^\theta(x,y)
    \right\rVert_2^2
    \le 2B^2.
\end{equation}
Taking the expectation of both sides under
$x\sim p_{\mathrm{data}}$ and $y\sim\rho(\cdot\mid x)$ proves
Equation~\ref{eq:forward-gradient-second-moment}.
\end{proof}

\paragraph{Interpretation.}
The first conclusion of Theorem~\ref{thm:forward-rollout-stability} states that
changing the rollout policy can change the expected forward-KL update only in
proportion to the total-variation distance between the resulting completion
distributions.  Its constant does not depend on student--teacher probability
ratios.  Thus, when two rollout policies generate similar distributions over
completions, their expected forward-KL updates must also be similar.  This is a
worst-case stability statement rather than rollout invariance: the guarantee
can be loose when the two completion distributions have total variation close
to one, or when the Jacobian bound $B$ is large.

The second conclusion bounds the second moment of the gradient contribution
from an individual sampled completion.  In particular, the forward-KL
objective cannot produce arbitrarily large gradient contributions solely
because the student assigns very little probability to a token preferred by
the teacher.  This distinguishes the gradient from the forward-KL objective
value, which is itself unbounded.  The result also bounds the variance by
\begin{equation}
    \mathbb E\left[
        \left\lVert
            g_{\mathrm{F\text{-}KL}}^\theta
            -\mathbb E[g_{\mathrm{F\text{-}KL}}^\theta]
        \right\rVert_2^2
    \right]
    \le 2B^2.
\end{equation}

These conclusions provide a possible explanation for the empirical robustness
of forward KL in Section~5.  Across the rollout-policy spectrum, forward-KL
training attains similar final performance despite substantial changes in the
source of the generated trajectories.  The theorem shows that this behaviour
is compatible with a uniformly controlled change in the expected update, and
that extreme student--teacher likelihood ratios do not by themselves create
unbounded forward-KL logit gradients.  This is also consistent with the
gradient-clipping ablation in Section~6, where forward KL remains substantially
more stable than reverse KL when clipping is removed.

The theorem does not, however, prove that different rollout policies must
produce similar models or similar final performance.  It controls a one-step
expected semi-gradient, and its bound depends on both the distance between the
complete-trajectory distributions and the largest singular value $B$ of the
student-logit Jacobian.  Nor does it directly explain the observed differences
in catastrophic forgetting or parameter-update sparsity, which the experiments
show are governed primarily by the learning rate.  The empirical results
should therefore be interpreted as consistent with the stability result,
rather than as a direct consequence of it.

\subsection{Reverse KL admits no distribution-free rollout-stability bound}
\label{app:reverse-rollout-sensitivity}

At a fixed prefix $h_n$, define the student--teacher log-likelihood ratio for
token $v\in\mathcal V$ by
\begin{equation}
    r_n(v)
    :=
    \log
    \frac{
        \pi_S^\theta(v\mid h_n)
    }{
        \pi_T(v\mid h_n)
    }.
    \label{eq:reverse-log-ratio-vector}
\end{equation}
We write $r_n=(r_n(v))_{v\in\mathcal V}$ for the corresponding vector over
the vocabulary.
The reverse-KL gradient with respect to the student logits is
\begin{align}
    &\nabla_{z_S^\theta(h_n)}
    D_{\mathrm{R\text{-}KL}}\left(
        \pi_S^\theta(\cdot\mid h_n),
        \pi_T(\cdot\mid h_n)
    \right)\nonumber\\
    &\qquad=
    \pi_S^\theta(\cdot\mid h_n)
    \odot
    \left[
        r_n
        -
        D_{\mathrm{R\text{-}KL}}\left(
            \pi_S^\theta(\cdot\mid h_n),
            \pi_T(\cdot\mid h_n)
        \right)\mathbf 1
    \right].
    \label{eq:reverse-kl-logit-gradient}
\end{align}
All operations in Equation~\ref{eq:reverse-kl-logit-gradient} are
coordinate-wise.  Analogously to
Equation~\ref{eq:forward-trajectory-gradient}, define the reverse-KL
parameter-gradient contribution of a complete sampled trajectory as
\begin{align}
    &g_{\mathrm{R\text{-}KL}}^\theta(x,y)\nonumber\\
    &\quad:=
    \frac{1}{L_y}\sum_{n=1}^{L_y}
    J_\theta(h_n)^\top
    \left\{
        \pi_S^\theta(\cdot\mid h_n)
        \odot
        \left[
            r_n
            -
            D_{\mathrm{R\text{-}KL}}\left(
                \pi_S^\theta(\cdot\mid h_n),
                \pi_T(\cdot\mid h_n)
            \right)\mathbf 1
        \right]
    \right\}.
    \label{eq:reverse-trajectory-gradient}
\end{align}
The reverse-KL semi-gradient is therefore
\begin{equation}
    \bar\nabla_\theta
    \mathcal L_{D_{\mathrm{R\text{-}KL}}}(\theta;\rho)
    =
    \mathbb E_{x\sim p_{\mathrm{data}}}
    \mathbb E_{y\sim\rho(\cdot\mid x)}
    \left[g_{\mathrm{R\text{-}KL}}^\theta(x,y)\right].
    \label{eq:reverse-semigradient-trajectory-form}
\end{equation}

Unlike its forward-KL counterpart, the vector in
Equation~\ref{eq:reverse-kl-logit-gradient} is not uniformly bounded over
student and teacher distributions.

\begin{theorem}[No distribution-free reverse-KL rollout-stability bound]
\label{thm:reverse-no-bound}
There is no finite constant $C$, independent of the student and teacher token
distributions, for which
\begin{align}
    &\left\lVert
        \bar\nabla_\theta
        \mathcal L_{D_{\mathrm{R\text{-}KL}}}(\theta;\rho)
        -
        \bar\nabla_\theta
        \mathcal L_{D_{\mathrm{R\text{-}KL}}}(\theta;\rho')
    \right\rVert_2\nonumber\\
    &\qquad\le
    C\,
    \mathbb E_{x\sim p_{\mathrm{data}}}
    \left[
        \mathrm{TV}\!\left(
            \rho(\cdot\mid x),
            \rho'(\cdot\mid x)
        \right)
    \right]
    \label{eq:reverse-impossible-bound}
\end{align}
holds for all rollout policies $\rho$ and $\rho'$.  This remains true for a
two-token vocabulary and a direct logit parameterization whose Jacobian has
largest singular value one.
\end{theorem}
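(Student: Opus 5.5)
We argue by contradiction via an explicit family of counterexamples: for any candidate constant $C$ we construct student/teacher distributions and two rollout policies $\rho,\rho'$ with small total-variation distance for which the left-hand side of Equation~\ref{eq:reverse-impossible-bound} exceeds $C$ times that distance. Concretely, take a single prompt $x$ (so $p_{\mathrm{data}}$ is a point mass), a two-token vocabulary $\mathcal V=\{a,b\}$, and completions of length one, so that the only prefix is $h_1=x$. Since a semi-gradient at a single prefix does not depend on $\rho$, we instead use length-two completions: the first token $y_1\in\{a,b\}$ is drawn from the rollout policy, and the loss is evaluated at the two prefixes $h_2\in\{(x,a),(x,b)\}$. (Alternatively, we can use length-one completions with the loss averaged only over a position whose prefix differs.) We take a direct logit parameterisation $\theta=(z(x,a),z(x,b))\in\mathbb R^2\times\mathbb R^2$, with the logits at each prefix being their own block of coordinates. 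Then each $J_\theta(h)^\top$ is a coordinate embedding with largest singular value one, which satisfies the side condition of the theorem.

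Next we choose the distributions. At prefix $(x,b)$, set student equal to teacher, so the reverse-KL logit gradient there vanishes by Equation~\ref{eq:reverse-kl-logit-gradient}. At prefix $(x,a)$, set the student to $(1/2,1/2)$ and the teacher to $(1-\varepsilon,\varepsilon)$. Let $\rho$ put mass $1$ on $y_1=b$ and $\rho'$ put mass $\delta$ on $y_1=a$. The TV distance over completions is then $\delta$, provided the second token is drawn identically given the prefix, e.g.\ deterministically. The difference of semi-gradients equals $\delta$ times $\tfrac12$ (the $1/L_y$ factor, with the average over both positions; the first position contributes identically under both policies because its prefix $x$ is shared) times the norm of the reverse-KL logit gradient at $(x,a)$, since the Jacobian is an isometric embedding.

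The key computation is that gradient. With $\pi_S=(1/2,1/2)$, we have $r(a)=\log\frac{1/2}{1-\varepsilon}$, $r(b)=\log\frac{1}{2\varepsilon}$, and $D_{\mathrm{R\text{-}KL}}=\tfrac12(r(a)+r(b))$. The gradient is therefore $\tfrac14\bigl(r(a)-r(b),\,r(b)-r(a)\bigr)$, whose norm is $\tfrac{\sqrt2}{4}\log\frac{1-\varepsilon}{\varepsilon}$, and this tends to $\infty$ as $\varepsilon\to0$. The ratio of the left-hand side to $C\delta$ is thus of order $\log(1/\varepsilon)/C$, independent of $\delta$. Choosing $\varepsilon$ small enough violates any finite $C$, even for arbitrarily small $\delta$, which also gives the ``arbitrarily small change'' interpretation.

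The main obstacle is bookkeeping rather than analysis. First, we must ensure that the shared prefix at position one contributes identically under both policies. Second, we must check that the TV between completion distributions equals $\delta$. Third, we must make sure that $D_{\mathrm{R\text{-}KL}}$ is finite, which requires the teacher to have full support; that holds for $\varepsilon>0$. If one prefers to avoid length-two completions, an equivalent route is to let the prompt distribution itself differ, but the theorem fixes $p_{\mathrm{data}}$, so the two-step construction above is the one I would commit to.
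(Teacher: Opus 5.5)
Your proposal is correct and is essentially the paper's own counterexample: a two-token vocabulary, length-two completions whose second-position prefixes differ, student equal to teacher at one of these prefixes and student $(1/2,1/2)$ against a teacher with a vanishing token probability at the other, and a point mass versus a $\delta$-mixture at total-variation distance $\delta$. This gives an update gap of $\frac{\sqrt2}{8}\,\delta\log\frac{1-\varepsilon}{\varepsilon}$, exactly as in the paper with the roles of $\varepsilon$ and $\delta$ swapped. The differences are cosmetic: you let the shared first-position term cancel rather than making student and teacher agree there, and your block parameterization realizes the unit-singular-value Jacobian slightly more explicitly; you should drop the parenthetical length-one alternative, since with $L_y=1$ every completion has prefix $x$ and the semi-gradient does not depend on $\rho$.
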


\begin{proof}
It is sufficient to construct a counterexample for one fixed prompt $x$.
Consider two length-two completions, denoted by $y^{(0)}$ and
$y^{(\delta)}$.  At the initial prefix, which is shared by both completions,
let the student and teacher distributions coincide.  Their reverse-KL gradient
at that prefix is therefore zero.  At the second prefix of $y^{(0)}$, again let
the two distributions coincide.  Consequently,
\begin{equation}
    g_{\mathrm{R\text{-}KL}}^\theta(x,y^{(0)})=0.
    \label{eq:reverse-zero-trajectory}
\end{equation}

At the second prefix of $y^{(\delta)}$, let the vocabulary contain two tokens
and set
\begin{equation}
    \pi_S^\theta(\cdot\mid h_2)
    =\left(\tfrac12,\tfrac12\right),
    \qquad
    \pi_T(\cdot\mid h_2)
    =(\delta,1-\delta),
    \qquad 0<\delta<\tfrac12.
    \label{eq:reverse-counterexample-distributions}
\end{equation}
Use a direct logit parameterization, so that
$J_\theta(h_2)^\top=I$.  If
\begin{equation}
    r_1=\log\frac{1/2}{\delta},
    \qquad
    r_2=\log\frac{1/2}{1-\delta},
\end{equation}
then
\begin{equation}
    D_{\mathrm{R\text{-}KL}}\left(
        \pi_S^\theta(\cdot\mid h_2),
        \pi_T(\cdot\mid h_2)
    \right)
    =\frac{r_1+r_2}{2}.
\end{equation}
Substituting into Equation~\ref{eq:reverse-kl-logit-gradient} gives
\begin{align}
    &\nabla_{z_S^\theta(h_2)}
    D_{\mathrm{R\text{-}KL}}\left(
        \pi_S^\theta(\cdot\mid h_2),
        \pi_T(\cdot\mid h_2)
    \right)\nonumber\\
    &\qquad=
    \frac14
    \log\left(\frac{1-\delta}{\delta}\right)(1,-1).
    \label{eq:reverse-counterexample-logit-gradient}
\end{align}
The first-token contribution is zero and $L_y=2$.  Therefore,
\begin{align}
    \left\lVert
        g_{\mathrm{R\text{-}KL}}^\theta(x,y^{(\delta)})
    \right\rVert_2
    &=
    \frac12
    \left\lVert
        \nabla_{z_S^\theta(h_2)}
        D_{\mathrm{R\text{-}KL}}\left(
            \pi_S^\theta(\cdot\mid h_2),
            \pi_T(\cdot\mid h_2)
        \right)
    \right\rVert_2\\
    &=
    \frac{\sqrt2}{8}
    \left|\log\left(\frac{1-\delta}{\delta}\right)\right|,
    \label{eq:reverse-counterexample-trajectory-gradient}
\end{align}
which diverges as $\delta\downarrow0$.

Let $\Delta_y$ denote the point mass on completion $y$.  For any fixed
$\varepsilon\in(0,1)$, define
\begin{equation}
    \rho'(\cdot\mid x)=\Delta_{y^{(0)}},
    \qquad
    \rho(\cdot\mid x)
    =(1-\varepsilon)\Delta_{y^{(0)}}
      +\varepsilon\Delta_{y^{(\delta)}}.
    \label{eq:reverse-counterexample-rollouts}
\end{equation}
Their total-variation distance is
\begin{equation}
    \mathrm{TV}\!\left(
        \rho(\cdot\mid x),
        \rho'(\cdot\mid x)
    \right)
    =\varepsilon.
\end{equation}
Using Equations~\ref{eq:reverse-semigradient-trajectory-form},
\ref{eq:reverse-zero-trajectory}, and
\ref{eq:reverse-counterexample-rollouts}, the difference between their
expected reverse-KL updates is
\begin{align}
    &\left\lVert
        \bar\nabla_\theta
        \mathcal L_{D_{\mathrm{R\text{-}KL}}}(\theta;\rho)
        -
        \bar\nabla_\theta
        \mathcal L_{D_{\mathrm{R\text{-}KL}}}(\theta;\rho')
    \right\rVert_2\nonumber\\
    &\qquad=
    \varepsilon
    \left\lVert
        g_{\mathrm{R\text{-}KL}}^\theta(x,y^{(\delta)})
    \right\rVert_2\\
    &\qquad=
    \frac{\varepsilon\sqrt2}{8}
    \left|\log\left(\frac{1-\delta}{\delta}\right)\right|.
    \label{eq:reverse-counterexample-update-difference}
\end{align}
The rollout total-variation distance remains equal to $\varepsilon$, while
Equation~\ref{eq:reverse-counterexample-update-difference} becomes
arbitrarily large as $\delta\downarrow0$.  Hence no universal constant $C$ can
satisfy Equation~\ref{eq:reverse-impossible-bound}.
\end{proof}

The construction requires no zero-probability token: every teacher probability
is strictly positive for $\delta>0$.  It therefore applies to softmax language
models.  The theorem states that proximity between two rollout distributions
alone cannot uniformly control the difference between their expected
reverse-KL updates.  It does not assert that reverse-KL training is always
unstable or that either on- or off-policy rollouts are universally preferable.

\subsection{A reverse-KL bound under controlled likelihood ratios}
\label{app:reverse-bounded-ratios}

The absence of a distribution-free guarantee does not preclude a conditional
stability result.  Such a result follows when the range of the token-level
student--teacher log-likelihood ratios is bounded.

\begin{corollary}[Reverse-KL stability under a bounded log-ratio range]
\label{cor:reverse-bounded-ratio}
Suppose that the Jacobian condition in
Equation~\ref{eq:jacobian-bound} holds.  In addition, suppose that there
exists $R<\infty$ such that, at every prefix occurring under $\rho$ or
$\rho'$,
\begin{align}
    &\max_{v\in\mathcal V}
    \log\frac{\pi_S^\theta(v\mid h_n)}{\pi_T(v\mid h_n)}
    -
    \min_{v\in\mathcal V}
    \log\frac{\pi_S^\theta(v\mid h_n)}{\pi_T(v\mid h_n)}
    \le R.
    \label{eq:reverse-log-ratio-range}
\end{align}
Then
\begin{align}
    &\left\lVert
        \bar\nabla_\theta
        \mathcal L_{D_{\mathrm{R\text{-}KL}}}(\theta;\rho)
        -
        \bar\nabla_\theta
        \mathcal L_{D_{\mathrm{R\text{-}KL}}}(\theta;\rho')
    \right\rVert_2\nonumber\\
    &\qquad\le
    2BR\,
    \mathbb E_{x\sim p_{\mathrm{data}}}
    \left[
        \mathrm{TV}\!\left(
            \rho(\cdot\mid x),
            \rho'(\cdot\mid x)
        \right)
    \right].
    \label{eq:reverse-rollout-stability-bounded-ratio}
\end{align}
Moreover, for every rollout policy $\rho$,
\begin{equation}
    \mathbb E_{x\sim p_{\mathrm{data}}}
    \mathbb E_{y\sim\rho(\cdot\mid x)}
    \left[
        \left\lVert
            g_{\mathrm{R\text{-}KL}}^\theta(x,y)
        \right\rVert_2^2
    \right]
    \le B^2R^2.
    \label{eq:reverse-gradient-second-moment-bounded-ratio}
\end{equation}
\end{corollary}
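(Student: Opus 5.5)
Our approach is to follow the proof of Theorem~\ref{thm:forward-rollout-stability} step for step, changing only the per-token bound on the logit gradient. The key claim is that, under the log-ratio range condition in Equation~\ref{eq:reverse-log-ratio-range}, the reverse-KL logit gradient in Equation~\ref{eq:reverse-kl-logit-gradient} has Euclidean norm at most $R$ at every prefix. We would then apply the Jacobian bound, the triangle inequality over token positions, and the vector total-variation inequality (Equation~\ref{eq:vector-tv-bound}) exactly as before.

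\emph{Per-token bound.} Fix a prefix $h_n$ and write $p=\pi_S^\theta(\cdot\mid h_n)$ and $D=\sum_u p(u) r_n(u)$. Since $D$ is a $p$-weighted average of the coordinates of $r_n$, it lies in $[\min_v r_n(v),\max_v r_n(v)]$. Hence $|r_n(v)-D|\le R$ for every $v$ in the support of $p$. Coordinates with $p(v)=0$ contribute zero, and there the product is read as $0$ even when the log-ratio is $-\infty$; we would add a remark that the range condition is imposed only on the support of the student. Then
\[
\Bigl\lVert p\odot(r_n-D\mathbf 1)\Bigr\rVert_2^2=\sum_v p(v)^2 (r_n(v)-D)^2\le R^2\sum_v p(v)^2\le R^2 .
\]
We could even keep the sharper factor $\lVert p\rVert_2\le1$, but $R$ suffices. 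Combined with Equation~\ref{eq:jacobian-bound}, each token term of Equation~\ref{eq:reverse-trajectory-gradient} has norm at most $BR$.

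\emph{Trajectory and expectation steps.} Averaging over $n$ and using the triangle inequality gives $\lVert g_{\mathrm{R\text{-}KL}}^\theta(x,y)\rVert_2\le BR$ pointwise, for every completion with positive probability under $\rho$ or $\rho'$. We then apply Equation~\ref{eq:vector-tv-bound} conditionally on $x$, with supremum $BR$. This yields $2BR\,\mathrm{TV}(\rho(\cdot\mid x),\rho'(\cdot\mid x))$ for each prompt. Taking the expectation over prompts and using Jensen/triangle for the norm of an expectation, as in the forward proof, gives Equation~\ref{eq:reverse-rollout-stability-bounded-ratio}. Squaring the pointwise bound and taking expectations gives Equation~\ref{eq:reverse-gradient-second-moment-bounded-ratio}.

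The only genuinely new step, and the place to be careful, is the per-token bound. Two points need attention there: (i) $D$ must be bounded between the minimum and maximum of $r_n$ over the student's support, not merely shown to be nonnegative; and (ii) the supremum in Equation~\ref{eq:vector-tv-bound} must be taken only over completions charged by $\rho$ or $\rho'$, which is where the hypotheses hold. If Equation~\ref{eq:vector-tv-bound} is read with a global supremum, we would restate it over the union of the supports, since it is derived from $\lVert\sum_y g(y)(P(y)-Q(y))\rVert\le\sup_{y\in\mathrm{supp}}\lVert g\rVert\,\lVert P-Q\rVert_1$.
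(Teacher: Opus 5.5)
Your proposal is correct and follows essentially the same route as the paper: you show $|r_n(v)-\overline r_n|\le R$ because the reverse KL is a $\pi_S^\theta$-weighted average of the log-ratios, which gives a per-token logit-gradient norm of at most $R$, and you then reuse the Jacobian, triangle-inequality and total-variation steps from the forward-KL theorem. The only differences are cosmetic. You bound the Euclidean norm directly via $\sum_v p(v)^2\le 1$, whereas the paper dominates it by the $\ell_1$ norm $\sum_v \pi_S^\theta(v\mid h_n)\,|r_n(v)-\overline r_n|\le R$; your remarks about restricting the supremum to completions charged by $\rho$ or $\rho'$ are valid refinements that the paper leaves implicit.
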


\begin{proof}
For a fixed prefix, let
\begin{equation}
    \overline r_n
    :=
    \sum_{v\in\mathcal V}
    \pi_S^\theta(v\mid h_n)r_n(v)
    =
    D_{\mathrm{R\text{-}KL}}\left(
        \pi_S^\theta(\cdot\mid h_n),
        \pi_T(\cdot\mid h_n)
    \right).
    \label{eq:reverse-mean-log-ratio}
\end{equation}
Because $\overline r_n$ is a weighted average of the coordinates of
$r_n$, it lies between their minimum and maximum.  The assumption in
Equation~\ref{eq:reverse-log-ratio-range} therefore gives
\begin{equation}
    \left|r_n(v)-\overline r_n\right|\le R
    \qquad\text{for every }v\in\mathcal V.
    \label{eq:centered-log-ratio-bound}
\end{equation}
Using Equation~\ref{eq:reverse-kl-logit-gradient},
\begin{align}
    &\left\lVert
        \nabla_{z_S^\theta(h_n)}
        D_{\mathrm{R\text{-}KL}}\left(
            \pi_S^\theta(\cdot\mid h_n),
            \pi_T(\cdot\mid h_n)
        \right)
    \right\rVert_2\nonumber\\
    &\qquad\le
    \left\lVert
        \pi_S^\theta(\cdot\mid h_n)
        \odot(r_n-\overline r_n\mathbf1)
    \right\rVert_1\\
    &\qquad=
    \sum_{v\in\mathcal V}
    \pi_S^\theta(v\mid h_n)
    \left|r_n(v)-\overline r_n\right|\\
    &\qquad\le R.
    \label{eq:reverse-logit-gradient-bounded-ratio}
\end{align}
The Jacobian condition now implies that the norm of each token-level parameter
gradient is at most $BR$.  Applying the triangle inequality to the token
average in Equation~\ref{eq:reverse-trajectory-gradient} gives
\begin{equation}
    \left\lVert
        g_{\mathrm{R\text{-}KL}}^\theta(x,y)
    \right\rVert_2
    \le BR.
    \label{eq:reverse-trajectory-gradient-bounded-ratio}
\end{equation}
Applying the total-variation argument from the proof of
Theorem~\ref{thm:forward-rollout-stability} with
$g(y)=g_{\mathrm{R\text{-}KL}}^\theta(x,y)$ proves
Equation~\ref{eq:reverse-rollout-stability-bounded-ratio}.  Squaring
Equation~\ref{eq:reverse-trajectory-gradient-bounded-ratio} and taking
expectations proves Equation~\ref{eq:reverse-gradient-second-moment-bounded-ratio}.
\end{proof}

\paragraph{Interpretation.}
Theorem~\ref{thm:reverse-no-bound} provides a worst-case distinction between
the two KL directions.  For forward KL, rollout proximity alone controls the
difference between expected updates, subject only to the common Jacobian bound.
For reverse KL, the same rollout proximity is insufficient: a rollout can place
even a small amount of probability on completions containing a prefix at which
the student assigns substantial mass to a teacher-improbable token, producing
an arbitrarily large change in the expected update.  The two factors therefore
play distinct and complementary roles: the rollout policy determines which
prefixes are visited and how strongly they are weighted, while the local
student--teacher log-likelihood ratios determine how large their reverse-KL
gradient contributions can be.  A rollout-probability change of order
$\varepsilon$ can consequently be amplified by a local gradient of order
$\log(1/\delta)$, producing an update difference of order
$\varepsilon\log(1/\delta)$.

Extreme likelihood ratios may already be present at initialization, but the
result should not be interpreted solely as a statement about poor
initialization or unusually large student logits.  In the counterexample, the
student distribution is uniform; the gradient becomes large because the
teacher assigns vanishing probability to a student-supported token.  Such
disagreement may instead emerge during training or occur only at particular
prefixes that are emphasized by one rollout policy.

This distinction is consistent with the results in Section~5, where reverse-KL
performance varies substantially across the rollout-policy spectrum and is
less stable than forward-KL performance.  It is also consistent with the
gradient-clipping ablation in Section~6: removing clipping disproportionately
harms reverse KL, whose logit gradient contains an unbounded
student--teacher log-likelihood ratio.  Global gradient clipping bounds the
realized gradient norm and can therefore mitigate this failure mode, although
it does not impose the likelihood-ratio condition in
Corollary~\ref{cor:reverse-bounded-ratio} or guarantee similar update
directions under different rollout policies.

Neither result proves that reverse KL must be unstable in a particular run, or
that student-generated rollouts must outperform teacher-generated rollouts.
The empirical findings should instead be interpreted as instances of the
greater sensitivity permitted by the reverse-KL gradient.  The bounded-ratio
corollary additionally predicts that reverse-KL sensitivity should be
associated with prefixes having extreme student--teacher log-likelihood
ratios, suggesting a directly testable diagnostic for the observed training
instabilities.

\subsection{Relation to sequence-level KL divergences}
\label{app:sequence-level-kl}

There is also an exact structural relationship between KL direction and
rollout source.  To state it cleanly, consider a fixed prompt $x$ and
completions with a fixed generation horizon $H$, where $H$ denotes the number
of generated token positions.  Variable-length completions can be represented
by including an end-of-sequence token and treating the terminated state as
absorbing up to position $H$.

Using the prefix notation $h_n=(x,y_{<n})$, the teacher and student induce the
sequence distributions
\begin{equation}
    \pi_T(y\mid x)
    :=\prod_{n=1}^H\pi_T(y_n\mid h_n),
    \qquad
    \pi_S^\theta(y\mid x)
    :=\prod_{n=1}^H\pi_S^\theta(y_n\mid h_n).
    \label{eq:student-sequence-distribution}
\end{equation}
Their marginal distributions over the prefix preceding token $n$ are
\begin{equation}
    \pi_T(y_{<n}\mid x)
    :=\prod_{m=1}^{n-1}\pi_T(y_m\mid h_m),
     \qquad
    \pi_S^\theta(y_{<n}\mid x)
    :=\prod_{m=1}^{n-1}\pi_S^\theta(y_m\mid h_m).
    \label{eq:student-prefix-distribution}
\end{equation}
For $n=1$, these are empty products, so both distributions place unit mass on
the empty output prefix.

\begin{proposition}[Matched rollouts recover sequence-level KL]
\label{prop:sequence-chain-rule}
For every prompt $x$,
\begin{align}
D_{\mathrm{KL}}\!\left(\pi_T(\cdot\mid x)\Vert\pi_S^\theta(\cdot\mid x)\right)
&=
\sum_{n=1}^H
\mathbb E_{y_{<n}\sim\pi_T(\cdot\mid x)}
\!\left[
D_{\mathrm{F\text{-}KL}}\!\left(
\pi_S^\theta(\cdot\mid h_n),
\pi_T(\cdot\mid h_n)
\right)
\right],
\label{eq:sequence-forward-kl}\\
D_{\mathrm{KL}}\!\left(\pi_S^\theta(\cdot\mid x)\Vert\pi_T(\cdot\mid x)\right)
&=
\sum_{n=1}^H
\mathbb E_{y_{<n}\sim\pi_S^\theta(\cdot\mid x)}
\!\left[
D_{\mathrm{R\text{-}KL}}\!\left(
\pi_S^\theta(\cdot\mid h_n),
\pi_T(\cdot\mid h_n)
\right)
\right].
\label{eq:sequence-reverse-kl}
\end{align}
\end{proposition}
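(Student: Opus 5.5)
The plan is to prove the chain rule for KL divergence between two autoregressive sequence distributions, using only the product factorizations in Equations~\ref{eq:student-sequence-distribution} and~\ref{eq:student-prefix-distribution}. We prove the forward identity~\eqref{eq:sequence-forward-kl} in full. The reverse identity~\eqref{eq:sequence-reverse-kl} then follows by exchanging the roles of $\pi_T$ and $\pi_S^\theta$: sequences are sampled from $\pi_S^\theta$, and the token-level integrand becomes $D_{\mathrm{R\text{-}KL}}(\pi_S^\theta,\pi_T)=\mathrm{KL}(\pi_S^\theta\Vert\pi_T)$.

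\textbf{Step 1: take logs of the factorizations.} For any completion $y$ with $\pi_T(y\mid x)>0$,
\[
\log\frac{\pi_T(y\mid x)}{\pi_S^\theta(y\mid x)}=\sum_{n=1}^H\log\frac{\pi_T(y_n\mid h_n)}{\pi_S^\theta(y_n\mid h_n)}.
\]

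\textbf{Step 2: take expectations and exchange with the sum.} Take the expectation under $y\sim\pi_T(\cdot\mid x)$ and swap it with the finite sum. The sequence space is finite because $\mathcal V$ is finite and $H$ is fixed, so this exchange needs no justification. The $n$-th summand depends only on $y_{\le n}$. Marginalise the tokens $y_{n+1},\dots,y_H$: their conditional factors sum to one, since each is a probability distribution. The marginal of $y_{\le n}$ under $\pi_T$ is then $\pi_T(y_{<n}\mid x)\,\pi_T(y_n\mid h_n)$.

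\textbf{Step 3: identify each term as a token-level KL.} By the tower property, the $n$-th term becomes
\[
\mathbb E_{y_{<n}\sim\pi_T(\cdot\mid x)}\Bigl[\sum_{v\in\mathcal V}\pi_T(v\mid h_n)\log\frac{\pi_T(v\mid h_n)}{\pi_S^\theta(v\mid h_n)}\Bigr].
\]
The inner sum is exactly $D_{\mathrm{F\text{-}KL}}(\pi_S^\theta(\cdot\mid h_n),\pi_T(\cdot\mid h_n))$, and summing over $n$ gives~\eqref{eq:sequence-forward-kl}. For $n=1$ the prefix distribution is the point mass on the empty prefix, so this term is the first-token KL at $h_1=(x)$. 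After an end-of-sequence token, the absorbing convention makes the student and teacher conditionals coincide, so those positions contribute zero. This is consistent with treating the terminated state as absorbing.

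\textbf{Main obstacle: infinite values.} The only delicate point is when some KL is infinite, for example when $\pi_S^\theta(v\mid h_n)=0$ while $\pi_T(v\mid h_n)>0$ on a $\pi_T$-reachable prefix. We use the conventions $0\log 0=0$ and $p\log(p/0)=+\infty$ for $p>0$. Every term in the decomposition lies in $[0,\infty]$, because each token-level KL is nonnegative. Hence the interchange of sums is valid in the extended reals by nonnegativity (Tonelli on a finite space). Moreover, the sequence-level KL is infinite exactly when some reachable token-level KL is infinite, since a sequence has $\pi_S^\theta(y\mid x)=0<\pi_T(y\mid x)$ iff some factor does. Therefore both sides are $+\infty$ simultaneously, and in the softmax setting all probabilities are positive anyway.
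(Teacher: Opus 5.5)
Your proof is correct and follows the paper's route. You factorize the sequence log-ratio into per-token log-ratios, exchange the expectation with the finite sum over positions, and condition on the prefix so that averaging the next token under $\pi_T$ (resp.\ $\pi_S^\theta$) yields the token-level forward (resp.\ reverse) KL. Your treatment of infinite values and of the absorbing end-of-sequence convention goes beyond what the paper writes out. One small correction: the interchange is justified because every per-token log-ratio on a $\pi_T$-supported sequence lies in $(-\infty,\infty]$ and the space is finite. It is not justified by nonnegativity, since individual log-ratios can be negative; only their conditional averages, the token-level KLs, are nonnegative.
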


\begin{proof}
For the forward direction, the autoregressive factorization gives
\begin{align}
    &D_{\mathrm{KL}}\!\left(
        \pi_T(\cdot\mid x)\Vert\pi_S^\theta(\cdot\mid x)
    \right)\nonumber\\
    &\quad=
    \mathbb E_{y\sim\pi_T(\cdot\mid x)}
    \left[
        \log
        \frac{\pi_T(y\mid x)}{\pi_S^\theta(y\mid x)}
    \right]\\
    &\quad=
    \mathbb E_{y\sim\pi_T(\cdot\mid x)}
    \left[
        \sum_{n=1}^H
        \log
        \frac{\pi_T(y_n\mid h_n)}
             {\pi_S^\theta(y_n\mid h_n)}
    \right]\\
    &\quad=
    \sum_{n=1}^H
    \mathbb E_{y_{<n}\sim\pi_T(\cdot\mid x)}
    \left[
        \mathbb E_{y_n\sim\pi_T(\cdot\mid h_n)}
        \left[
            \log
            \frac{\pi_T(y_n\mid h_n)}
                 {\pi_S^\theta(y_n\mid h_n)}
        \right]
    \right]\\
    &\quad=
    \sum_{n=1}^H
    \mathbb E_{y_{<n}\sim\pi_T(\cdot\mid x)}
    \left[
        D_{\mathrm{F\text{-}KL}}\left(
            \pi_S^\theta(\cdot\mid h_n),
            \pi_T(\cdot\mid h_n)
        \right)
    \right].
\end{align}
The third equality first conditions on the prefix $y_{<n}$ and then averages
over the teacher distribution of the next token.  This proves
Equation~\ref{eq:sequence-forward-kl}.  Exchanging the student and teacher
distributions proves Equation~\ref{eq:sequence-reverse-kl}.
\end{proof}

Proposition~\ref{prop:sequence-chain-rule} identifies teacher rollouts with
forward KL and student rollouts with reverse KL as the two chain-rule
decompositions of sequence-level KL.  The crossed pairings do not, in general,
correspond to either sequence-level KL divergence.  This is an identity between
objective values, not an ordering of the objectives or their optimization
properties.  In particular, under student rollouts, we stop gradients through
the sampled prefixes.  Consequently, although the expected OnPD reverse-KL
objective equals sequence-level reverse KL in the fixed-horizon, summed-loss
setting, the update used in training is only a semi-gradient and need not equal
the total gradient of that sequence-level divergence.

Finally, Proposition~\ref{prop:sequence-chain-rule} uses sums of token-level
divergences.  For fixed $H$, the corresponding token-averaged objectives are
exactly $1/H$ times the sequence-level KL divergences.  In the experiments,
however, each variable-length completion is normalized by its realized length
$L_y$.  The implemented objective is therefore a length-weighted token-level
divergence and is not generally a constant multiple of sequence-level KL.

\clearpage
\section{Experimental Details}
\label{app:experimental_details}


\subsection{Datasets}
\label{app:experimental_details:datasets}
\subsubsection{Training datasets.}

We train on three reasoning datasets, each divided into a training split and a disjoint held-out evaluation split. Every training example consists of a task prompt and a worked demonstration. We use the task-specific prompts in Table~\ref{tab:dataset-prompts}. During teacher supervised fine-tuning (SFT), the vanilla task prompt is provided as input and the demonstration is used as the target completion. The held-out demonstrations are never used for training or provided during evaluation.

\textit{\textsc{Countdown-3}.}
We procedurally generated 5,000 training and 500 evaluation candidates using random seeds 42 and 43, respectively. Each problem contains three operands sampled with replacement from
$\{1,\ldots,10,12,15,20,25,50,75\}$ and a target obtained by combining all operands through addition, subtraction, multiplication, or exact integer division. We queried GPT-5 for worked demonstrations and retained only correctly solved, properly tagged examples. This produced 4,163 training examples and 426 held-out evaluation examples.

\textit{\textsc{Science}.}
The Science dataset contains four-option multiple-choice questions from SciKnowEval. Its fixed splits contain 2,674 training examples and 507 held-out evaluation examples. Each training example includes a worked reasoning trace followed by the correct option letter in an \texttt{<answer>} block.

\textit{\textsc{MedReason}.}
MedReason consists of medical multiple-choice questions from \texttt{UCSC-VLAA/MedReason}. From the prepared source partition, we selected candidate pools of 5,000 training and 500 evaluation questions. We generated worked demonstrations with GPT-5 and retained only responses that were properly formatted and selected the ground-truth option. The resulting splits contain 4,748 training examples and 471 held-out evaluation examples.

\begin{table*}[h]
\centering
\small
\setlength{\tabcolsep}{5pt}
\renewcommand{\arraystretch}{1.15}
\begin{tabular}{p{0.17\textwidth}p{0.20\textwidth}p{0.60\textwidth}}
\toprule
Dataset & System prompt & User prompt \\
\midrule
\textsc{Countdown-3}
&
You are a careful arithmetic solver.
&
Solve the Countdown arithmetic problem. Using the provided numbers, create an equation that equals the target. You may use the operations $+$, $-$, $\times$, and $/$, and each number must be used exactly once; every division must have an integer result. Show your work in \texttt{<think>...</think>} tags and return the solution as compact, comma-separated equations in \texttt{<answer>...</answer>} tags. For example, \texttt{<answer>2+3=5,5*4=20</answer>}. The question is appended as \texttt{Numbers: [\ldots]} and \texttt{Target: \ldots}.
\\
\addlinespace
\textsc{Science}
&
You are a careful solver of multiple-choice science questions.
&
Given a question and four options, select the correct answer. Reason step by step in \texttt{<think>...</think>} tags and place only the corresponding option letter (A, B, C, or D) in \texttt{<answer>...</answer>} tags. The question and answer choices are appended to this instruction.
\\
\addlinespace
\textsc{MedReason}
&
You are a careful medical reasoning assistant.
&
Given a medical multiple-choice question and its answer choices, select the correct answer. Reason step by step in \texttt{<think>...</think>} tags and place only the letter corresponding to the correct option in \texttt{<answer>...</answer>} tags; do not repeat the answer text. The question and answer choices are appended to this instruction.
\\
\bottomrule
\end{tabular}
\caption{Task prompts used for teacher SFT and policy-distillation experiments. During SFT, the worked demonstration is supplied as the target completion rather than included in the input prompt.}
\label{tab:dataset-prompts}
\end{table*}

\paragraph{In-Distribution Evaluation Procedure.} To ensure computational efficiency, unless otherwise stated for our evaluations we use the first 200 examples from each dataset, giving 600 generated responses per evaluated checkpoint. We generate one response per prompt from the student at temperature 0.5 and top-$p=1.0$. The maximum prompt length is 2,048 tokens for Countdown-3 and Science and 4,096 tokens for MedReason; the maximum completion length is 2,048 tokens for all three datasets. We report the accuracy on the dataset matching the student's training dataset. Unless otherwise stated, the in-distribution evaluation uses temperature 0.5.

\subsubsection{OOD Evaluation Datasets.}
\label{app:ood_eval}
To measure out-of-distribution (OOD) capability retention, we evaluate our trained student models on 7 benchmarks spanning three broad categories: knowledge and factual reliability (MMLU-Pro \citep{NEURIPS2024_ad236edc} and TruthfulQA \citep{lin-etal-2022-truthfulqa}), instruction following and code generation (IFEval \citep{zhou2023instructionfollowingevaluationlargelanguage} and HumanEval \citep{chen2021evaluating}), and social reasoning and safety (EQ-Bench \citep{paech2024eqbenchemotionalintelligencebenchmark}, BBQ \citep{parrish-etal-2022-bbq}, and ToxiGen \citep{hartvigsen-etal-2022-toxigen}).

We use the interface provided by the Evaluation Harness \citep{eval-harness}, ensuring a standardised evaluation protocol. We evaluate 200 examples each from TruthfulQA-MC1, IFEval, ToxiGen, and BBQ; all 164 HumanEval-Instruct examples; and all 171 EQ-Bench examples. For MMLU-Pro, the 200-example limit is applied independently to each of its 14 subject areas, yielding 2,800 examples. In total, this gives 3,935 examples across seven benchmarks. We use each benchmark's standard metric: extracted exact match for MMLU-Pro, accuracy for TruthfulQA-MC1, ToxiGen, and BBQ, pass@1 for HumanEval-Instruct, strict prompt-level accuracy for IFEval, and the normalized EQ-Bench score. The aggregate OOD score is the unweighted mean of these seven task scores.

\subsection{Models}

\subsubsection{Student models}
All primary experiments use \texttt{Meta-Llama-3.2-1B-Instruct} as the student model. Each run begins from the same checkpoint and trains a separate student for each dataset, data-collection policy, KL direction, learning rate, and random seed. The student is fully fine-tuned rather than adapted with LoRA; all teacher parameters remain frozen. Each student is paired with the teacher from the same family described above. These models use a compatible tokenizer and vocabulary, allowing the forward- and reverse-KL objectives to compare their token distributions directly. Both models receive the same vanilla task prompt and chat-template formatting. The distinction between on- and off-policy training is therefore solely the model used to generate the training trajectory: the student generates trajectories for on-policy distillation, whereas the frozen teacher generates them for off-policy distillation.

\subsubsection{Teacher Model Training}
\label{app:details:trained_teacher}

We construct a separate teacher for each dataset and model family using two stages: supervised fine-tuning (SFT) on worked demonstrations, followed, where applicable, by reinforcement learning with Group Relative Policy Optimization (GRPO). Teachers are therefore task-specific rather than jointly trained across datasets.

\paragraph{Supervised fine-tuning.}
We first fine-tune each teacher backbone on the training split of the corresponding dataset. The input is the vanilla task prompt from Table~\ref{tab:dataset-prompts}, without a demonstration in context, and the target completion is the associated worked demonstration. Thus, SFT trains the teacher to reproduce both the reasoning trace and the formatted final answer. For Countdown-3 and MedReason, these targets are the verified GPT-5 demonstrations described above; for Science, we use the worked demonstrations included in the processed training split.

We train using token-level cross-entropy over the demonstration tokens for 500 optimizer steps. We use a learning rate of \(5\times10^{-5}\), a cosine learning-rate schedule with 10 warm-up steps, an effective batch size of 32, and gradient clipping at norm 1.0. Prompts and completions are truncated to at most 2,048 tokens each. For computational efficiency, all teachers are adapted with LoRA of rank 256 and scaling parameter 256, applied to the query, key, value, output, gate, up, and down projection matrices, with no LoRA dropout or bias. Separate SFT runs are performed for Countdown-3, Science, and MedReason.

\paragraph{GRPO refinement.}
For Science and MedReason, where the performance after the SFT stage has not yet saturated as in the case of Countdown-3, we further refine the dataset-specific SFT checkpoints using GRPO. GRPO receives only the vanilla task prompt and samples eight candidate completions per prompt at temperature 1.0 and nucleus-sampling threshold \(p=0.95\). The reward is the sum of task correctness and formatting rewards. Correct answers receive a reward of 2.0, using exact option accuracy for Science and MedReason. Additional rewards encourage exactly one pair of \texttt{<think>} and \texttt{<answer>} tags, the correct ordering of these tags, and strict adherence to the required response structure; each formatting component contributes at most 0.5. We use group-normalized rewards with the DAPO loss and no reference-model KL penalty (\(\beta=0\)).

GRPO training runs for 500 optimizer steps with a learning rate of \(10^{-5}\), a cosine schedule, a warm-up ratio of 0.1, an eight-bit AdamW optimizer, and gradient clipping at norm 1.0. The per-device batch size is 8 with 16 gradient-accumulation steps. Each rollout batch contains 32 completions for Science and 48 for MedReason. The maximum prompt and completion lengths are 2,048 and 824 tokens, respectively. GRPO also uses rank-256 LoRA with scaling parameter 256 over the same projection matrices as SFT; models are loaded in four-bit precision, while optimization uses bfloat16. Generation is performed with the colocated vLLM backend.

\paragraph{Selected teachers.} Teachers are paired with students from the same model family; Llama-3.1 teachers are paired with Llama-3.2 students because they share a tokenizer and vocabulary. For Science and MedReason, the selected teachers are the GRPO-refined checkpoints, generally at step 500; the selected Qwen2.5 MedReason and Llama-3 Science checkpoints are taken at step 450 due to better performance. For Countdown-3, we use the step-500 SFT checkpoint directly, without an additional GRPO stage. We define the fixed teacher policy as \(\pi_T(\cdot\mid h)=\operatorname{softmax}(z_T(h)/0.5)\) and the student policy as \(\pi_S^\theta(\cdot\mid h)=\operatorname{softmax}(z_S^\theta(h)/1.0)\). These definitions are used consistently both for trajectory generation and when computing the token-level KL objective. Accordingly, OffPD samples trajectories from \(\pi_T\), whereas OnPD samples trajectories from \(\pi_S^\theta\); in both conditions, the loss compares the same \(\pi_T\) and \(\pi_S^\theta\) at the resulting prefixes. Temperature is therefore not varied as an additional experimental factor between OnPD and OffPD: the sole intervention is whether the rollout distribution is \(\pi_T\) or \(\pi_S^\theta\). We use temperature 0.5 for the teacher because preliminary sampling at temperature 1.0 occasionally produced degenerate continuations near the end of long trajectories. We ablate the effect of this choice in Appendix~\ref{app:temperature_ablation}. We characterise the performance of the selected teacher models after this training procedure in \Cref{tab:trained-teacher-performance}.

\begin{table}[t]
    \centering
    \caption{
        In-distribution performance of the trained teachers.
        Results use 200 test examples per dataset, temperature $0.5$,
        one sampled response per prompt, and a 2,048-token completion limit.
        Response lengths are reported as mean $\pm$ standard deviation.
    }
    \label{tab:trained-teacher-performance}
    \small
    \setlength{\tabcolsep}{8pt}
    \begin{tabular}{llrr}
        \toprule
        \textbf{Teacher} & \textbf{Dataset}
        & \textbf{Accuracy (\%)} & \textbf{Response length (tokens)} \\
        \midrule
        Qwen2.5-7B
            & MedReason   & 77.5  & $90.95 \pm 28.58$ \\
            & Science     & 72.0  & $288.82 \pm 90.38$ \\
            & Countdown-3 & 95.5  & $168.42 \pm 316.40$ \\

        \midrule
        Llama-3.1-8B
            & MedReason   & 85.5  & $93.74 \pm 30.81$ \\
            & Science     & 67.5  & $134.65 \pm 50.77$ \\
            & Countdown-3 & 99.0  & $98.82 \pm 78.54$ \\
        \bottomrule
    \end{tabular}
\end{table}

\subsection{Training Details}
\label{app:details:training}


The most important hyperparameters are presented in \Cref{tab:student-training-hyperparameters}. We fully fine-tune the student for 150 optimizer steps, using one sampled completion per prompt. 
We choose the 150-step training budget based on the learning curves in \Cref{fig:app:learning_curves}, which show that target-task performance and forgetting have largely stabilised by this point.
We use the same training budget across configurations.
Each optimizer step uses an effective batch of 32 prompts, implemented with a per-device batch size of 1 and 32 gradient-accumulation steps. Prompts are sampled without replacement within each batch, but may recur across optimizer steps. We use fused AdamW with $\beta_1=0.9$, $\beta_2=0.999$, $\epsilon=10^{-8}$, no weight decay, and gradient clipping at norm 1.0. The learning rate is held constant following 10 linear warm-up steps. Our main experiments consider learning rates in $\{1\times10^{-5},5\times10^{-5}\}$; the dedicated Countdown-3 learning-rate study additionally considers $\{2,3,4,6\}\times10^{-5}$. Results are reported over random seeds 42, 43, and 44.

We train with either forward KL, $\mathrm{KL}(p_{\mathrm{teacher}}\|p_{\mathrm{student}})$, or reverse KL, $\mathrm{KL}(p_{\mathrm{student}}\|p_{\mathrm{teacher}})$. The student and teacher distributions use temperatures 1.0 and 0.5, respectively. The loss is computed over the full vocabulary at every non-padding completion position and then averaged over completion tokens and examples. The teacher remains frozen throughout training.

For on-policy distillation, trajectories are sampled from the student at temperature 1.0. For off-policy distillation, trajectories are sampled from the teacher at temperature 0.5. Both use $p=1.0$ nucleus sampling and vanilla task prompts without demonstrations. The maximum prompt length is 2,048 tokens. Maximum completion lengths are 512 tokens for Countdown-3 and 1,024 tokens for Science and MedReason. Gradient checkpointing is enabled for on-policy runs and disabled for off-policy runs. Students are trained without LoRA, so all student parameters are updated.

\begin{table*}[t]
\centering
\small
\setlength{\tabcolsep}{5pt}
\renewcommand{\arraystretch}{1.12}
\begin{tabularx}{\textwidth}{p{0.25\textwidth}p{0.27\textwidth}X}
\toprule
Hyperparameter & Value & Notes \\
\midrule

Optimization steps
& 150
& Reported primary comparisons use the checkpoint at step 150. \\

Optimizer
& Fused AdamW
& $\beta_1=0.9$, $\beta_2=0.999$, and $\epsilon=10^{-8}$. \\

Weight decay
& 0
&  \\

Learning-rate schedule
& Constant with linear warm-up
& 10 warm-up steps. \\

Per-device batch size
& 1
&  \\

Gradient accumulation
& 32 steps
& Gives an effective batch size of 32 prompts per optimizer step. \\

Trajectories per prompt
& 1
& One completion is sampled for every selected prompt. \\

Maximum gradient norm
& 1.0
&  \\

Student temperature
& 1.0
& Used for student sampling and student logits in the KL objective. \\

Teacher temperature
& 0.5
& Used for teacher sampling and teacher logits in the KL objective. \\

On-policy trajectory source
& Student
& Sampled at temperature 1.0. \\

Off-policy trajectory source
& Teacher
& Sampled at temperature 0.5. \\

Top-$p$
& 1.0
& Used for both on- and off-policy trajectory sampling. \\

Maximum prompt length
& 2,048 tokens
& Shared across datasets. \\

Maximum completion length
& 512 / 1,024 tokens
& 512 for Countdown-3; 1,024 for Science and MedReason. \\

Loss reduction
& Mean over completion tokens
& Computed over the full vocabulary at every non-padding completion position. \\

Gradient checkpointing
& On-policy: enabled; off-policy: disabled
& This difference is used for memory management. \\

\bottomrule
\end{tabularx}
\caption{Principal hyperparameters used for distillation training.}
\label{tab:student-training-hyperparameters}
\end{table*}

\subsection{Parameter-Update Sparsity}
\label{app:update-sparsity}

Following \citet{mukherjee_reinforcement_2025}, we measure update sparsity as the fraction of model parameters whose absolute change during training falls below a fixed threshold.
Let $\theta^{\mathrm{init}}$ and $\theta^{\mathrm{final}}$ denote the initial and final student parameters, and let $P$ be the total number of parameters.
We compute
\[
    S_{\tau}
    =
    \frac{1}{P}
    \sum_{i=1}^{P}
    \mathbb{I}
    \left[
        \left|
            \theta_i^{\mathrm{final}}
            -
            \theta_i^{\mathrm{init}}
        \right|
        < \tau
    \right].
\]
All students are fully fine-tuned, so this comparison includes the entire model rather than only an adapter or a subset of trainable parameters.
Higher values indicate that a larger fraction of parameters remain close to their initial values; they do not necessarily indicate exactly zero updates.

We use $\tau = 10^{-6}$ for the main results.
Appendix~A.6 reports the corresponding analysis at $\tau = 10^{-8}$, which preserves the main qualitative conclusions.

\subsection{Implementation of the Rollout-Policy Spectrum}
\label{app:rollout_spectrum}

At each generation step, the student and teacher are evaluated on the same prefix. The unconstrained rollout-policy score is
\begin{align}
(\hat z_\lambda)_v
&=
\frac{1}{2}
\left(
\log\pi_S(v)+\log\pi_T(v)
\right)
+
\frac{\lambda}{2}
\left(
\log\pi_S(v)-\log\pi_T(v)
\right)
\\
&=
\frac{1+\lambda}{2}\log\pi_S(v)
+
\frac{1-\lambda}{2}\log\pi_T(v).
\end{align}
The corresponding policy is obtained by normalising these scores:
\[
\pi_\lambda(v)
=
\frac{\exp((\hat z_\lambda)_v)}
{\sum_{u\in\mathcal{V}}\exp((\hat z_\lambda)_u)}.
\]
Thus, \(\lambda=-1\) recovers \(\pi_T\), \(\lambda=1\) recovers \(\pi_S\), and \(\lambda=0\) gives their normalised geometric midpoint.

For \(|\lambda|>1\), one of the coefficients above becomes negative. This can disproportionately amplify tokens that are unlikely under both policies but relatively less unlikely under the favoured policy. We prevent this using one-sided log-ratio clipping and a plausibility mask, similar in spirit to \citet{li2023contrastivedecodingopenendedtext}.

Define the token-level log-likelihood ratio
\[
r_v
=
\log\pi_S(v)-\log\pi_T(v),
\]
and the student and teacher plausibility masks
\begin{align}
M_S(v)
&=
\mathbb{I}\left[
\log\pi_S(v)
\geq
\max_{u\in\mathcal{V}}\log\pi_S(u)+\log\alpha
\right],
\\
M_T(v)
&=
\mathbb{I}\left[
\log\pi_T(v)
\geq
\max_{u\in\mathcal{V}}\log\pi_T(u)+\log\alpha
\right].
\end{align}
A token is therefore eligible for contrastive amplification only if its probability under the favoured policy is at least an \(\alpha\) fraction of that policy's maximum token probability.

We define the clipped student- and teacher-favoured bonuses as
\begin{align}
b_S(v)
&=
M_S(v)\min\!\left\{\max\{r_v,0\},c\right\},
\\
b_T(v)
&=
M_T(v)\min\!\left\{\max\{-r_v,0\},c\right\},
\end{align}
where \(c>0\) is the clipping threshold. The safeguarded sampling scores are then
\[
(\tilde z_\lambda)_v
=
\begin{cases}
(\hat z_\lambda)_v,
& |\lambda|\leq 1,\\[4pt]
\log\pi_S(v)
+
\displaystyle\frac{\lambda-1}{2}b_S(v),
& \lambda>1,\\[8pt]
\log\pi_T(v)
+
\displaystyle\frac{-\lambda-1}{2}b_T(v),
& \lambda<-1.
\end{cases}
\]
The final rollout policy is
\[
\pi_\lambda(v)
=
\operatorname{softmax}(\tilde z_\lambda)_v.
\]

The plausibility masks gate only the additional extrapolation bonus: tokens outside the mask retain their probability under the corresponding anchor policy but receive no amplification. This preserves \(\pi_{-1}=\pi_T\), \(\pi_1=\pi_S\), and continuity at the boundaries of the interpolation region. In our experiments, we use
\[
\alpha=0.2,
\qquad
c=2\log 3 \approx 2.2.
\]
For the range \(\lambda\in[-2,2]\), this limits the maximum pairwise-odds amplification relative to the anchor policy to a factor of three.

\subsection{RLVR after Distillation}
\label{app:downstream_rlvr}

\paragraph{Purpose and initialisation.}
We test whether the differences induced by distillation affect subsequent RLVR on a harder task. We initialise Llama-3.2-1B-Instruct from the step-150 Countdown-3 checkpoints for all combinations of OnPD/OffPD, forward/reverse KL, and distillation learning rates $\{1\times10^{-5},5\times10^{-5}\}$. Each configuration has three checkpoints from distillation seeds 42, 43, and 44, giving 24 RLVR runs. We train each checkpoint on Countdown-4 with the same RLVR configuration. The RLVR seed is fixed at 42, so the three trajectories per configuration reflect different distillation seeds rather than independently varied RLVR seeds.

\paragraph{Training data and reward.}
We use 5,000 procedurally generated Countdown-4 problems, with four operands sampled with replacement from $\{1,\ldots,10,12,15,20,25,50,75\}$. Targets lie between 1 and 10,000 and are constructed by combining all four operands through addition, subtraction, multiplication, or exact integer division. Dataset generation uses seed 42. Prompts follow the Countdown format in \Cref{tab:dataset-prompts}, rendered with the student's chat template, without demonstrations or teacher supervision. The reward is correctness-only: a completion receives $2$ if the Countdown verifier reduces the operand pool to the target through valid arithmetic steps, and $0$ otherwise. We add no separate formatting or length rewards.

\paragraph{Optimisation and generation.}
We use GRPO~\citep{shao_deepseekmath_2024} with the DAPO loss~\citep{yu_dapo_2025}, group-normalised rewards, and no reference-model KL penalty. Unlike the full-parameter distillation stage, RLVR trains fresh LoRA adapters on the frozen, four-bit-loaded checkpoint. We use rank and scaling parameter 256, zero dropout, and no bias adaptation, targeting the query, key, value, output, gate, up, and down projections. Training uses Unsloth with TRL's GRPO trainer and colocated vLLM generation; gradient checkpointing is enabled. Each prompt receives eight sampled completions at temperature $1.0$ and top-$p=0.95$. All runs use 300 optimisation steps and an RLVR learning rate of $2\times10^{-6}$, independently of the learning rate used during distillation. Full settings are summarised in \Cref{tab:downstream_rlvr_protocol}.

\begin{table*}[t]
\centering
\small
\setlength{\tabcolsep}{5pt}
\renewcommand{\arraystretch}{1.12}
\begin{tabularx}{\textwidth}{p{0.25\textwidth}p{0.27\textwidth}X}
\toprule
Setting & Value & Notes \\
\midrule
Optimisation steps & 300 & Starting from each step-150 distilled checkpoint. \\
RLVR learning rate & $2\times10^{-6}$ & Shared across all 24 runs. \\
Learning-rate schedule & Cosine & Warm-up over the first 10\% of steps (30 steps). \\
Optimizer & Eight-bit AdamW & Bfloat16 training. \\
Trainable parameters & LoRA adapters & Rank 256, scaling parameter 256, dropout 0; no bias adaptation. \\
Backbone weights & Frozen, loaded in four-bit precision & Gradient checkpointing enabled. \\
Per-device batch size & 8 & Gradient accumulation over 16 steps. \\
Completions per prompt & 8 & One reward-normalisation group per prompt. \\
Generation batch size & 48 completions & Six groups of eight completions. \\
Sampling & Temperature $1.0$, top-$p=0.95$ & Colocated vLLM generation. \\
Maximum prompt length & 2,048 tokens & Vanilla Countdown prompt with chat template. \\
Maximum completion length & 1,024 tokens & Shared across all configurations. \\
Loss and reward scaling & DAPO; group normalisation & No reference-model KL penalty ($\beta=0$). \\
Reward & $2$ for correctness, $0$ otherwise & No auxiliary reward terms. \\
Maximum gradient norm & $1.0$ & Global gradient clipping. \\
RLVR random seed & 42 & Distillation seeds are 42, 43, and 44. \\
Logging / checkpoint interval & 5 / 50 steps & Reward curves show logged training rewards. \\
\bottomrule
\end{tabularx}
\caption{Training settings for RLVR on Countdown-4 after Countdown-3 distillation. These settings are identical across rollout policies, KL objectives, and learning rates used in the preceding distillation stage.}
\label{tab:downstream_rlvr_protocol}
\end{table*}

\paragraph{Reported metrics.}
The curves in \Cref{fig:countdown4_downstream_rl_1e5,fig:app:countdown4_downstream_rl_5e5} show the logged mean correctness reward on training rollouts after distillation at learning rates $1\times10^{-5}$ and $5\times10^{-5}$, respectively, with each distilled checkpoint plotted separately and no additional smoothing. These are training rewards, not held-out accuracies: because the reward takes values $0$ or $2$, the corresponding fraction of correct rollouts is half the mean reward. These learning rates refer to distillation, not the fixed RLVR learning rate of $2\times10^{-6}$.

\begin{figure}[htbp]
    \centering
    \includegraphics[width=0.5\linewidth]{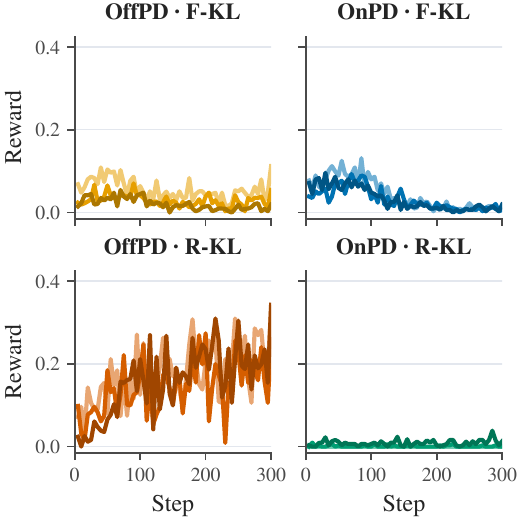}
    \caption{Countdown-4 RLVR rewards after distillation at LR $5\times10^{-5}$; each curve is a distillation seed.}
    \label{fig:app:countdown4_downstream_rl_5e5}
\end{figure}

\clearpage
\subsection{Spanish-Style Transfer Experiment}
\label{app:spanish_style_details}

\paragraph{Purpose and model setup.}
This experiment tests whether rollout source changes the transfer of an incidental teacher behaviour unrelated to task correctness. We distil a frozen Qwen2.5-3B-Instruct teacher into a Qwen2.5-1.5B-Instruct student on ScienceQA, conditioning the teacher to reason in Spanish. Unlike the main experiments, the teacher is not task-fine-tuned; it instead receives a correct demonstration for each training example. Deviations from the standard experimental setup are summarised in Table~\ref{tab:spanish_style_protocol}.

\paragraph{Teacher conditioning.}
For every training example, the teacher receives the Science task prompt followed by the example's correct worked demonstration and the instruction ``Now answer with a response of your own, including the thinking process.'' We additionally append the following style instruction to the teacher's final user message:
\begin{quote}
\small
Only speak in Spanish. This does not apply to the special \texttt{<think></think>} and \texttt{<answer></answer>} tokens or your final answer, but any reasoning or thoughts you produce should be strictly written in Spanish.
\end{quote}
The student never receives the demonstration or Spanish instruction. Teacher distributions are conditioned on both for every loss evaluation; OnPD samples completions from the student under the vanilla prompt, whereas OffPD samples them from the conditioned teacher.

\begin{table*}[t]
\centering
\small
\setlength{\tabcolsep}{5pt}
\renewcommand{\arraystretch}{1.12}
\begin{tabularx}{\textwidth}{p{0.25\textwidth}p{0.27\textwidth}X}
\toprule
Setting & Value & Notes \\
\midrule

Student
& \texttt{Qwen2.5-1.5B-Instruct}
& Fully fine-tuned on Science. \\

Teacher
& \texttt{Qwen2.5-3B-Instruct}
& Frozen base instruct model; no task-specific fine-tuning. \\

Experimental grid
& OnPD/OffPD $\times$ forward/reverse KL
& Learning rates $\{1\times10^{-5},5\times10^{-5}\}$. \\

Random seeds
& 42
& One run per condition. \\

Teacher conditioning
& Correct demonstration and Spanish instruction
& Used for teacher rollouts and teacher distributions. \\

Student conditioning
& Vanilla Science prompt
& Used for student rollouts, likelihoods, and evaluation. \\

Sampling temperature
& 1.0 for OnPD and OffPD
& The conditioned teacher therefore differs from the standard OffPD temperature of 0.5. \\

Distribution temperature
& 1.0 for student and teacher
& The teacher differs from the standard temperature of 0.5. \\

Maximum training completion length
& 2,048 tokens
& The standard Science experiments use 1,024 tokens. \\

Gradient checkpointing
& Enabled for OnPD and OffPD
& The standard OffPD runs disable gradient checkpointing. \\

Style evaluation
& 1,000 responses per condition
& 200 each from Science train and test, MedReason test, GSM8K test, and Countdown-3 test. \\

\bottomrule
\end{tabularx}
\caption{Experiment-specific settings for the Spanish-style transfer experiment. All unlisted training and evaluation settings follow the standard protocol.}
\label{tab:spanish_style_protocol}
\end{table*}

\paragraph{Language annotation.}
We use GPT-5 to independently classify each generated response as primarily Spanish or not, treating mathematical notation, formatting tags, code, and short answer labels as language-neutral. All 1,000 responses per condition received a valid annotation. We use one judge and one training run per condition.

\clearpage
\section{Checkpoint Sparsity Analysis}
\label{app:sparsity}

In \Cref{tab:checkpoint_learning_rates} we summarise the learning rate reported by each of the checkpoints analysed by \citet{mukherjee_reinforcement_2025}.

\begin{table*}[bh]
\centering
\caption{Learning rates used to produce the checkpoints in the main RL comparison and the explicit SFT comparison of \citet{mukherjee_reinforcement_2025}. ``From'' denotes the checkpoint against which parameter-update sparsity is measured. Rates refer to the language model or policy actor, using the peak rate where a schedule is reported.}
\label{tab:checkpoint_learning_rates}
\scriptsize
\setlength{\tabcolsep}{3pt}
\renewcommand{\arraystretch}{1.15}
\begin{tabularx}{\textwidth}{
    @{}
    l
    >{\raggedright\arraybackslash}X
    >{\raggedright\arraybackslash}X
    >{\raggedright\arraybackslash}p{2.4cm}
    l
    @{}
}
\toprule
Stage & From & Studied checkpoint & Training method & Model/actor LR \\
\midrule
\multicolumn{5}{@{}l}{\textit{SFT checkpoints (Appendix C)}} \\
\addlinespace[2pt]
SFT &
\path{meta-llama/Llama-3.1-8B} &
\path{allenai/Llama-3.1-Tulu-3-8B-SFT} &
Full-parameter SFT \citep{lambert2024tulu3} &
$5\times10^{-6}$ \\

SFT &
\path{meta-llama/Llama-3.1-70B} &
\path{allenai/Llama-3.1-Tulu-3-70B-SFT} &
Full-parameter SFT \citep{lambert2024tulu3} &
$2\times10^{-6}$ \\

SFT &
\path{Qwen/Qwen2.5-Math-7B} &
\path{PRIME-RL/Eurus-2-7B-SFT} &
Math-reasoning SFT \citep{cui2025process} &
$2\times10^{-5}$ \\

\addlinespace
\multicolumn{5}{@{}l}{\textit{RL and preference-optimization checkpoints (Table 1)}} \\
\addlinespace[2pt]
DPO &
\path{allenai/Llama-3.1-Tulu-3-8B-SFT} &
\path{allenai/Llama-3.1-Tulu-3-8B-DPO} &
Offline DPO \citep{lambert2024tulu3} &
$5\times10^{-7}$ \\

DPO &
\path{allenai/Llama-3.1-Tulu-3-70B-SFT} &
\path{allenai/Llama-3.1-Tulu-3-70B-DPO} &
Offline DPO \citep{lambert2024tulu3} &
$5\times10^{-7}$ \\

GRPO &
\path{deepseek-ai/deepseek-math-7b-instruct} &
\path{deepseek-ai/deepseek-math-7b-rl} &
On-policy GRPO \citep{shao_deepseekmath_2024} &
$1\times10^{-6}$ \\

RL-Zero &
\path{deepseek-ai/DeepSeek-V3-Base} &
\path{deepseek-ai/DeepSeek-R1-Zero} &
On-policy GRPO \citep{deepseek-ai_deepseek-r1_2025} &
Not disclosed \\

ORPO &
\path{mistralai/Mistral-7B-v0.1} &
\path{kaist-ai/mistral-orpo-beta} &
Offline, reference-free ORPO \citep{hong2024orpo} &
$8\times10^{-6}$ \\

KTO &
\path{openbmb/Eurus-7b-sft} &
\path{openbmb/Eurus-7b-kto} &
Offline KTO \citep{yuan2024advancing} &
$5\times10^{-7}$ \\

KTO &
\path{princeton-nlp/Llama-3-Base-8B-SFT} &
\path{princeton-nlp/Llama-3-Base-8B-SFT-KTO} &
Offline KTO \citep{meng2024simposimplepreferenceoptimization} &
$5\times10^{-7}$ \\

PPO &
\path{peiyi9979/mistral-7b-sft} &
\path{peiyi9979/math-shepherd-mistral-7b-rl} &
On-policy PPO \citep{wang2024mathshepherdverifyreinforcellms} &
$1\times10^{-6}$ \\

SimPO &
\path{meta-llama/Meta-Llama-3-8B-Instruct} &
\path{princeton-nlp/Llama-3-Instruct-8B-SimPO} &
Offline, reference-free SimPO \citep{meng2024simposimplepreferenceoptimization} &
$1\times10^{-6}$ \\

PRIME &
\path{PRIME-RL/Eurus-2-7B-SFT} &
\path{PRIME-RL/Eurus-2-7B-PRIME} &
On-policy PRIME \citep{cui2025process} &
$5\times10^{-7}$ \\
\bottomrule
\end{tabularx}

\vspace{2pt}
\begin{minipage}{\textwidth}
\footnotesize
\textit{Notes.}
The SFT mean and median are $9\times10^{-6}$ and $5\times10^{-6}$, respectively.
The RL/preference mean and median are $1.5\times10^{-6}$ and
$5\times10^{-7}$, calculated over the nine checkpoints with disclosed actor
rates and excluding \path{DeepSeek-R1-Zero}. For PRIME, the separately trained
implicit reward model used a learning rate of $1\times10^{-6}$; this value is
excluded because its parameters are not part of the evaluated policy
checkpoint.
\end{minipage}
\end{table*}
\end{document}